\documentclass{article}

% if you need to pass options to natbib, use, e.g.:
%     \PassOptionsToPackage{numbers, compress}{natbib}
% before loading neurips_2023

% to compile a preprint version, e.g., for submission to arXiv, add add the
% [preprint] option:
%     \usepackage[preprint]{neurips_2023}

% to compile a camera-ready version, add the [final] option, e.g.:
    \usepackage[final]{neurips_2023}

% to avoid loading the natbib package, add option nonatbib:
%    \usepackage[nonatbib]{neurips_2023}

\usepackage[utf8]{inputenc} % allow utf-8 input
\usepackage[T1]{fontenc}    % use 8-bit T1 fonts
\usepackage{hyperref}       % hyperlinks
\usepackage{url}            % simple URL typesetting
\usepackage{booktabs}       % professional-quality tables
\usepackage{amsfonts}       % blackboard math symbols
\usepackage{nicefrac}       % compact symbols for 1/2, etc.
\usepackage{microtype}      % microtypography
\usepackage{xcolor}         % colors

%% packages
\usepackage{mymath,enumitem}
\bibliographystyle{plainnat}
\allowdisplaybreaks

\usepackage{neurips_2023}
\ifdefined\usebigfont

 % just to make sure
\usepackage{times}
\usepackage[fontsize=13pt]{scrextend}
\makeatletter
\@ifpackageloaded{geometry}{\AtBeginDocument{\newgeometry{letterpaper,left=1.56in,right=1.56in,top=1.71in,bottom=1.77in}}}{\usepackage[letterpaper,left=1.56in,right=1.56in,top=1.71in,bottom=1.77in]{geometry}}
\makeatother
%\usepackage[letterpaper,left=1.56in,right=1.56in,top=1.71in,bottom=1.77in]{geometry}
%\usepackage[letterpaper,left=1.56in,right=1.56in,top=.1in,bottom=.1in]{geometry}
%\AtBeginDocument{\newgeometry{letterpaper,left=1.56in,right=1.56in,top=1.71in,bottom=1.77in}}
%\AtBeginDocument{\newgeometry{letterpaper,left=1.56in,right=1.56in,top=.1in,bottom=.1in}}
%\pagenumbering{gobble}
%\AtBeginDocument{\eject \pdfpagewidth=8.5in \pdfpageheight=11in}
%\usepackage[shortlabels]{enumitem}
%\setlist[itemize]{leftmargin=*}
%\setlist[enumerate]{leftmargin=*}
\else
\fi
% ready for submission

\title{Implicit Bias of Gradient Descent for Logistic Regression at the Edge of Stability}

% The \author macro works with any number of authors. There are two commands
% used to separate the names and addresses of multiple authors: \And and \AND.
%
% Using \And between authors leaves it to LaTeX to determine where to break the
% lines. Using \AND forces a line break at that point. So, if LaTeX puts 3 of 4
% authors names on the first line, and the last on the second line, try using
% \AND instead of \And before the third author name.

\author{%
  Jingfeng Wu \\
  % Department of Computer Science\\
  Johns Hopkins University\\
  Baltimore, MD 21218 \\
  \texttt{uuujf@jhu.edu} \\
  % examples of more authors
  \And
  Vladimir Braverman \\
  % Department of Computer Science \\
  Rice University \\
  Houston, TX 77005 \\
  \texttt{vb21@rice.edu} \\
  \And
  Jason D.\ Lee \\
  Princeton University \\
  Princeton, NJ 08544 \\
  \texttt{jasonlee@princeton.edu} \\
  % \And
  % Coauthor \\
  % Affiliation \\
  % Address \\
  % \texttt{email} \\
  % \And
  % Coauthor \\
  % Affiliation \\
  % Address \\
  % \texttt{email} \\
}

\begin{document}

\maketitle

% Recent research has observed that in machine learning optimization, gradient descent (GD) often operates at the \emph{edge-of-stability} (EoS) \citep{cohen2021gradient}, where the stepsizes are set to be large, resulting in non-monotonic losses induced by the GD iterates. This paper studies the convergence and implicit bias of constant-stepsize GD for logistic regression on linearly separable data in the EoS regime. Despite the presence of local oscillations, we prove that the logistic loss can be minimized by GD with \emph{any} constant stepsize over a long time scale. Additionally, we prove that with \emph{any} constant stepsize, the GD iterates tend to infinity when projected to a max-margin direction (the hard-margin SVM direction) and converge to a fixed vector that minimizes a strongly convex potential when projected to the orthogonal complement of the max-margin direction. These theoretical results are consistent with numerical simulations and complement existing theories on the convergence and implicit bias of GD, which are only applicable when the stepsizes are sufficiently small.

\begin{abstract}
Recent research has observed that in machine learning optimization, gradient descent (GD) often operates at the \emph{edge of stability} (EoS) \citep{cohen2021gradient}, where the stepsizes are set to be large, resulting in non-monotonic losses induced by the GD iterates. This paper studies the convergence and implicit bias of constant-stepsize GD for logistic regression on linearly separable data in the EoS regime. Despite the presence of local oscillations, we prove that the logistic loss can be minimized by GD with \emph{any} constant stepsize over a long time scale. Furthermore, we prove that with \emph{any} constant stepsize, the GD iterates tend to infinity when projected to a max-margin direction (the hard-margin SVM direction) and converge to a fixed vector that minimizes a strongly convex potential when projected to the orthogonal complement of the max-margin direction. In contrast, we also show that in the EoS regime, GD iterates may diverge catastrophically under the exponential loss, highlighting the superiority of the logistic loss. These theoretical findings are in line with numerical simulations and complement existing theories on the convergence and implicit bias of GD for logistic regression, which are only applicable when the stepsizes are sufficiently small.
\end{abstract}

\section{Introduction}
\emph{Gradient descent} (GD) is a foundational algorithm for machine learning optimization that motivates many popular algorithms.
Theoretically, the behavior of GD is well understood when the stepsize is small. 
In this regard, one of the most classic results is the \emph{descent lemma} (see, e.g., Section 1.2.3 in \citet{nesterov2018lectures}):
\begin{lemma*}[Descent lemma, simplified version]
Suppose that $\sup_{\wB} \big\| \hess L(\wB) \big\|_2 \le \beta$\footnote{The uniformly bounded Hessian norm condition is stated for simplicity and can be relaxed in many ways. For example, it can be replaced by requiring $L(\cdot)$ to be $\beta$-smooth. For another example, it can also be replaced with $\sup_{0\le \lambda \le 1}\big\| \hess L(\lambda\cdot \wB + (1-\lambda)\cdot \wB_{+}) \big\|_2 \le \beta$.}, then 
    \[L(\wB_{+}) \le L(\wB) - \eta \cdot (1- \eta \beta / 2) \cdot \| \grad L(\wB) \|_2^2,\quad \text{where}\ \ \wB_+ := \wB - \eta \cdot \grad L(\wB).   \]
\end{lemma*}
When the targeted function is smooth (such as logistic regression)
and the stepsize is \emph{small} ($0< \eta < \beta / 2$), the descent lemma ensures a monotonic decrease of the function value by performing each GD step.
Building upon this, a sequence of iterates produced by GD with small stepsizes provably minimizes the function value in various settings (see, e.g., \citet{lan2020first}). 

For a more modern example, a recent line of research has established the \emph{implicit bias} of GD with small stepsizes (see \citet{soudry2018implicit,ji2018risk} and references thereafter).
Specifically, they consider GD for optimizing logistic regression (besides other loss functions) on linearly separable data. 
When the stepsizes are sufficiently small, the GD iterates are shown to decrease the risk monotonically (by a variant of the descent lemma); moreover, the GD iterates tend to align with a direction that maximizes the $\ell_2$-margin of the data \citep{soudry2018implicit,ji2018risk}.
The margin-maximization bias of small-stepsize GD sheds important light on understanding the statistical benefits of GD, as a large margin solution often generalizes well \citep{bartlett2017spectrally,neyshabur2018pac}.

Nonetheless, in practical machine learning optimization, especially in deep learning, the empirical risk (or training loss) often varies \emph{non-monotonically} (while being minimized in the long run) --- the local risk oscillation is not only caused by the algorithmic randomness but is more an effect of using \emph{large stepsizes}, as it happens for deterministic GD (with large stepsizes) as well \citep{wu2018sgd,xing2018walk,lewkowycz2020large,cohen2021gradient}. 
This phenomenon is showcased in Figures \ref{fig:NN}(a) and \ref{fig:logistic-regression}(a), and is referred to by \citet{cohen2021gradient} as the \emph{edge of stability} (EoS). 
The observation sets a non-negligible gap between practical and theoretical GD setups, 
where in practice, GD is run with large stepsizes that lead to local risk oscillations, but in theory, GD is only considered with sufficiently small stepsizes, predicting a monotonic risk descent (with a few exceptions, which will be discussed later in Section \ref{sec:related}). 
A tension remains to be resolved:
\begin{center}
\textit{Is the convergence of risk under local oscillation merely a ``lucky'' occurrence,\\ or is it predictable based on theory?}
\end{center}

\begin{figure}[t]
\centering
\begin{tabular}{ccc}
\includegraphics[width=0.31\linewidth]{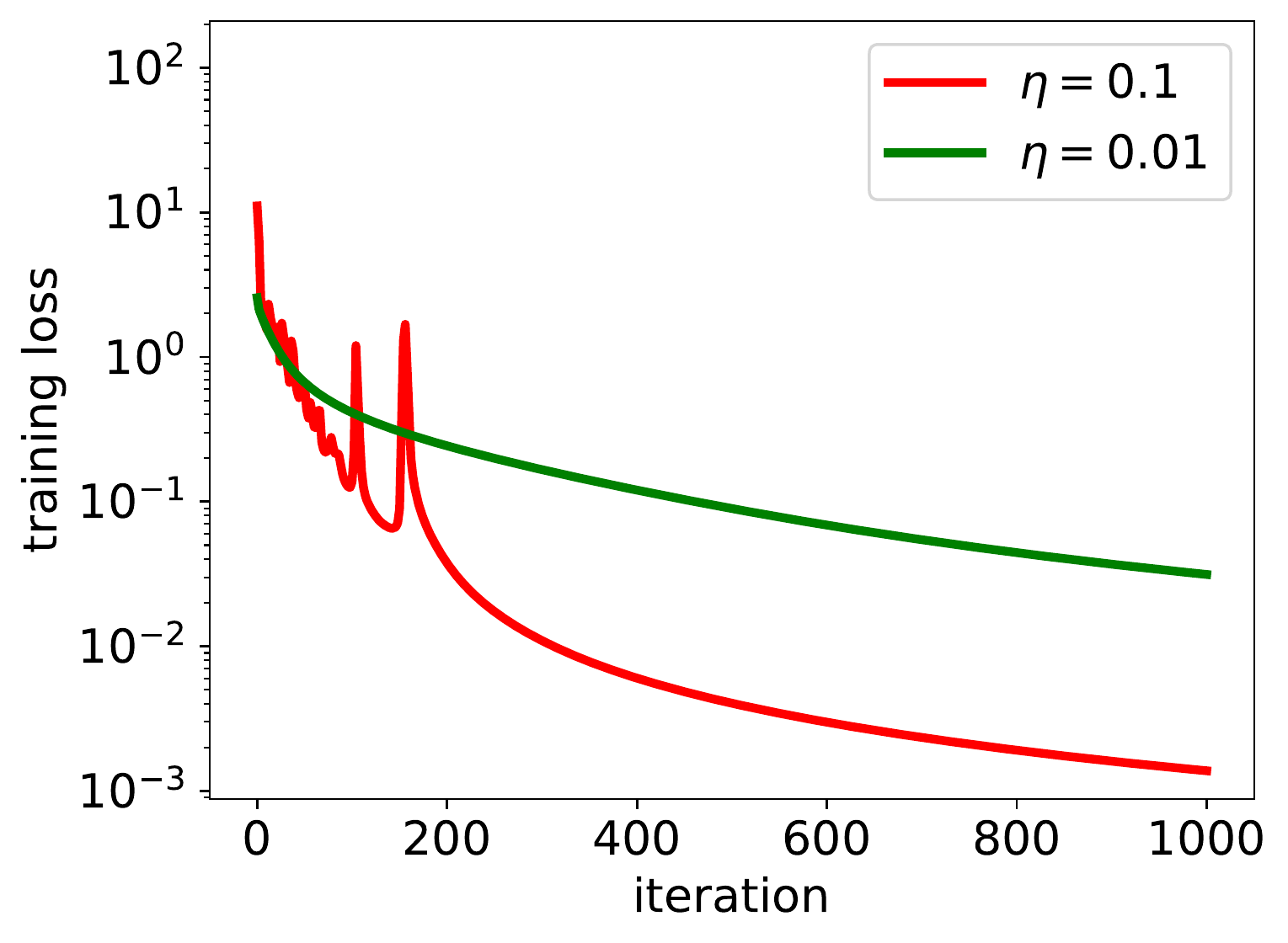} &
\includegraphics[width=0.3\linewidth]{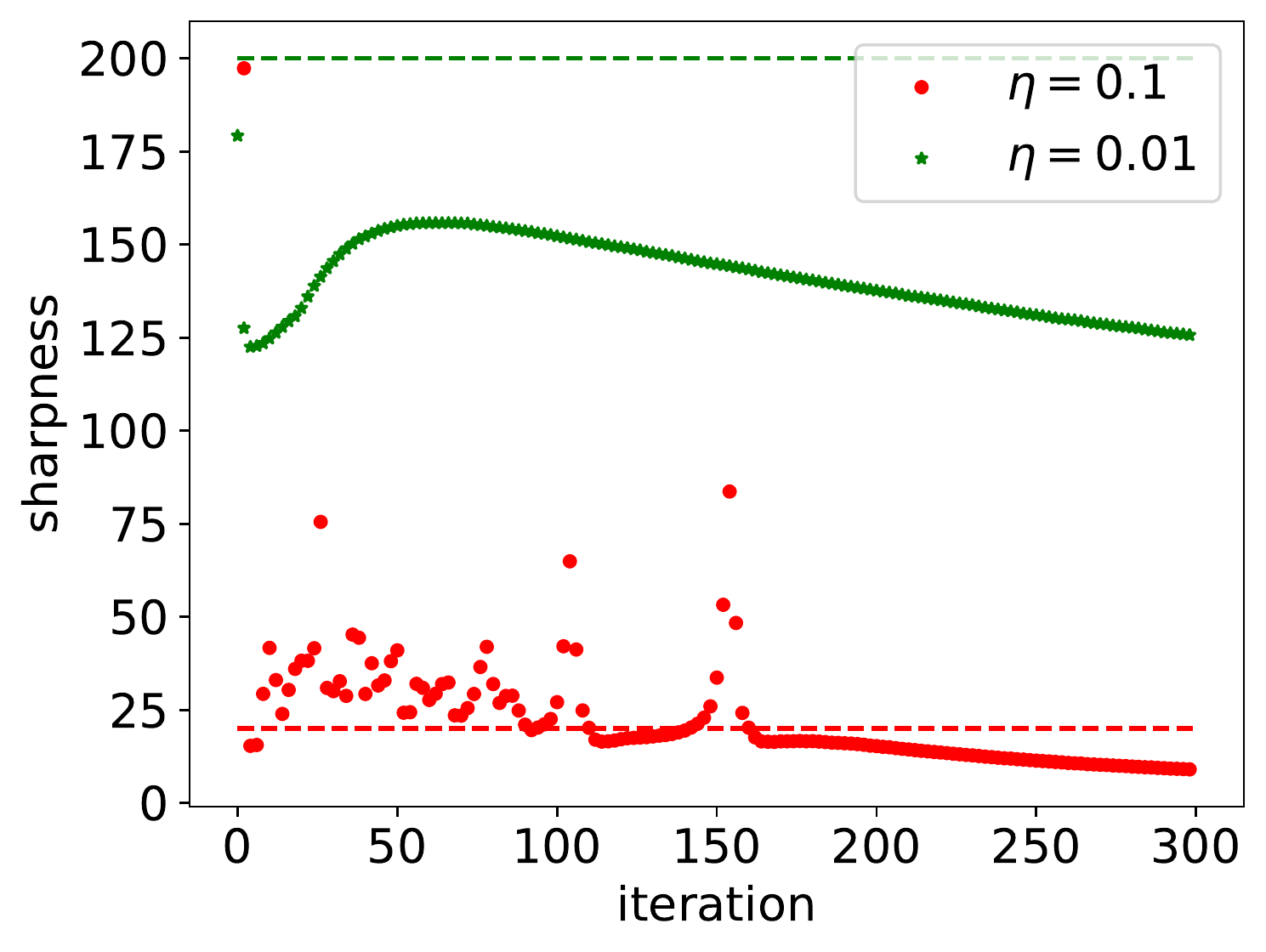} &
\includegraphics[width=0.3\linewidth]{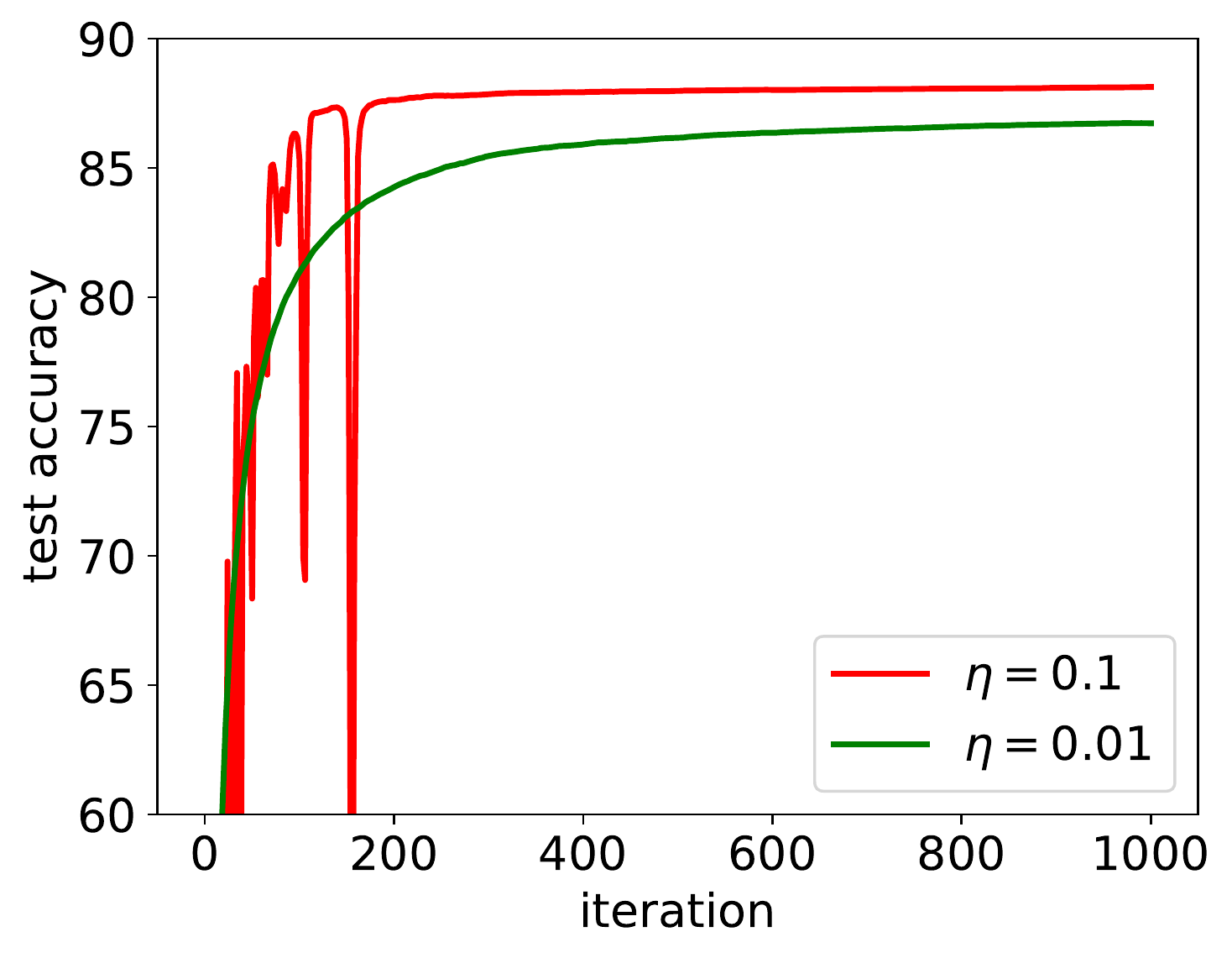} \\
{\small (a) Training loss } & {\small (b) Sharpness} & {\small (c) Test accuracy} 
\end{tabular}

\caption{\small
The behaviors of GD for optimizing a neural network.
We randomly sample $1,000$ data from the MNIST dataset and then use GD to train a $4$-layer fully connected network to fit those data. 
We use the cross-entropy loss, i.e., the multi-class version of the logistic loss.
The sub-figures (a), (b), and (c) report the training loss, test accuracy, and sharpness (i.e., $\|\grad L(\wB_t)\|_2$) along the GD trajectories, respectively. 
The {red} curves correspond to GD with a large stepsize $\eta=0.1$, where the training losses oscillate locally and the sharpnesses can exceed $2/\eta = 20$. 
The {green} curves correspond to GD with a small stepsize $\eta=0.01$, where the training losses decrease monotonically and the sharpnesses are always below $2/\eta = 200$.
Moreover, (c) suggests that large-stepsize GD achieves better test accuracy than small-stepsize GD, consistent with larger-scale deep learning experiments \citep{goyal2017accurate}.
More details of the experiments can be found in Appendix \ref{append:sec:experiment}. 
}

\label{fig:NN}
\end{figure}

\paragraph{Contributions.}
In this work, we study the behaviors of GD in the EoS regime in  arguably the simplest setting for machine learning optimization --- logistic regression on linearly separable data. 
We show that with \emph{any} constant stepsize,
while the induced risks may oscillate locally,  
GD must minimize the risk in the long run at a rate of $\Ocal(1/t)$, where $t$ is the number of iterates.
% In addition, 
% we also characterize the implicit bias of GD with \emph{any} constant stepsize.
% Specifically, when projected to a max-margin direction (the hard-margin SVM direction), the GD iterates tend to infinity at a rate of $\Ocal(\log(t))$;
% and when 
% projected to the orthogonal complement of the max-margin direction, the GD iterates converge to a fixed vector that minimizes a strongly convex potential at a rate of $\Ocal(1/\log(t))$. 
In addition, we show that the direction of the GD iterates (with any constant stepsize) must align with a max-margin direction (the hard-margin SVM direction) at a rate of $\Ocal(1/\log(t))$.
These results explain how GD minimizes a risk non-monotonically,
and complement existing theories \citep{soudry2018implicit,ji2018risk} on the convergence and implicit bias of GD, which are only applicable when the stepsizes are sufficiently small.

Some additional notable contributions are
\begin{enumerate}[leftmargin=5mm]
\item 
% When applied to the small-stepsize regime, our results on the convergence and implicit bias of GD are consistent with existing results \citep{soudry2018implicit,ji2018risk}.
% It is worth noting that our characterization of the GD iterates projected to the orthogonal complement of the max-margin direction, which minimizes a strongly convex potential, is consistent with a condition provided in \citet{soudry2018implicit}, yet is conceptually more interpretable.
We also show that, when projected to the orthogonal complement of the max-margin direction, the GD iterates (with any constant stepsize) converge to a fixed vector that minimizes a strongly convex potential at a rate of $\Ocal(1/\log(t))$.
This characterization is conceptually more interpretable than an existing version \citep{soudry2018implicit}.

\item
We show that in the EoS regime, GD can diverge catastrophically if the logistic loss is replaced by the exponential loss.
This is in stark contrast to the small-stepsize regime, where the behaviors of GD 
are known to be similar under any exponentially-tailed losses including both the logistic and exponential losses \citep{soudry2018implicit,ji2018risk}.
The difference in the EoS regime provides insights into why the logistic loss is preferable to the exponential loss in practice.

\item From a technical perspective, we develop a new approach for analyzing GD with large stepsizes.
Our approach views the GD iterates as a coupling of two orthogonal iterates, one along a max-margin direction and the other along the orthogonal complement of the max-margin direction. 
The former iterates tend to infinity and the latter iterates approximate ``imaginary'' GD iterates that minimize a strongly convex potential with a \emph{decaying} stepsize scheduler, controlled by the former iterates. 
% upto higher order errors, the latter iterates can be understood as an ``imaginary'' GD iterates that minimize a strongly convex potential with a \emph{decaying} stepsize scheduler, controlled by the former iterates. 
% More discussions are left in Section \ref{sec:proof-sketch}.
Our techniques for analyzing large-stepsize GD can be of independent interest. 

\end{enumerate}

\section{Related Works}\label{sec:related}
In this section, we discuss papers related to our work.

% \vspace{-2mm}

\textbf{Implicit bias.}~
We first review a set of papers on the implicit bias of GD (with small stepsizes).

Along this line, \citet{soudry2018implicit} are the very first to show that GD converges along a max-margin direction when minimizing the empirical risk of an exponentially-tailed loss function (such as the logistic and exponential losses), a linear model, and linearly separable data. 
Then, an alternative analysis is provided by \citet{ji2018risk}, which also deals with non-separable data.
These two works directly motivate us for considering GD for logistic regression on linearly separable data. 
However, there are at least three notable differences between our work and theirs. 
Firstly, their results only apply to GD with small stepsizes, while our results apply to GD with \emph{any} constant stepsize. 
Secondly, their theories predict no difference between the logistic and exponential losses (as they are limited to the small-stepsize regime). 
Quite surprisingly, we prove that in the EoS regime, 
GD can diverge catastrophically under the exponential loss. 
Thirdly, from a technical viewpoint, their implicit bias analysis is built upon the risk convergence analysis, which further relies on a monotonic risk descent argument, hence only applies to small stepsizes.
In comparison, we come up with a new approach that allows analyzing the implicit bias under risk oscillations; the long-term risk convergence is simply a consequence of the implicit bias results. Hence our techniques can accommodate any constant stepsize.
See Section \ref{sec:proof-sketch} for more discussions.
% Secondly, from a technical viewpoint, both their analysis relies on a monotonic decrease in risk and is hence incompatible with the EoS regime. 
% In comparison, we come up with a new approach that allows analyzing GD under risk oscillations (see Section \ref{sec:proof-sketch} for more discussions).

Subsequent works have extended the results by \citet{soudry2018implicit,ji2018risk} to other algorithms such as momentum-based GD \citep{gunasekar2018characterizing,ji2021fast} and SGD \citep{nacson2019stochastic}, and homogenous but non-linear models \citep{gunasekar2017implicit,ji2018gradient,gunasekar2018implicit,nacson2019lexicographic,lyu2020gradient} and non-homogenous models \citep{nacson2019lexicographic}.
All these theories require the stepsizes to be small or even infinitesimal, in a regime away from our focus, the EoS regime. 

It is worth noting that \citet{nacson2019convergence} consider GD with an increasing stepsize scheduler that achieves a faster margin-maximization rate than constant-stepsize GD. 
However, their stepsize at each iteration is still appropriately small, resulting in a monotonic risk descent by a variant of the descent lemma.  
% However, the increase of the stepsize is still small compared to the decrease of the \emph{local smoothness} of the risk, so their risk still decreases monotonically by a variant of the descent lemma. 

\textbf{Edge of stability.}~
The risk oscillation phenomenon has been observed in several deep learning papers \citep{wu2018sgd,xing2018walk,lewkowycz2020large}, and the work by \citet{cohen2021gradient} coins the term, \emph{edge of stability} (EoS), that formally refers to it. 
In the remainder of this part, we focus on reviewing the current theoretical progress in understanding EoS. 

\citet{zhu2023understanding} rigorously characterize EoS for a two-dimensional function \((u,v)\mapsto (u^2 v^2-1)^2.\)
\citet{chen2022gradient} study EoS for a one-dimensional function \(u\mapsto (u^2-1)^2\) and for a special two-layer single-neuron network.
Similar to these two works, \citet{kreisler2023gradient} study EoS in a 1-dimensional linear network.
\citet{ahn2022learning} consider functions \((u,v)\mapsto \ell(uv),\) where $\ell$ is assumed to be convex, even, and Lipschitz; notably, they show a statistical gap between the small-stepsize regime and the EoS regime. 
Finally, \citet{even2023s,andriushchenko2023sgd} consider the regularization effect of large stepsizes in a diagonal linear network.
Compared to their settings, our problem, i.e., logistic regression, is a natural machine-learning problem with fewer artifacts (if any).  

EoS has also been theoretically investigated for general functions \citep{ma2022beyond,ahn2022understanding,damian2022self,wang2022analyzing}, but these theories are often subject to subtle assumptions that are hard to interpret or verify.
Specifically, \citet{ma2022beyond} require the function to grow ``subquadratically''. 
\citet{ahn2022understanding} assume the existence of a ``forward invariant subset'' near the set of minima of the function. 
\citet{damian2022self} assume a negative correlation between the gradient direction and the largest eigenvalue direction of the Hessian. 
\citet{wang2022analyzing} consider a two-layer neural network but require the norm of the last layer parameter and the sharpness to change in the same direction along the GD trajectory. 
Indirectly connected to EoS, the work by \citet{kong2020stochasticity} shows a chaotic behavior of GD with a non-small stepsize when optimizing a ``multi-scale'' loss function.
In comparison, our assumptions are more natural and interpretable.

Besides, the work by \citet{lyu2022understanding} considers EoS induced by GD for scale-invariant loss, e.g., a network with normalization layers and weight decay,
and the work by \citet{wang2022large} shows a balancing effect in matrix factorization induced by GD with a constant stepsize that is nearly $4 / \|\hess L(\wB_0)\|_2$ and is larger than $2 / \|\hess L(\wB_0)\|_2$.
The objectives in their works are different from ours, i.e., logistic regression.

The unstable convergence has also been studied for normalized GD \citep{arora2022understanding} and regularized GD \citep{bartlett2022dynamics}.
These algorithms are apart from our focus on the vanilla GD. 

Finally, the work by \citet{liu2023implicit} considers logistic regression with non-separable data (such that the objective is strongly convex), where GD with sufficiently large stepsize diverges. In contrast, we consider logistic regression with separable data, where GD with an arbitrarily large stepsize still converges.

\section{Preliminaries}

We use $\xB\in\Rbb^d$ to denote a feature vector and $y\in\{\pm 1\}$ to denote a binary label, respectively.
Let $(\xB_i,y_i)_{i=1}^n$ be a set of training data. 
Throughout the paper, we assume that $(\xB_i,y_i)_{i=1}^n$ is \emph{linearly separable} \citep{soudry2018implicit}. 
\begin{assumption}[Linear separability]\label{assump:separable}
Assume there is $\wB\in\Rbb^d$ such that $y_i \xB_i^\top \wB > 0$ for $i=1,\dots,n$.
\end{assumption}
% \paragraph{Logistic regression.}
Let $\wB\in\Rbb^d$ be the parameter of a linear model. 
In \emph{logistic regression}, we aim to minimize the following empirical risk
\begin{equation*}
    L(\wB) :=  \sum_{i=1}^n \log\big(1+\exp(-y_i\cdot \la \xB_i, \wB\ra ) \big),\quad  \wB\in\Rbb^d.
\end{equation*}
% \paragraph{Gradient descent.}
We study a sequence of iterates $(\wB_t)_{t\ge 0}$ produced by constant-stepsize \emph{gradient descent} (GD), where $\wB_0$ denotes the initialization and the remaining iterates are sequentially generated by:
\begin{align}
    \wB_{t} &= \wB_{t-1} - \eta\cdot \grad L(\wB_{t-1}),\quad   t\ge 1, \label{eq:gd} \tag{$\mathrm{GD}$}
\end{align}
where $\eta>0$ is a constant stepsize.
We are especially interested in a regime where $\eta$ is very large such that $L(\wB_t)$ oscillates as a function of $t$.
For the simplicity of presentation, we will assume that $\wB_0 = 0$. Our results can be easily extended to allow general initialization.

The following notations are useful for presenting our results.
\begin{definition}[Margins and support vectors]\label{def:margin}
Under Assumption \ref{assump:separable}, define the following notations:
\begin{enumerate}[label=(\Alph*),leftmargin=*,nosep]
    \item 
Let $\gamma$ be the max-$\ell_2$-margin (or max-margin in short), i.e.,
\[
\gamma := \max_{ \| \wB\|_2 = 1 } \min_{i\in [n]} \ y_i \cdot \la \xB_i, \wB\ra.
\]
% It is clear that $\gamma>0$.
\item 
Let $\hat\wB$ be the hard-margin support-vector-machine (SVM) soluion, i.e.,
\begin{align*}
    \hat\wB := \arg\min_{\wB\in\Rbb^d} \| \wB\|_2 ,\ \ \text{s.t.} \ \  y_i \cdot \la \xB_i, \wB\ra \ge 1,\ i=1,\dots,n.
\end{align*}
It is clear that $\hat\wB$ exists and is uniquely defined (see, e.g., Section 5.2 in \citet{mohri2018foundations}).
Note that $\| \hat\wB \|_2 = \gamma$ and $\hat\wB / \|\hat\wB\|_2$ is a max-margin direction. 
Also note that by duality, $\hat\wB$ can be written as (see, e.g., Section 5.2 in \citet{mohri2018foundations})
\[\hat\wB = \sum_{i\in\Scal} \alpha_i\cdot y_i \xB_i,\quad  \alpha_i \ge 0.\]
\item 
Let $\Scal$ be the set of indexes of the support vectors, i.e., 
\[\Scal := \{i\in [n]:   y_i \cdot \la \xB_i, \hat\wB / \| \hat\wB\|_2\ra  = \gamma\}.\]
\item 
If there exists non-support vector ($\Scal \subsetneq [n]$),
let $\theta$ be the second smallest margin, i.e., 
\[\theta := \min_{i\notin\Scal}\ y_i \cdot \la \xB_i, \hat\wB / \|\hat\wB\|_2\ra.\]
\end{enumerate}
It is clear from the definitions that $\theta>\gamma>0$.
In addition, from the definitions we have 
\[
\sum_{i\in\Scal}\alpha_i = \sum_{i\in\Scal}\alpha_i \cdot y_i \xB_i^\top \hat\wB = \|\hat\wB\|^2_2 = \frac{1}{\gamma^2}.
\]
\end{definition}

% \paragraph{Additional assumptions.}
In addition to Assumption \ref{assump:separable}, we make the following two mild assumptions to facilitate our analysis. 
\begin{assumption}[Regularity conditions]\label{assump:bounded+fullrank}
Assume that: 
\begin{enumerate}[label=(\Alph*),leftmargin=*,nosep]
    \item \label{assump:item:bounded}
\( \| \xB_i \|_2\le 1, \ i=1,\dots,n.\)
\item \label{assump:item:fullrank}
\(\rank\{ \xB_i,\ i=1,\dots,n\} = d.\)
\end{enumerate}
\end{assumption}
Assumption \ref{assump:bounded+fullrank} is only made for the convenience of presentation.
In particular, Assumption \ref{assump:bounded+fullrank}\ref{assump:item:bounded} can be made true for any dataset by scaling the data vectors with a factor of $\max_{i}\|\xB_i\|_2$.
Without Assumption \ref{assump:bounded+fullrank}\ref{assump:item:fullrank}, our theorems still hold under a minor revision by replacing all the vectors of interests with their projections to $\text{span}\{ \xB_i, i=1,\dots,n\}$.

\begin{assumption}[Non-degenerate data]\label{assump:non-degenerate}
In addition to Assumption \ref{assump:separable}, 
assume that 
\begin{enumerate}[label=(\Alph*),leftmargin=*,nosep]
    \item\label{assump:item:support-full-rank}  \( \rank\{ \xB_i,\ i\in \Scal\} = \rank\{\xB_i,\ i=1,\dots,n\} .\)
\item\label{assump:item:support-positive} There exist $\alpha_i>0, i\in\Scal$ such that $\hat\wB = \sum_{i\in\Scal} \alpha_i\cdot y_i \xB_i$.
\end{enumerate}
% Note that \ref{assump:item:support-positive} implies that $\sum_{i\in\Scal}\alpha_i = 1$ since 
% $1/\gamma = \| \hat\wB \|_2 = \sum_{i\in\Scal} \alpha_i\cdot y_i \xB_i^\top \hat\wB / \| \hat\wB \|_2 = \sum_{i\in\Scal} \alpha_i \cdot \gamma $.
\end{assumption}
Assumption \ref{assump:non-degenerate} has been used in \citet{soudry2018implicit} (see their Theorem 4), which requires that the support vectors span the dataset and are associated with strictly positive dual variables. 
Assumption \ref{assump:non-degenerate}\ref{assump:item:support-positive} holds \emph{almost surely} for every linearly separable dataset sampled from a continuous distribution according to Appendix B in \citet{soudry2018implicit}. 
Assumption \ref{assump:non-degenerate} provides convenience to our analysis, but we conjecture it might not be necessary. Removing/relaxing Assumption \ref{assump:non-degenerate} is left as a future work.

\subsection{Space Decomposition}
Conceptually, our analysis is built on a novel space decomposition viewpoint, which relies on the following lemma.  

\begin{lemma}[Non-separable subspace]\label{lemma:non-separable-subspace}
Suppose that Assumptions \ref{assump:separable}, \ref{assump:bounded+fullrank}, and \ref{assump:non-degenerate} hold.
Then $(\xB_i, y_i)_{i\in\Scal}$ is not linearly separable in the subspace orthogonal to the max-margin direction $\hat\wB / \| \hat \wB\|_2$.
That is, 
\[
\text{for every}\ \vB \  \text{such that}\ \la \vB, \hat\wB \ra = 0,\  \text{there exist} \ i, j \in\Scal \ \text{such that}\
y_i\cdot \la \xB_i, \vB\ra  < 0,\ 
y_j\cdot \la \xB_j, \vB\ra > 0.
\]
\end{lemma}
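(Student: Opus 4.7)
The plan is to combine the two halves of Assumption~\ref{assump:non-degenerate} directly. By Assumption~\ref{assump:non-degenerate}\ref{assump:item:support-positive}, the dual representation reads $\hat\wB = \sum_{i \in \Scal}\alpha_i y_i \xB_i$ with \emph{strictly positive} coefficients $\alpha_i$; by Assumption~\ref{assump:non-degenerate}\ref{assump:item:support-full-rank} combined with Assumption~\ref{assump:bounded+fullrank}\ref{assump:item:fullrank}, the support vectors themselves span $\Rbb^d$. These two facts are exactly what is needed.

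Given any nonzero $\vB$ with $\la \vB, \hat\wB \ra = 0$, I would first take the inner product of the dual representation with $\vB$ to obtain
\[
\sum_{i \in \Scal} \alpha_i \cdot y_i \la \xB_i, \vB \ra \;=\; \la \hat\wB, \vB \ra \;=\; 0.
\]
This exhibits a strictly positive linear combination of the scalars $y_i \la \xB_i, \vB \ra$ summing to zero. Next I would rule out the degenerate case that every such scalar vanishes: if $y_i \la \xB_i, \vB\ra = 0$ for all $i \in \Scal$, then $\vB$ is orthogonal to $\mathrm{span}\{\xB_i : i \in \Scal\} = \Rbb^d$, forcing $\vB = 0$ and contradicting the choice of $\vB$.

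With the weighted sum equal to zero, the weights strictly positive, and at least one of the scalars $y_i \la \xB_i, \vB\ra$ nonzero, at least one term must be strictly negative and at least one strictly positive. These produce the desired indices $i, j \in \Scal$ with $y_i \la \xB_i, \vB\ra < 0$ and $y_j \la \xB_j, \vB\ra > 0$.

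There is no genuine technical obstacle here; the lemma is essentially a one-paragraph consequence of Assumption~\ref{assump:non-degenerate}. The only point worth emphasizing is why that assumption is stated in exactly this form: dual strict positivity prevents the balancing identity from being hidden by a zero coefficient on a term of ``wrong'' sign, while the spanning condition prevents the identity from being vacuous for a nonzero $\vB$. Weakening either half would invalidate this argument and, in fact, would allow the support vectors to be linearly separable within $\hat\wB^{\perp}$.
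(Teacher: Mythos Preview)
Your argument is correct and essentially identical to the paper's proof: both take the inner product of the dual representation $\hat\wB=\sum_{i\in\Scal}\alpha_i y_i\xB_i$ with $\vB$ to obtain a positive-coefficient combination summing to zero, use the full-rank condition on $\{\xB_i:i\in\Scal\}$ to rule out all terms vanishing, and then conclude that both signs must occur. Your explicit restriction to nonzero $\vB$ is a welcome clarification that the paper leaves implicit.
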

\begin{proof}[Proof of Lemma \ref{lemma:non-separable-subspace}]
By Assumption \ref{assump:non-degenerate}
and $\la \vB, \hat\wB \ra = 0$, we have 
\[
    0 = \la \vB, \hat\wB \ra 
    = \sum_{i\in\Scal} \alpha_i \cdot y_i\xB_i^\top \vB. 
\]
By Assumptions \ref{assump:bounded+fullrank} and \ref{assump:non-degenerate} we have 
\[
    \rank\{y_i \xB_i,\ i \in \Scal\}=\rank\{ \xB_i,\ i \in \Scal\}=\rank\{\xB_i,\ i =1,\dots,n\} = d,
\]
so there must exist $i\in\Scal$ such that 
\(y_i \xB_i^\top \vB \ne 0\).
Without loss of generality, assume that 
\(y_i \xB_i^\top \vB < 0.\)
Then since $\alpha_i>0$
for $i\in \Scal$ by Assumption \ref{assump:non-degenerate}, there must exist $j\in\Scal$ such that 
\(y_j \xB_j^\top \vB > 0\).
\end{proof}
Lemma \ref{lemma:non-separable-subspace} shows that, although the dataset can be (linearly) separated by $\hat{\wB}$, it cannot be separated by \emph{any} vector orthogonal to $\hat{\wB}$.
This motivates us to decompose the $d$-dimensional ambient space into a $1$-dimensional ``separable'' subspace and a $(d-1)$-dimensional ``non-separable'' subspace. 
This idea is formally realized as follows.

Fix $d-1$ orthogonal vectors $\fB_1,\dots,\fB_{d-1} \in \Rbb^d$ such that $\big( \hat\wB / \|\hat\wB \|_2, \fB_1,\dots,\fB_{d-1} \big)$ 
forms an orthogonal basis of the ambient space $\Rbb^d$.
Then define two \emph{projection operators}:
\begin{align*}
    & \Pcal \colon  \Rbb^d \to \Rbb \ \ \ \ \ \ \quad   \text{given by}\quad 
    \vB \mapsto \vB^\top \hat{\wB} / \|\hat\wB\|_2, \\ 
    & \bar\Pcal \colon   \Rbb^d \to \Rbb^{d-1}  \quad \text{given by} \quad 
    \vB \mapsto (\vB^\top \fB_1,\dots, \vB^\top\fB_{d-1}). 
\end{align*}
The two operators together define a natural space decomposition, i.e., $\Rbb^{d} = \Pcal (\Rbb^d) \oplus \Pcal(\Rbb^d)$.
Moreover, $\big(\Pcal (\xB_i), y_i\big)_{i=1}^n$ are linearly separable with an max-$\ell_2$-margin $\gamma$ according to Definition \ref{def:margin},
and $\big(\bar\Pcal (\xB_i), y_i\big)_{i\in\Scal}$ (hence $\big(\bar\Pcal (\xB_i), y_i\big)_{i=1}^n$) are non-separable according to Lemma \ref{lemma:non-separable-subspace}.
So the decomposition of space can also be understood as the decomposition of data features into ``max-margin features'' and ``non-separable features''. 
% The above explains our space decomposition viewpoint. 

In what follows, we will call $\Pcal (\Rbb^d)$ the \emph{max-margin subspace} and $\bar\Pcal (\Rbb^{d})$ the \emph{non-separable subspace}, respectively.
In addition, we define a ``margin offset'' that quantifies to what extent the ``non-separable features'' are not separable.

\begin{definition}[Margin offset for the non-separable features]\label{def:offset}
Under Assumptions \ref{assump:separable}, \ref{assump:bounded+fullrank}, and \ref{assump:non-degenerate}, it holds that 
$\big( \bar\Pcal (\xB_i), y_i\big)_{i\in\Scal}$ is non-separable.
Let $b$ be a \emph{margin offset} such that
\[-b := \max_{\bar\wB \in \Rbb^{d-1}, \ \|\bar\wB\|_2=1}\min_{i\in\Scal} \ \ y_i \cdot  \big\la \bar\Pcal(\xB_i),\ \bar\wB \big\ra. \]
Then $b>0$ due to the non-separability.
The definition immediately implies that: 
\[\text{for every}\ \bar\vB \in\Rbb^{d-1},\ \text{there exists}\ i\in\Scal\ \text{such that}\
 y_i\cdot \big\la \bar\Pcal (\xB_i),\ \bar\vB \big\ra \le -b\cdot \| \bar\vB \|_2.\]
\end{definition}

\paragraph{Comparison to \citet{ji2018risk}.}
The work by \citet{ji2018risk} also conducts space decomposition (see their Section 2). 
However, our approach is completely different from theirs. 
Firstly, they consider a non-separable dataset but we consider a linearly separable dataset. 
Secondly, 
at a higher level, they decompose the ``dataset'' (into two subsets), while we decompose the ``features'' (into two kinds of features).
More specifically, \citet{ji2018risk} first group the non-separable dataset into the ``maximal linearly separable subset'' and the complement, non-separable subset, then decompose the ambient space according to the subspace spanned by the non-separable subset and its orthogonal complement.
In comparison, we consider a linearly separable dataset and decompose the ambient space according to a max-margin direction (i.e., $\Pcal$) and its orthogonal complement (i.e., $\bar\Pcal$).

\section{Main Results}

We are now ready to present our main results.
All proofs are deferred to Appendix \ref{append:sec:miss-proof}.
% \paragraph{The convergence and implicit bias of constant-stepsize GD.}
To begin with, we provide the following theorem that captures the behaviors of constant-stepsize GD for logistic regression on linearly separable data. 
\begin{theorem}[The implicit bias of GD for logistic regression]\label{thm:main}
Suppose that Assumptions \ref{assump:separable}, \ref{assump:bounded+fullrank}, and \ref{assump:non-degenerate} hold.
Consider $(\wB_t)_{t\ge 0}$ produced by \eqref{eq:gd} with initilization\footnote{The theorem can be easily extended to allow any $\wB_0$ that has a bounded $\ell_2$-norm.} $\wB_0 = 0$ and constant stepsize $\eta > 0$.
Then there exist positive constants $c_1, c_2, c_3 >0$ that are upper bounded by a polynomial of $\big\{e^{\eta}, e^{n}, e^{1/b}, {1}/{\eta}, {1}/{(\theta-\gamma)}, {1}/{\gamma}, e^{\theta/\gamma}\big\}$ 
but are independent of $t$, 
such that:
\begin{enumerate}[label=(\Alph*),leftmargin=*,nosep]
\item \label{thm:item:convergence}
The risk is upper bounded by 
\[
L(\wB_t) \le {c_1} / {t},\quad t\ge 3.
\]
\item \label{thm:item:max-margin} In the max-margin subspace, %GD iterates tend to infinity:
    \[\Pcal (\wB_t) \ge  \log(t) / \gamma + \log(\eta \gamma^2/2) / \gamma ,\quad t\ge 1.\] 
\item \label{thm:item:non-separable:bounded} In the non-separable subspace, %GD iterates are uniformly bounded:
\[
\big\| \bar\Pcal (\wB_t) \big\|_2 \le c_2,\quad t\ge 0.
\]
\item \label{thm:item:non-separable:convergence} In addition, in the non-separable subspace, %GD iterates converge:
\[
G\big( \bar\Pcal (\wB_t) \big) - \min G(\cdot) \le {c_3}/{\log (t)}, \quad t\ge 3,\]
where $G(\cdot)$ is a strongly convex potential defined by
\[
G(\vB) := \sum_{i\in\Scal} \exp \big( -y_i \cdot \big\la \bar\Pcal (\xB_i),\ \vB \big\ra \big), \quad \vB \in \Rbb^{d-1}.
\]
\end{enumerate}

\end{theorem}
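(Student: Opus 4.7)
The plan is to decompose each iterate into its max-margin component $\rho_t:=\Pcal(\wB_t)\in\Rbb$ and its non-separable component $\vB_t:=\bar\Pcal(\wB_t)\in\Rbb^{d-1}$, and to analyze the two coordinates separately. The guiding intuition is that for any support vector $i\in\Scal$ one has $y_i\xB_i^\top\wB_t = \gamma\rho_t + y_i\la\bar\Pcal(\xB_i),\vB_t\ra$, so in the large-$\rho_t$ regime $\sigma(-y_i\xB_i^\top\wB_t)\approx e^{-\gamma\rho_t}\cdot e^{-y_i\la\bar\Pcal(\xB_i),\vB_t\ra}$. Consequently, the $\vB_t$-update is approximately a gradient step on $G$ with a time-varying \emph{effective stepsize} $\eta^{\textup{eff}}_t:=\eta e^{-\gamma\rho_t}$ that shrinks as $\rho_t$ grows, while non-support vectors contribute only $O(e^{-\theta\rho_t})=o(e^{-\gamma\rho_t})$ since $\theta>\gamma$.

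I would start with item \ref{thm:item:max-margin}, which actually holds unconditionally. Picking the worst-offset support vector $i^*\in\Scal$ from Definition \ref{def:offset}, we have $u_{i^*}:=y_{i^*}\xB_{i^*}^\top\wB_t\le\gamma\rho_t - b\|\vB_t\|_2\le\gamma\rho_t$. Combined with the unconditional inequality $\sigma(-u)\ge\tfrac12 e^{-u_+}$, this gives
\[
\rho_{t+1}-\rho_t\ \ge\ \eta\gamma\,\sigma(-u_{i^*})\ \ge\ \tfrac{\eta\gamma}{2}\,e^{-\gamma\rho_t}.
\]
Setting $a_t:=e^{\gamma\rho_t}$ (note $a_t\ge 1$ since $\rho_t\ge\rho_0=0$ by the positivity of each step) and using $e^x\ge 1+x$ yields $a_{t+1}\ge a_t+\gamma a_t(\rho_{t+1}-\rho_t)\ge a_t+\eta\gamma^2/2$; telescoping from $a_0=1$ gives $a_t\ge 1+t\eta\gamma^2/2\ge t\eta\gamma^2/2$, i.e., the claimed $\rho_t\ge\log(t)/\gamma+\log(\eta\gamma^2/2)/\gamma$ for $t\ge 1$.

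I expect item \ref{thm:item:non-separable:bounded}, the uniform boundedness of $\|\vB_t\|_2$, to be the main obstacle. The natural Lyapunov candidate is $G(\vB_t)$, which is coercive by Definition \ref{def:offset}: $G(\vB)\ge\exp(b\|\vB\|_2)$. Once $\eta^{\textup{eff}}_t\cdot L_G(\vB_t)\le 1$ for a local smoothness constant $L_G$, a standard descent-lemma calculation applied to the approximate iteration $\vB_{t+1}\approx\vB_t-\eta^{\textup{eff}}_t\grad G(\vB_t)$ yields $G(\vB_{t+1})\le G(\vB_t)+\textup{(summable error)}$, trapping $G(\vB_t)$ and hence $\|\vB_t\|_2$. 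The technical difficulty is the early phase: initially $\eta^{\textup{eff}}_t\approx\eta$, which can be arbitrarily large, so $\vB_t$ may oscillate wildly before entering the contractive regime. I would resolve this by a burn-in argument using the crude a priori bound $\|\vB_{t+1}-\vB_t\|_2\le\eta\sum_i\sigma(-y_i\xB_i^\top\wB_t)\le\eta n$, so $\|\vB_{T_0}\|_2\le\eta n T_0$ is finite for any finite burn-in $T_0=T_0(\eta,n,1/\gamma,1/b,\ldots)$; item \ref{thm:item:max-margin} guarantees $\rho_{T_0}$ reaches the requisite level, after which the contractive regime takes over and pins $\|\vB_t\|_2\le c_2$.

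Items \ref{thm:item:convergence} and \ref{thm:item:non-separable:convergence} follow readily. For \ref{thm:item:convergence}, items \ref{thm:item:max-margin} and \ref{thm:item:non-separable:bounded} give $y_i\xB_i^\top\wB_t\ge\gamma\rho_t-\|\bar\Pcal(\xB_i)\|_2\|\vB_t\|_2\ge\log(t)-O(1)$, so $L(\wB_t)\le\sum_i e^{-y_i\xB_i^\top\wB_t}=O(1/t)$. For \ref{thm:item:non-separable:convergence}, making the approximation rigorous gives $\vB_{t+1}=\vB_t-\eta^{\textup{eff}}_t\grad G(\vB_t)+\eta\,\mathbf{e}_t$ with $\|\mathbf{e}_t\|_2=O(e^{-2\gamma\rho_t})+O(e^{-\theta\rho_t})$ summable since $\theta>\gamma$; item \ref{thm:item:max-margin} forces $\eta^{\textup{eff}}_t=\Theta(1/t)$ and thus $\sum_{s\le t}\eta^{\textup{eff}}_s=\Theta(\log t)$, and a standard convex-analysis bound for perturbed gradient descent on $G$ with time-varying stepsize then delivers $G(\vB_t)-\min G=O(1/\log t)$. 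Strong convexity of $G$ (from Lemma \ref{lemma:non-separable-subspace} together with Assumption \ref{assump:non-degenerate}\ref{assump:item:support-full-rank}) is used to upgrade from averaged- to last-iterate bounds and to absorb the perturbation. I view \ref{thm:item:convergence} and \ref{thm:item:non-separable:convergence} as the most routine parts once \ref{thm:item:max-margin} and \ref{thm:item:non-separable:bounded} are in hand.
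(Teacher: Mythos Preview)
Your arguments for \ref{thm:item:max-margin}, \ref{thm:item:convergence}, and \ref{thm:item:non-separable:convergence} match the paper's and are correct; in particular, your unconditional derivation of $\rho_{t+1}-\rho_t\ge\tfrac{\eta\gamma}{2}e^{-\gamma\rho_t}$ via the worst-offset support vector is exactly the paper's lower bound on $\Pcal(\wB_t)$, and your route to \ref{thm:item:convergence} and \ref{thm:item:non-separable:convergence} from \ref{thm:item:max-margin}+\ref{thm:item:non-separable:bounded} is the paper's route (the paper uses a ``modified descent lemma'' rather than strong convexity to pass to the last iterate in \ref{thm:item:non-separable:convergence}, but either device works). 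The gap is in \ref{thm:item:non-separable:bounded}. Your burn-in scheme is circular: the local smoothness satisfies $L_G(\vB)\le G(\vB)\le n\,e^{\|\vB\|_2}$, so the ``contractive-regime'' condition $\eta^{\textup{eff}}_{T_0}\cdot L_G(\vB_{T_0})\le 1$ reads $e^{\gamma\rho_{T_0}}\gtrsim \eta\,e^{\|\vB_{T_0}\|_2}$. Your only a-priori control is $\|\vB_{T_0}\|_2\le\eta n T_0$, which makes the right-hand side potentially $e^{\Theta(T_0)}$, while \ref{thm:item:max-margin} gives only $e^{\gamma\rho_{T_0}}=\Theta(T_0)$. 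No finite $T_0=T_0(\eta,n,1/\gamma,1/b,\dots)$ closes this loop; and even if the contractive regime were somehow entered, the resulting $c_2$ would inherit $\|\vB_{T_0}\|_2$ and lose the claimed polynomial dependence on $\{e^\eta,e^n,\dots\}$.

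The paper fixes \ref{thm:item:non-separable:bounded} by abandoning $G$ as the Lyapunov and working directly with $\|\vB_t\|_2^2$. Expanding $\|\vB_{t+1}\|_2^2=\|\vB_t\|_2^2-2\eta\la\vB_t,\grad_{\bar\wB}L\ra+\eta^2\|\grad_{\bar\wB}L\|_2^2$, the cross term is split so that the worst-offset indices (one in $\Scal$, one overall) each contribute a negative drift of order $-b\|\vB_t\|_2$ times a sigmoid weight, while the positive contributions and the squared-gradient term are bounded using only $\sigma(\cdot)\le 1$ (hence $\|\grad_{\bar\wB}L\|_2\le n$), \emph{independently of $\rho_t$}. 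This yields a barrier: if $\|\vB_t\|_2\ge\max\{4n/b,\ \eta n^2/b\}$ then $\|\vB_{t+1}\|_2\le\|\vB_t\|_2$; together with the one-step growth bound $\eta n$, this traps $\|\vB_t\|_2\le\max\{4n/b,\ \eta n^2/b\}+\eta n$ for all $t\ge 0$, with no burn-in and with exactly the constant dependence the theorem asserts. The structural point---precisely what distinguishes the logistic from the exponential loss in Theorem~\ref{thm:exp}---is that the logistic gradient is uniformly bounded, so the squared-gradient term in the $\|\vB_t\|_2^2$ recursion cannot explode even before $\rho_t$ is large.
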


Note that Theorem \ref{thm:main} applies to GD with \emph{any} positive constant stepsize, therefore allowing GD to be in the EoS regime. 
We next discuss the implications of Theorem \ref{thm:main} in detail.

\begin{figure}[t]
\centering
\begin{tabular}{ccc}
\includegraphics[width=0.4\linewidth]{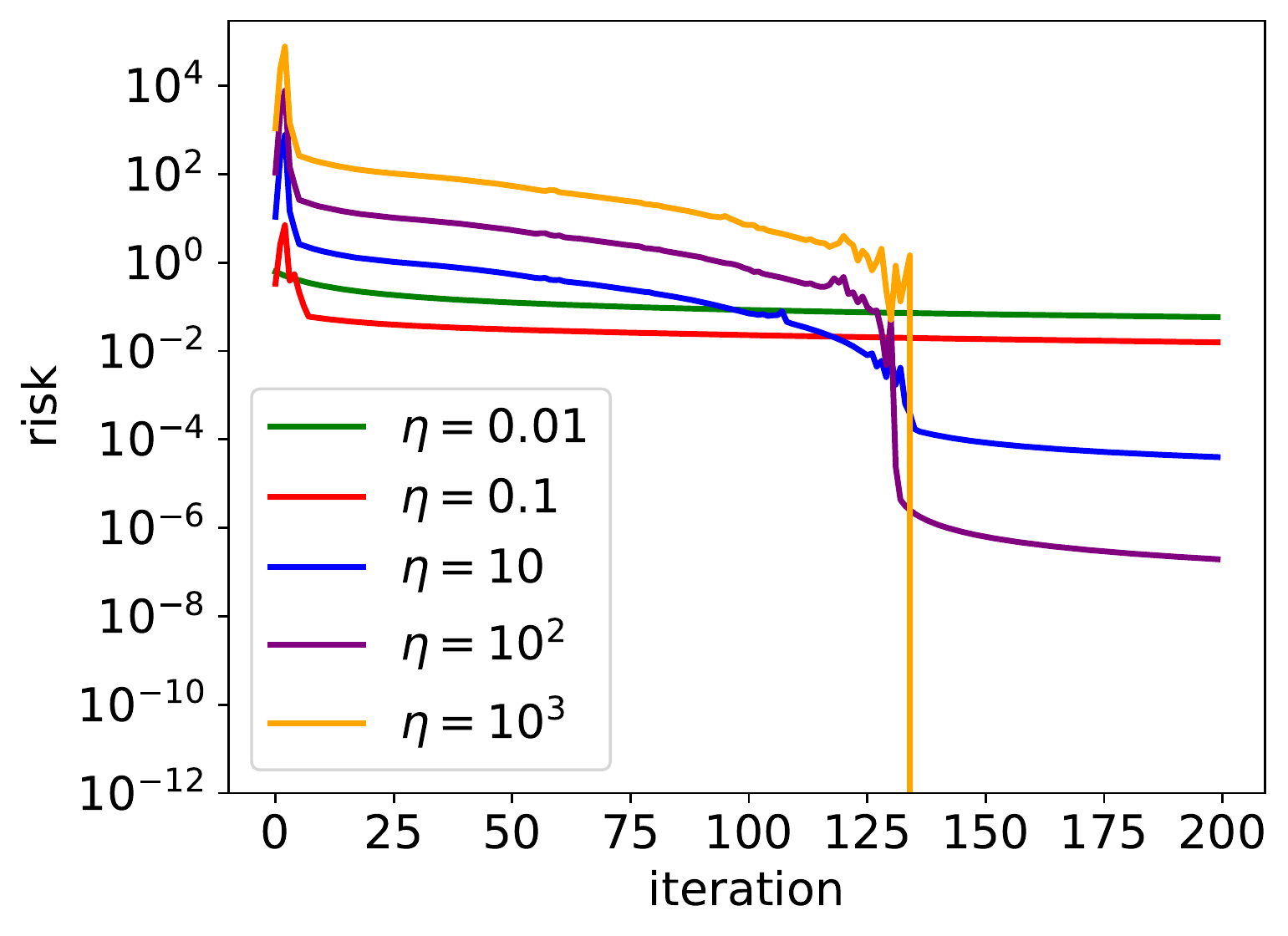} &
\hspace{8mm} &
\includegraphics[width=0.4\linewidth]{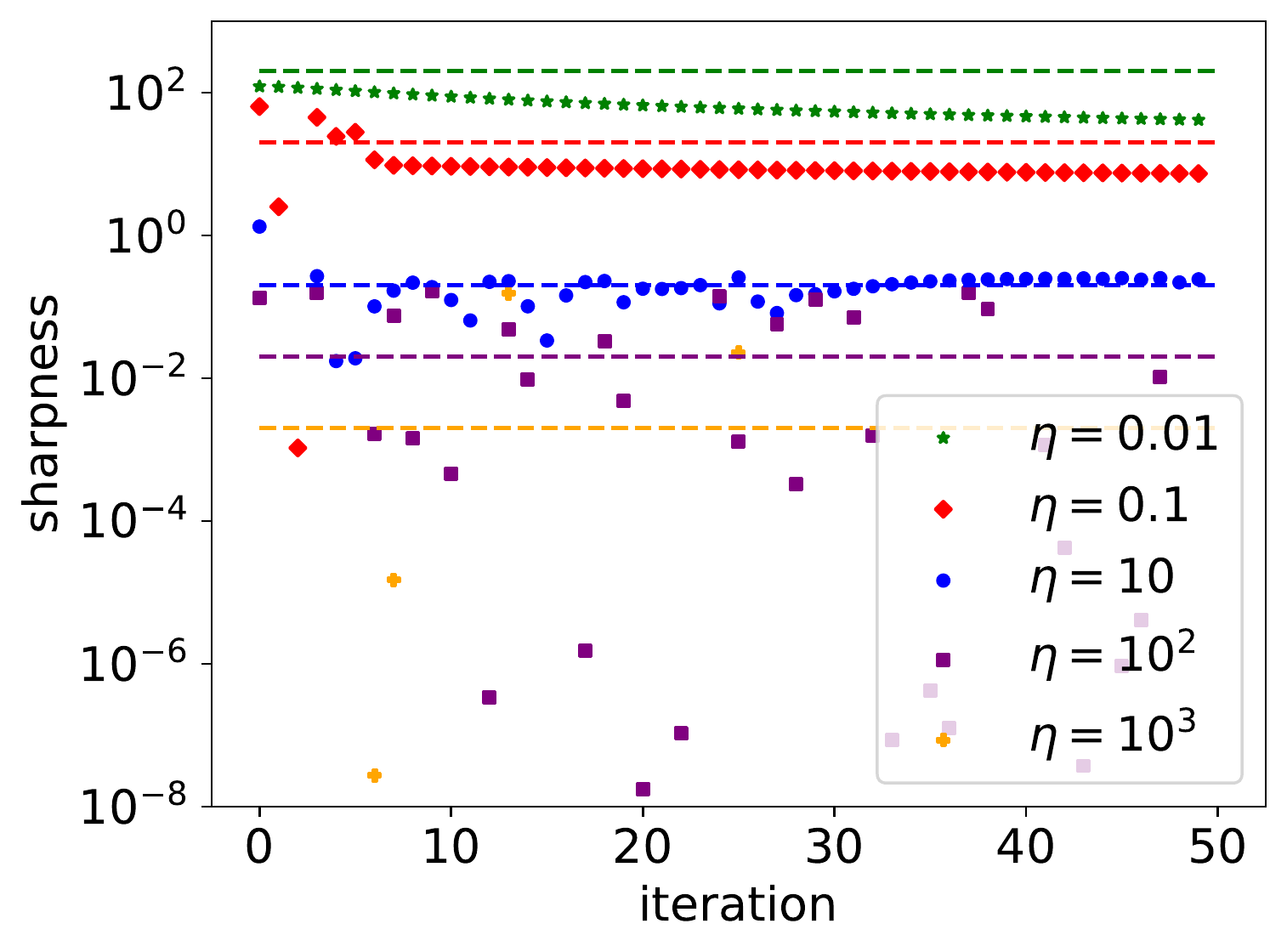} \\
{\small (a) Empirical risk } & \hspace{8mm}  & {\small (b) Sharpness} 
\end{tabular}
\caption{\small
The behaviors of GD for logistic regression.
We randomly sample $1,000$ data with labels ``0'' and ``8'' from the MNIST dataset and then use GD to perform logistic regression on those data.
The sub-figures (a) and (b) report the risk (i.e., the logistic loss) and sharpness (i.e., $\|\grad L(\wB_t)\|_2$) along the GD trajectories, respectively. 
For GD with stepsizes $\eta$ larger than or equal to $0.1$, the training losses oscillate locally and the sharpnesses can exceed $2/\eta$. 
For GD with a small stepsize $\eta=0.01$, the training losses decrease monotonically and the sharpnesses are always below $2/\eta$.
More details of the experiments can be found in Appendix \ref{append:sec:experiment}. 
}
\label{fig:logistic-regression}
\end{figure}

\textbf{Risk minimization.}~
Theorem \ref{thm:main}\ref{thm:item:convergence} guarantees that the GD iterates minimize the logistic loss at a rate of $\Ocal(1/t)$ for any constant stepsize, even for those large stepsizes that cause local risk oscillations.
This result explains the risk convergence of GD in the EoS regime, as illustrated in Figure \ref{fig:logistic-regression},
and is also consistent with the observations in neural network experiments (see Figure \ref{fig:NN}).

\textbf{Margin maximization.}~
Theorem \ref{thm:main}\ref{thm:item:max-margin} shows that the GD iterates, when projected to the max-margin direction, tend to infinity at a rate of $\Ocal(\log(t))$.
Moreover, Theorem \ref{thm:main}\ref{thm:item:non-separable:bounded} shows that the GD iterates, when projected to the non-separable subspace, are uniformly bounded. 
These two results together imply that the direction of the GD iterates will tend to a max-margin direction, i.e., the hard-margin SVM direction, at a rate of $\Ocal(1/\log(t))$. 
Therefore, the implicit bias of GD that maximizes the $\ell_2$-margin is consistent in both the EoS regime and the small-stepsize regime  \citep{soudry2018implicit,ji2018risk}.

\textbf{Iterate convergence in the non-separable subspace.}~
Theorem \ref{thm:main}\ref{thm:item:non-separable:convergence} shows that the GD iterates, when projected to the non-separable subspace, converge to the minimizer of a strongly convex potential $G(\cdot)$.
Here, $G(\cdot)$ measures the exponential loss of a parameter on the support vectors with their non-separable features.
This provides a more precise characterization of the implicit bias of GD:
the direction of the GD iterates converges to the hard-margin SVM direction, moreover, the limit of the projections of the GD iterates to the orthogonal complement to the hard-margin SVM direction minimizes the exponential loss on the non-separable features of the support vectors. 

\textbf{Comparison to Theorem 9 in \citet{soudry2018implicit}.}~
Theorem 9, in particular, equation (18), in \citet{soudry2018implicit} \emph{indirectly} characterizes the convergence of GD iterates in the non-separable subspace.
It reads in our notations that: $\tilde{\wB} := \lim_{t\to\infty} \big(\wB_t - \hat\wB \log(t) \big)$ exists and satisfies
\begin{equation}\label{eq:soundry}
    \text{for every $i\in\Scal$},\ \ \eta \cdot \exp(- y_i \cdot \la \xB_i,\ \tilde\wB \ra ) = \alpha_i,\ \text{where}\ \alpha_i\ \text{is defined in Assumption \ref{assump:non-degenerate}}.
\end{equation}
% where $(\alpha_i)_{i\in\Scal}$ are the positive constants in Assumption \ref{assump:non-degenerate}.
In Appendix \ref{append:sec:soundry}, we show that Theorem \ref{thm:main}\ref{thm:item:non-separable:convergence} is equivalent to condition \eqref{eq:soundry} in terms of describing $\bar\Pcal(\wB_{\infty})$.
Despite their equivalence, \eqref{eq:soundry} is less interpretable than Theorem \ref{thm:main}\ref{thm:item:non-separable:convergence}, as \eqref{eq:soundry} entangles an effect of $\Pcal(\wB_\infty)$ with $\bar\Pcal(\wB_{\infty})$, while Theorem \ref{thm:main} completely decouples $\Pcal(\wB_\infty)$ and $\bar\Pcal(\wB_{\infty})$.
In particular, \eqref{eq:soundry} seems to suggest $\bar\Pcal(\wB_{\infty})$ to be a function of stepsize $\eta$ since $\tilde\wB$ depends on $\eta$.
However, this is only an illusion brought by the lack of interpretability of \eqref{eq:soundry}; it is clear that $\bar\Pcal(\wB_{\infty})$ is independent of $\eta$ according to Theorem \ref{thm:main}\ref{thm:item:non-separable:convergence}. 
% \jnote{Should these reference theorem 4.1 , not theorem 4.2?}

\textbf{Exponential loss.}~
Until now, our theory for GD is consistent for large and small stepsizes. 
However, this is a particular benefit thanks to the design of the logistic loss, and may not hold for other losses. 
Our next result suggests that, in the EoS regime where the stepsizes are large, GD can diverge catastrophically under the exponential loss.
% the behaviors of GD could be significantly different under the logistic and exponential losses.
% When the stepsizes are small, existing works \citep{soudry2018implicit,ji2018risk} show that GD behaves similarly under both the logistic and exponential losses (more generally, losses with exponential tails).

\begin{theorem}[The catastrophic divergence of GD under the exponential loss]\label{thm:exp}
Consider a dataset of two samples, where
\begin{align*}
    \xB_1 = (
        \gamma,\
        1)
    ,\quad  y_1= 1;
    \qquad 
    \xB_2 = (
        \gamma,\
        -1)
    ,\quad  y_2 = 1.
\end{align*}
It is clear that $(\xB_i,y_i)_{i=1,2}$ is linearly separable and $(1,0)$ is the max-margin direction.
Consider a risk defined by the exponential loss:
\begin{align*}
    L(w, \bar w):= \exp(- y_1 \la \xB_1, \wB\ra) + \exp(- y_2 \la \xB_2, \wB\ra) 
    &= e^{-\gamma w} \cdot \big( e^{-\bar w} + e^{ \bar w} \big),\quad \text{where}\ \ \wB = (w, \bar w).
\end{align*}
Let $(w_t, \bar w_t)_{t \ge 0}$ be the iterates produced by GD with constant stepsize $\eta$ for optimizing $L(w,\bar w)$. 
If
\begin{align*}
    0\le w_0 \le 2,\quad 
    |\bar w_0| \ge 1,\quad 
    0<\gamma < 1/4, \quad 
    \eta \ge 4,
\end{align*}
then: 
\begin{enumerate}[label=(\Alph*),leftmargin=*,nosep]
    \item \label{thm:item:exp:risk} $L(w_t, \bar w_t) \to \infty$.
    \item \label{thm:item:exp:mm} $w_t \to \infty$.
    \item \label{thm:item:exp:ns} For every $t \ge 0$,  $| \bar w_t | \ge  2\gamma w_t$.
    \item \label{thm:item:exp:ns-sign} Moreover, the sign of $\bar w_t$ flips every iteration. 
\end{enumerate}
As a consequence, $(w_t,\bar w_t)_{t\ge 0}$ diverge in terms of either magnitude or direction; in particular, the direction of $(w_t,\bar w_t)_{t\ge 0}$ cannot converge to the max-margin direction (which is $(1,0)$).
\end{theorem}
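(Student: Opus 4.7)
The plan is to set up a single induction establishing \ref{thm:item:exp:ns} and \ref{thm:item:exp:ns-sign}, then derive \ref{thm:item:exp:risk} and \ref{thm:item:exp:mm} as short corollaries. The gradients of $L(w,\bar w)=e^{-\gamma w}(e^{\bar w}+e^{-\bar w})$ are $\partial_w L=-2\gamma e^{-\gamma w}\cosh(\bar w)$ and $\partial_{\bar w}L=2e^{-\gamma w}\sinh(\bar w)$, so the GD recursion is
\begin{equation*}
w_{t+1}=w_t+2\eta\gamma e^{-\gamma w_t}\cosh(\bar w_t),\qquad \bar w_{t+1}=\bar w_t-2\eta e^{-\gamma w_t}\sinh(\bar w_t).
\end{equation*}
The $w$-update is always a strictly positive step, so $w_t$ is monotone increasing and $\ge 0$ throughout. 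The $\bar w$-update is ``$\bar w_t$ minus a term of the same sign as $\bar w_t$''; the strategy is to show that for $\eta\ge 4$ this correction over-corrects and flips the sign of $\bar w_t$ at every iteration.

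The induction carries the joint invariant: (i) $|\bar w_t|\ge 2\gamma w_t$; (ii) $|\bar w_t|\ge 1$. The base case $t=0$ is immediate: $|\bar w_0|\ge 1$ is assumed, and $\gamma<1/4$ together with $w_0\le 2$ give $2\gamma w_0<1\le|\bar w_0|$. Using the even symmetry $L(w,\bar w)=L(w,-\bar w)$, the inductive step may assume $\bar w_t>0$, hence $\bar w_t\ge\max(1,2\gamma w_t)$. Two elementary inequalities drive everything: $\sinh(\bar w_t)\ge e^{\bar w_t}/3$ for $\bar w_t\ge 1$, and $e^{x/2}\ge(e/2)x$ for all $x\ge 0$ (tight at $x=2$). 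Combined with the consequence $-\gamma w_t\ge -\bar w_t/2$ of invariant (i), the overshoot $2\eta e^{-\gamma w_t}\sinh(\bar w_t)$ is bounded below by a constant multiple of $\eta e^{\bar w_t/2}$, hence by a constant multiple of $\eta\bar w_t$; for $\eta\ge 4$ this strictly exceeds $\bar w_t$, producing the sign flip of \ref{thm:item:exp:ns-sign}.

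To propagate invariant (i), expand
\begin{equation*}
|\bar w_{t+1}|-2\gamma w_{t+1}=2\eta e^{-\gamma w_t}\big[\sinh(\bar w_t)-2\gamma^2\cosh(\bar w_t)\big]-(\bar w_t+2\gamma w_t),
\end{equation*}
lower bound the bracket by a positive constant multiple of $e^{\bar w_t}$ using $2\gamma^2<1/8$, and close with the same two elementary inequalities together with $2\gamma w_t\le\bar w_t$ and $\eta\ge 4$. Invariant (ii) is similar but simpler: the overshoot exceeds $\bar w_t$ by a universal positive constant for $\eta\ge 4$, so $|\bar w_{t+1}|\ge 1$. Once \ref{thm:item:exp:ns} and \ref{thm:item:exp:ns-sign} are in hand, the divergence claims are short. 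For \ref{thm:item:exp:mm}, $w_{t+1}-w_t\ge 2\eta\gamma e^{-\gamma w_t}$ (using $\cosh\ge 1$); if $w_t$ stayed bounded, the summands would be uniformly bounded below, forcing divergence of $\sum(w_{t+1}-w_t)$, contradicting boundedness, so $w_t\to\infty$. For \ref{thm:item:exp:risk}, invariant (i) yields $L(w_t,\bar w_t)\ge e^{|\bar w_t|-\gamma w_t}\ge e^{\gamma w_t}\to\infty$. For the final sentence, the second coordinate of the unit direction stays $\ge 2\gamma/\sqrt{1+4\gamma^2}>0$ in magnitude by (i), so the direction cannot approach $(1,0)$; moreover, the sign flipping rules out directional convergence outright.

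The main obstacle is balancing the induction so that both pieces of the invariant propagate together. The cross-term $-4\eta\gamma^2 e^{-\gamma w_t}\cosh(\bar w_t)$ contributed by the $w$-update eats into the $\bar w$-growth, and disentangling it cleanly needs the quantitative gap $\gamma<1/4$, which keeps $1-2\gamma^2$ well-separated from $2\gamma^2$. The bound $\eta\ge 4$ is essentially tight for the clean calibration, matching the worst case of $e^{x/2}\ge(e/2)x$ at $x=2$. Once these two quantitative inputs are in place, everything else is routine bookkeeping.
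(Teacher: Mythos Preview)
Your proposal is correct and follows essentially the same approach as the paper. Both arguments carry the joint inductive invariant $|\bar w_t|\ge 1$ and $|\bar w_t|\ge 2\gamma w_t$ (together with $w_t\ge 0$), establish the sign flip from the overshoot lower bound, and then read off \ref{thm:item:exp:mm} and \ref{thm:item:exp:risk} as immediate corollaries; the only differences are cosmetic---you use $\cosh/\sinh$ notation and the inequalities $\sinh(t)\ge e^t/3$ and $e^{x/2}\ge (e/2)x$, whereas the paper uses $e^t-e^{-t}\ge e^t/2$ and $2e^{t/2}\ge t+1$, $e^{t/2}\ge t$.
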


Theorem \ref{thm:exp} shows that with a large constant stepsize, 
the GD iterates no longer minimize the risk defined by the exponential loss and no longer converge along the max-margin direction. 
In fact, the directions of the GD iterates flip every step, thus the direction of the GD iterates necessarily \emph{diverges}, resulting in no meaningful implicit bias at all. 

In the EoS regime, large-stepsize GD still behaves nicely under the logistic loss (Theorem \ref{thm:main}) but can behave catastrophically under the exponential loss (Theorem \ref{thm:exp}).
From a mathematical standpoint, this difference is rooted in the fact that the gradient of the logistic loss is uniformly bounded while the gradient of the exponential loss could be extremely large. 
From a practical standpoint, it provides insights into why the logistics loss (and its multi-class version, the cross-entropy loss) is preferable to the exponential loss in practice. 

The different behaviors of large-stepsize GD under the logistic and exponential losses also sharply contrast the EoS regime with the small-stepsize regime.
Because in the small-stepsize regime, the convergence and implicit bias of GD are known to be similar under any exponentially-tailed losses, including the logistic and exponential losses \citep{soudry2018implicit,ji2018risk}.

% Theorem \ref{thm:exp}
% % large-stepsize GD maintains this implicit bias under the logistic loss (Theorem \ref{thm:main}) but loses it catastrophically under the  exponential loss (Theorem \ref{thm:exp}).
% From a technical standpoint, this difference is rooted in the fact that the gradient of the logistic loss is uniformly bounded while the gradient of the exponential loss could be extremely large. 
% From a practical standpoint, this difference provides theoretical evidence on why the logistics loss (and its multi-class version, the cross-entropy loss) is preferable to other exponentially-tailed losses such as the exponential loss in practice. 

\section{Techniques Overview}\label{sec:proof-sketch}
The proofs of Theorems \ref{thm:main} and \ref{thm:exp} are deferred to Appendix \ref{append:sec:miss-proof}.
In this section, we explain the proof ideas of Theorem \ref{thm:main} by analyzing a simple dataset considered in Theorem \ref{thm:exp} (the treatment to the general datasets can be found in Appendix \ref{append:sec:implicit-bias}). 
But this time we work with the logistic loss instead of the exponential loss, that is, 
\begin{equation*}
    L(w, \bar w) = \log(1+e^{-\gamma w - \bar w}) + \log(1+e^{-\gamma w + \bar w}).
\end{equation*}
Then the GD iterates can be written as 
\begin{equation*}
    w_{t+1} = w_t - \eta \cdot g_t,\qquad 
    \bar w_{t+1} = \bar w_t - \eta \cdot \bar g_t,
\end{equation*}
where
\begin{equation*}
    g_t := - \gamma \cdot \bigg( \frac{1 }{1+e^{\gamma w_t + \bar w_t}} + \frac{1 }{1+e^{\gamma w_t - \bar w_t}}  \bigg), \quad 
    \bar g_t := -  \bigg( \frac{1 }{1+e^{\gamma w_t + \bar w_t}} - \frac{1 }{1+e^{\gamma w_t - \bar w_t}}  \bigg).
\end{equation*}
For simplicity, assume that 
\begin{equation*}
    w_0 = 0,\quad  | \bar w_0 | > 0.
\end{equation*}
Different from \citet{soudry2018implicit,ji2018risk}, our approach begins with showing the implicit bias (despite that the risk may oscillate). The long-term risk convergence is then simply a consequence of the implicit bias results. 

\paragraph{Step 1: $(\bar w_t)_{t\ge 0}$ is uniformly bounded.}
Observe that $\bar g_t$ and $\bar w_t$ always share the same sign and that \(
 | \bar g_{t}| \le 1,
\)
so we have 
\begin{align*}
    | \bar w_{t+1} | = \big|  | \bar w_{t} |  - \eta \cdot |\bar g_{t}|  \big |
    \le \max\big\{ | \bar w_{t} |,\ \eta \cdot |\bar g_{t}|  \big\} \le  \max\big\{ | \bar w_{t} |,\ \eta \big\}.
\end{align*}
By induction, we get that $\big( |\bar w_t| \big)_{t\ge 0}$ is uniformly bounded by $\max\{ |\bar w_0|,\ \eta \} = \Theta(1)$.

\paragraph{Step 2: $w_t \approx \log(t) / \gamma$.}
We turn to study the max-margin subspace. 
It is clear that $g_t \le 0$ for every $t\ge 0$.
So we have $w_t \ge 0$ by induction. 
Moreover, we have
\begin{align*}
    - \frac{g_t}{\gamma} &=  \frac{e^{-\gamma w_t - \bar w_t}}{1+e^{-\gamma w_t - \bar w_t}} + \frac{e^{-\gamma w_t + \bar w_t} }{1+e^{-\gamma w_t + \bar w_t}} 
    \le e^{-\gamma w_t}\cdot e^{ - \bar w_t} + e^{-\gamma w_t}\cdot e^{ \bar w_t}\le e^{-\gamma w_t} \cdot \Theta(1),
\end{align*}
where the last inequality is because $|\bar w_t|$ is uniformly bounded.
We also have 
\begin{align*}
    - \frac{g_t}{\gamma}
    &=  \frac{e^{-\gamma w_t - \bar w_t}}{1+e^{-\gamma w_t - \bar w_t}} + \frac{e^{-\gamma w_t + \bar w_t} }{1+e^{-\gamma w_t + \bar w_t}}  
    \ge  0.5\cdot \min\{ 1, e^{-\gamma w_t} e^{ - \bar w_t} \} + 0.5\cdot \min\{ 1, e^{-\gamma w_t} e^{ \bar w_t} \}  \\
    & \ge 0.5 \cdot \min \{1, e^{-\gamma w_t} e^{ - \bar w_t} + e^{-\gamma w_t} e^{ \bar w_t} \} 
    \ge 0.5 \cdot \min \{1, e^{-\gamma w_t} \}  = 0.5\cdot e^{-\gamma w_t},
\end{align*}
where the third inequality is because $e^{-\bar w_t}+e^{\bar w_t} \ge 1$ and the last equality is because  $w_t\ge 0$.
Putting these together, we have 
\begin{equation}\label{eq:sketch:w-rate}
     g_t \approx - \gamma \cdot e^{-\gamma w_t} \cdot \Theta(1) \ 
     \Rightarrow \ 
     w_{t+1} \approx w_t - \eta \gamma \cdot e^{-\gamma w_t} \cdot \Theta(1)\ 
     \Rightarrow \ 
     w_t = \log(t) / \gamma \pm \Theta(1).
\end{equation}

\paragraph{Step 3: $\bar g_t \approx \exp(-\gamma w_t) \cdot \grad G(\bar w_t)$.}
We turn back to the non-separable subspace.
Note that $\bar g_t$ is an odd function of $\bar w_t$. Without loss of generality, let us assume $\bar w_t \ge 0$ in this part.
% Fix a scalar $a>1$ and consider the function 
Notice that
\begin{equation}\label{eq:sketch:f(t)}
\text{for every fixed $a>1$},\ 
f(t) := \frac{1}{t + 1/a } - \frac{1}{t + a }\ \text{is a decreasing function of $t\ge 0$}.
\end{equation}
Then we have
\begin{align*}
    \bar g_t 
    = e^{-\gamma w_t} \cdot \bigg( \frac{1}{e^{-\gamma w_t}+e^{ -  \bar w_t} } - \frac{1}{e^{-\gamma w_t}+ e^{\bar w_t} } \bigg) 
    \le  e^{-\gamma w_t}\cdot \bigg( \frac{1}{e^{ - \bar w_t} } - \frac{1}{e^{\bar w_t} } \bigg)  
    =: e^{-\gamma w_t}\cdot \grad G(\bar w_t),
\end{align*}
where the inequality is by \eqref{eq:sketch:f(t)}, and $G(\bar w) := e^{  \bar w}  + e^{- \bar w}$ is defined as in Theorem \ref{thm:main}\ref{thm:item:non-separable:convergence}.
On the other hand, since $|\bar w_t|$ is bounded and $w_t$ is increasing (and tends to infinity), there must exist a time $t_0$ such that 
\(
e^{-\gamma w_t} \le e^{- |\bar w_t|}
\)
for every $t\ge t_0$.
Then for $t\ge t_0$ we have
\begin{align*}
    \bar g_t 
    &=  e^{-\gamma w_t} \cdot \bigg( \frac{1}{e^{-\gamma w_t}+e^{ - \bar w_t} } - \frac{1}{e^{-\gamma w_t}+ e^{\bar w_t} } \bigg) 
    \ge e^{-\gamma w_t} \cdot \bigg( \frac{1}{2 e^{ - \bar w_t} } - \frac{1}{ e^{- \bar w_t} + e^{\bar w_t} } \bigg) \\
    &=   e^{-\gamma w_t}\cdot \frac{ e^{  \bar w_t}  - e^{-  \bar w_t} }{  2 e^{- 2 \bar w_t}  + 2 } 
    \ge   e^{-\gamma w_t}\cdot \frac{ e^{  \bar w_t}  - e^{- \bar w_t} }{4}
    =: \frac{1}{4} \cdot e^{-\gamma w_t}\cdot \grad G(\bar w_t),
\end{align*}
where the first inequality is by \eqref{eq:sketch:f(t)} and \(
e^{-\gamma w_t} \le e^{- \bar w_t}
\), and the last inequality is because we assume $\bar w_t \ge 0$.
Putting these together, and using \eqref{eq:sketch:w-rate}, we obtain that 
\begin{equation}\label{eq:sketch:gd-on-G}
\text{for every}\ t\ge t_0,\ \ 
\bar w_{t+1} = \bar w_t - \eta_t \cdot \grad G(\bar w_t) ,\ \text{where} \
\eta_t \approx \eta \cdot e^{-\gamma w_t} \cdot \Theta(1) \approx {\Theta(1)} / {t}. 
\end{equation}

\paragraph{Step 4: a modified descent lemma.}
Using \eqref{eq:sketch:gd-on-G} and Taylor's expansion, we have 
\begin{equation*}
\text{for every}\ t\ge t_0,\ \ 
G(\bar w_{t+1})
\le G(\bar w_t) - \eta_t \cdot \| \grad G(\bar w_t) \|^2 + \frac{\beta}{2}\cdot \eta_t^2 \cdot \| \grad G(\bar w_t) \|^2 \le G(\bar w_t) + \frac{\Theta(1)}{t^2},
\end{equation*}
where $\beta := \sup_{| \bar v| \le \max\{|\bar w_0|,\eta \} }\|\hess G(\bar v) \|_2 = \Theta(1)$.
Taking a telescoping sum from $t$ to $T$, we have
\begin{equation}
\text{for every}\ T\ge t\ge t_0,\quad 
    G(\bar w_T) \le G(\bar w_t) + {\Theta(1)} / {t}.\label{eq:sketch:descent-lemma}
\end{equation}

\paragraph{Step 5: the convergence of $\bar w_t$.}
What remains is adapted from classic convergence arguments.
Choose $\bar w_* = \arg\min G(\cdot)$, then
\begin{align*}
    \|\bar w_{t+1} - \bar w_*\|^2_2 
    &= \|\bar w_{t} - \bar w_* \|^2_2 - 2\eta_t \cdot \la \bar w_t -\bar w_*, \grad G(\bar w_t) \ra + \eta_t^2 \cdot \| \grad G(\bar w_t)  \|_2^2 \\ 
    % &\le \|\bar w_{t} - \bar w_* \|^2_2 - 2\eta_t \cdot ( G(\bar w_t) - G(\bar w_*)  ) + \eta_t^2 \cdot \| \grad G(\bar w_t)  \|_2^2  \\ 
    &\le \|\bar w_{t} - \bar w_* \|^2_2 - 2\eta_t \cdot ( G(\bar w_t) - G(\bar w_*)  ) + {\Theta(1)} / {t^2},\quad t\ge t_0,
\end{align*}
where the equality is by \eqref{eq:sketch:gd-on-G}, and the inequality is because of 
the convexity of $G(\cdot)$, $|\bar w_t| \le \Theta(1)$, and \eqref{eq:sketch:gd-on-G}.
Taking a telescoping sum, we have 
\begin{align*}
\sum_{t=t_0}^T 2\eta_t \cdot ( G(\bar w_t) - G(\bar w_*)  ) 
\le \|\bar w_{t_0} - \bar w_* \|^2_2 - \|\bar w_{T+1} - \bar w_* \|^2_2 + \sum_{t=t_0}^T {\Theta(1)} / {t^2}
\le \Theta(1).
\end{align*}
Combing the above with \eqref{eq:sketch:descent-lemma} and using $\eta_t \approx \Theta(1)/t$ from \eqref{eq:sketch:gd-on-G}, we get
\begin{equation*}
    \sum_{t=t_0}^T \eta_t \cdot ( G(\bar w_T) - G(\bar w_*)  )
    \le \sum_{t=t_0}^T \eta_t\cdot ( G(\bar w_t) - G(\bar w_*)  ) +  \sum_{t=t_0}^T \eta_t\cdot {\Theta(1)} / {t} \le \Theta(1).
\end{equation*}
Finally, since $ \sum_{t=t_0}^T \eta_t \ge \Theta(1)\cdot (\log(T) - \log(t_0))$ according to \eqref{eq:sketch:gd-on-G}, we get that 
 \(G(\bar w_T) - G(\bar w_*) \le \Theta(1) / (\log(T) - \log(t_0)).\)
 
\paragraph{Step 6: risk convergence.}
The long-term risk convergence result can be easily established by making use of the implicit bias results we have obtained so far.
% Till now, we have completely characterized the implicit bias of GD,  
% using which, its risk minimization behavior can be easily obtained. 

\section{Conclusion}
We consider constant-stepsize GD for logistic regression on linearly separable data. 
We show that with \emph{any} constant stepsize, GD minimizes the logistic loss; moreover, the GD iterates tend to infinity when projected to a max-margin direction and tend to a fixed minimizer of a strongly convex potential when projected to the orthogonal complement of the max-margin direction. 
We also show that GD with a large stepsize may diverge catastrophically if the logistic loss is replaced by the exponential loss. 
Our theory explains how GD minimizes a risk non-monotonically.

\section*{Acknolwdgement}
We thank the anonymous reviewers for their helpful comments and Alexander Tsigler for pointing out several typos.
VB is partially supported by the Ministry of Trade, Industry and Energy(MOTIE) and Korea Institute for Advancement of Technology (KIAT) through the International Cooperative R\&D program.
JDL acknowledges the support of the ARO under MURI Award W911NF-11-1-0304,  the Sloan Research Fellowship, NSF CCF 2002272, NSF IIS 2107304,  NSF CIF 2212262, ONR Young Investigator Award, and NSF CAREER Award 2144994.

\bibliography{ref}
\newpage
\appendix

\section{On the Equivalence between Theorem \texorpdfstring{\ref{thm:main}\ref{thm:item:non-separable:convergence}}{4.1(D)} and \texorpdfstring{\eqref{eq:soundry}}{(1)}}\label{append:sec:soundry}
Note that $\tilde\wB$ in \eqref{eq:soundry} contains components in both the max-margin and non-separable subspaces, and we need to disentangle those two components.

Under the coordinate system that defines $\Pcal$ and $\bar\Pcal$, we can represent a vector $\vB \in \Rbb^d$ as 
\begin{align*}
    \vB := \big( \Pcal(\vB), \bar\Pcal(\vB) \big).
\end{align*}
Then for $i\in\Scal$, we have 
\begin{align*}
    y_i \cdot \la \xB_i, \tilde\wB \ra 
    &=y_i \cdot \big\la \big( \Pcal(\xB_i), \bar\Pcal(\xB_i) \big),\ \big( \Pcal(\tilde\wB), \bar\Pcal(\tilde\wB) \big) \big\ra \\ 
    &=y_i \cdot \Pcal(\xB_i)\cdot \Pcal(\tilde\wB) + y_i\cdot\big\la \bar\Pcal(\xB_i) ,\ \bar\Pcal(\tilde\wB) \big\ra && \explain{since $\Pcal$ and $\bar\Pcal$ are orthogonal} \\
    &= \gamma \Pcal(\tilde\wB) + y_i \cdot\big\la \bar\Pcal(\xB_i) ,\ \bar\Pcal(\tilde\wB) \big\ra. && \explain{since $y_i \Pcal(\xB_i)=\gamma$ for $i\in\Scal$}
\end{align*}
So \eqref{eq:soundry} is equivalent to 
\begin{equation*}
    \text{for every $i\in\Scal$},\quad \eta \exp\big(- \gamma \bar\Pcal (\tilde\wB) \big)\cdot \exp\big(- y_i \cdot \big\la \bar\Pcal (\xB_i), \ \bar\Pcal(\tilde\wB) \big\ra \big) = \alpha_i.
\end{equation*}
Recall that $\sum_{i\in\Scal}\alpha_i = 1/\gamma^2$ 
according to Definition \ref{assump:separable}.
% according to Assumption \ref{assump:non-degenerate}\ref{assump:item:support-positive}.
So focusing on $\bar\Pcal$, the above is equivalent to the following condition on $\bar\Pcal(\tilde\wB)$:
\begin{equation}\label{eq:soundry:dual}
   \alpha_i \propto \exp\big(- y_i \cdot \big\la \bar\Pcal (\xB_i), \ \bar\Pcal(\tilde\wB) \big\ra \big), \ \ i\in\Scal.
\end{equation}
Here we ignore a shared normalization factor. 

Now, recall from Assumption \ref{assump:non-degenerate}\ref{assump:item:support-positive} that $(\alpha_i)_{i\in\Scal}$ are such that 
\begin{equation*}
\hat\wB = \sum_{i\in\Scal} \alpha_i\cdot y_i \xB_i,\quad \alpha_i>0.
\end{equation*}
Note that as long as $\sum_{i\in\Scal} \alpha_i =1/\gamma^2$, we have 
\[\Pcal(\hat\wB) = \sum_{i\in\Scal} \alpha_i\cdot y_i \Pcal(\xB_i) = \gamma\cdot \frac{1}{\gamma^2} = \frac{1}{\gamma}. \] 
% where we use Assumption \ref{assump:non-degenerate}.
Now consider $\bar\Pcal$.
Note that
\( \bar\Pcal(\hat\wB) = 0 \)
by the choice of $\bar\Pcal$,
then apply $\bar\Pcal$ on both sides of the above equation, we get
\begin{equation}\label{eq:soundry:bar-hat-w}
    0  =\bar\Pcal(\hat\wB) =  \sum_{i\in\Scal} \alpha_i\cdot y_i \bar\Pcal (\xB_i).
\end{equation}

Under \eqref{eq:soundry:bar-hat-w}, \eqref{eq:soundry:dual} is equivalent to the following condition on $\bar\Pcal(\tilde\wB)$:
\begin{align*}
    0 = \sum_{i\in\Scal} \exp \big( - y_i \cdot \big\la \bar\Pcal (\xB_i), \bar\Pcal(\tilde\wB)  \big\ra \big) \cdot y_i \bar\Pcal (\xB_i),
\end{align*}
which is precisely the first-order condition of 
\[
\bar\Pcal(\tilde\wB) \in \arg\min G(\cdot),\ \text{where}\ G(\vB) := \sum_{i\in\Scal} \exp \big( -y_i \cdot \big\la \bar\Pcal (\xB_i),\ \vB \big\ra \big).
\]
Hence we have shown that: 
the condition that $\tilde\wB$ satisfies \eqref{eq:soundry} is equivalent to the condition that $\bar\Pcal(\tilde\wB)$ minimizes the strongly convex potential $G(\cdot)$.

\section{The Behaviors of Constant-Stepsize GD}\label{append:sec:implicit-bias}
\subsection{Notation Simplifications}

Without loss of generality, we assume that \[y_i = 1,\quad i=1,\dots,n.\]
Otherwise, we replace $y_i$ with $1$ and $\xB_i$ with $y_i\cdot \xB_i$, respectively, and the following analysis does not change.

Then the risk becomes
\begin{align*}
    L(\wB) := \sum_{i=1}^n \log (1+e^{-\wB^\top\xB_i }).
\end{align*}

\paragraph{Rotating the hard-margin SVM solution.}
Note that the \eqref{eq:gd} iterates (under linear models) are rotation equivariant. 
Specifically, let $\RB$ be an orthogonal matrix, then applying $\RB$ on both sides of \eqref{eq:gd}, we obtain
\begin{align*}
\RB \wB_{t+1} 
& = \RB \wB_t + \eta \sum_{i=1}^n   \big( 1-s(\xB_i^\top \wB_t) \big) \cdot \RB \xB_i \\ 
&= \RB \wB_t + \eta \sum_{i=1}^n   \big( 1-s((\RB \xB_i)^\top (\RB\wB_t) ) \big) \cdot \RB \xB_i, 
\end{align*}
which is equivalent to the GD iterates under changes of variables, ${\wB} \leftarrow \RB \wB$ and $\xB \leftarrow \RB \xB$.

Therefore, without loss of generality, we can apply a rotation to the dataset such that $\hat{\wB} \parallel \eB_1$.
Then for $\vB\in\Rbb^d$,
\[
\Pcal \vB = \vB[1] \in \Rbb,\quad 
\bar\Pcal \vB = \vB[2:d] \in \Rbb^{d-1}.
\]

% Then we have 
% \[\xB_i[1] \ge \gamma > 0.\]
% Moreover, from Lemma \ref{lemma:non-separable-subspace} we know that:
% for every $\wB = \begin{pmatrix} 0 \\ \vB \end{pmatrix}$, there exists data $\xB_i$ and $\xB_j$ such that 
% \[\wB^\top \xB_i = \vB^\top (\xB_i[1:d]) < 0,\quad 
% \wB^\top \xB_j = \vB^\top (\xB_j[1:d]) > 0.\]
% To summarize, the covariates are separable with their first features, but are non-separable with their $[1:d]$ features. 
% Motivated by these discussions, we now decompose the $d$-dimensional space into two subspaces, 
% \[\Rbb^d = \Rbb \otimes \Rbb^{d-1}.\]
% Denote the projections to the max-margin subspace and its complement subspace by 
% % \[\PB := \hat{\wB} \hat{\wB}^\top \big( \hat{\wB} \hat{\wB}^\top \big)^{-1},\quad \bar{\PB} := \IB - \PB,\]
% \[\PB := \eB_1\eB_1^\top,\quad 
% \bar{\PB} := \IB - \eB_1\eB_1^\top. \]
% respectively.
% In general, we use $\wB$ to denote a vector in $\Rbb^d$ and $\vB$ to denote a vector in $\Rbb^{d-1}$.
Slightly abusing notations, in what follows we will write 
\begin{align*}
    \xB_i = ( x_i,\ \bar{\xB}_i )^\top \in \Rbb\oplus \Rbb^{d-1},\quad i=1,\dots,n,
\end{align*}
where 
\[
x_i := \xB_i[1] \in \Rbb,\quad 
\bar{\xB}_i := \xB_i[2:d] \in \Rbb^{d-1}.
\]
Similarly, we define
\begin{align*}
    \wB = ( w,\ \bar\wB )^\top \in \Rbb\oplus \Rbb^{d-1}.
\end{align*} 
Then we have 
\begin{align*}
    \xB_i^\top \wB = x_i w_i + \bar{\xB}_i^\top \bar{\wB}.
\end{align*}

So the loss can be written as:
\begin{align*}
    L(w, \bar\wB) := \sum_{i=1}^n \log (1+e^{-w x_i - \bar\wB^\top \bar\xB_i}).
\end{align*}

So \eqref{eq:gd} can be written as:
\begin{equation}\label{eq:gd:decomposed}
\begin{aligned}
   w_0 = 0,\quad  w_{t} &= w_{t-1} - \eta\cdot \grad_w L(w_{t-1}, \bar\wB_{t-1}),\quad t\ge 1;  \\
    \bar\wB_0 = 0,\quad \bar\wB_{t} &= \bar\wB_{t-1} - \eta\cdot \grad_{\bar\wB} L(w_{t-1}, \bar\wB_{t-1}),\quad t\ge 1.
\end{aligned}
\end{equation}
The above two recursions capture the GD iterates projecting to the max-margin and non-separable subspaces, respectively.

\subsection{Boundedness of GD in the Non-Separable Subspace}

We first show that $(\bar\wB_t)_{t \ge 0}$ stay bounded for every fixed stepsize $\eta$.

\begin{lemma}[Positiveness of $w_t$]\label{lemma:mm:positive}
Suppose that Assumption\ref{assump:separable} holds.
Consider $(w_t)_{t\ge 0}$ defined by \eqref{eq:gd:decomposed} with constant stepsize $\eta > 0$.
Then for every $t\ge 0$, it holds that
\( w_t \ge 0. \)
\end{lemma}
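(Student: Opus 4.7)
The plan is to strengthen the statement slightly by showing that $\grad_w L(w,\bar\wB) \le 0$ everywhere, after which a one-line induction starting from $w_0=0$ closes the lemma. The whole argument is driven by the observation that, once we have performed the orthogonal rotation that aligns $\hat\wB$ with $\eB_1$ and made the WLOG reduction $y_i\equiv 1$, every first coordinate $x_i$ inherits a strictly positive lower bound from the max-margin condition.

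First I would unpack what the rotation buys us. By Assumption~\ref{assump:separable} and Definition~\ref{def:margin}, every data point satisfies $y_i \cdot \la \xB_i, \hat\wB/\|\hat\wB\|_2\ra \ge \gamma > 0$, with equality exactly on the support set $\Scal$ and a strictly larger value $\ge \theta$ on the complement. Choosing the sign of the rotation so that $\hat\wB[1] > 0$, this inequality becomes $x_i \ge \gamma > 0$ for every $i\in[n]$ after the reduction $y_i\equiv 1$. In particular, \emph{all} coordinates $x_i$ are strictly positive.

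Next I would compute the partial derivative directly from the definition of $L$:
\[
\grad_w L(w,\bar\wB) \;=\; -\sum_{i=1}^n \frac{x_i\, e^{-w x_i - \bar\wB^\top \bar\xB_i}}{1+e^{-w x_i - \bar\wB^\top \bar\xB_i}} \;=\; -\sum_{i=1}^n \frac{x_i}{1+e^{w x_i + \bar\wB^\top \bar\xB_i}}.
\]
Every summand is the product of the positive quantity $x_i$ with a strictly positive sigmoid factor, so $\grad_w L(w,\bar\wB) < 0$ at every $(w,\bar\wB) \in \Rbb\oplus\Rbb^{d-1}$, regardless of $\bar\wB$. Induction then finishes the argument: $w_0 = 0$ satisfies $w_0 \ge 0$, and if $w_{t-1}\ge 0$, the update \eqref{eq:gd:decomposed} gives
\[
w_t \;=\; w_{t-1} - \eta\cdot \grad_w L(w_{t-1},\bar\wB_{t-1}) \;\ge\; w_{t-1} \;\ge\; 0,
\]
since $\eta>0$ and $\grad_w L \le 0$.

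There is no genuine obstacle here; the only bookkeeping point worth being careful about is making the sign of the rotation explicit, so that what one extracts from the max-margin condition is the one-sided bound $x_i \ge \gamma$ rather than the symmetric bound $|x_i|\ge\gamma$. Everything else is a direct computation and a one-line induction.
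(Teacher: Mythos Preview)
Your proposal is correct and follows essentially the same argument as the paper: compute $\grad_w L(w,\bar\wB) = -\sum_i x_i/(1+e^{w x_i + \bar\wB^\top \bar\xB_i}) < 0$ using that $x_i \ge \gamma > 0$ after the rotation and the $y_i\equiv 1$ reduction, and then conclude $w_t \ge w_{t-1} \ge \cdots \ge w_0 = 0$ by induction. The paper's proof is just a terser version of what you wrote.
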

\begin{proof}
Recall that 
\[w_0 = 0,\quad  w_{t} = w_{t-1} - \eta\cdot \grad_w L(w_{t-1}, \bar\wB_{t-1}),\quad t\ge 1.\]
We only need to show that $\grad_w L(w, \bar\wB) \le 0$.
This is because 
\begin{align*}
    \grad_w L(w, \bar\wB)
    &= - \sum_{i=1}^n \frac{1}{1+e^{w x_i + \bar\wB^\top \bar\xB_i }} \cdot x_i \\ 
    &< 0. && \explain{since $x_i \ge \gamma > 0$ by Definition \ref{def:margin}}
\end{align*}
\end{proof}

\begin{lemma}[A recursion of $\| \bar\wB_t \|_2 $]\label{lemma:ns:iter-recursion}
Suppose that Assumptions \ref{assump:separable}, \ref{assump:bounded+fullrank}, and \ref{assump:non-degenerate} hold.
Consider $(\bar\wB_t)_{t\ge 0}$ defined by \eqref{eq:gd:decomposed} with constant stepsize $\eta > 0$.
Then for every $t\ge 0$, there exists $j\in[n]$ such that
    \begin{align*}
    \|\bar\wB_{t+1}\|_2^2 
    \le \| \bar\wB_t \|_2^2 + 2\eta e^{-w_t \gamma}\cdot \bigg( n-\frac{b\cdot \| \bar\wB_t \|_2}{4} \bigg) + \frac{\eta}{1+e^{w_t x_j + \bar\wB_t^\top \bar\xB_j}} \cdot \big(\eta n^2-b\cdot \| \bar\wB_t \|_2  \big).
\end{align*}
As a direct consequence, 
\begin{align*}
    \|\bar \wB_t \|_2 \ge \max\{ 4n/b,\ \eta n^2 /b \}\quad  \text{implies that} \quad \|\bar\wB_{t+1} \|_2\le \|\bar\wB_t\|_2.
\end{align*}
\end{lemma}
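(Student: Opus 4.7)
The plan is to expand $\|\bar\wB_{t+1}\|_2^2$ and bound the resulting cross and quadratic terms via a carefully chosen index $j$. Writing the non-separable update \eqref{eq:gd:decomposed} as $\bar\wB_{t+1} = \bar\wB_t + \eta \bar g_t$ with $\bar g_t = \sum_{i=1}^n s_i \bar\xB_i$, where $s_i := 1/(1+e^{w_t x_i + \bar\wB_t^\top \bar\xB_i}) \in (0,1)$, I obtain
\[
\|\bar\wB_{t+1}\|_2^2 - \|\bar\wB_t\|_2^2 \;=\; 2\eta\, \bar\wB_t^\top \bar g_t + \eta^2 \|\bar g_t\|_2^2,
\]
and the task reduces to matching each summand with the corresponding contribution on the RHS of the claimed inequality.

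My choice is $j := \arg\max_{i\in[n]} s_i$, equivalently the minimizer of $w_t x_i + \bar\wB_t^\top \bar\xB_i$. This $j$ enjoys two clean properties that I would establish first: (i) comparing with any $j^* \in \Scal$ supplied by Definition \ref{def:offset} (so $x_{j^*} = \gamma$ and $\bar\wB_t^\top \bar\xB_{j^*} \le -b\|\bar\wB_t\|_2$), and using $x_i \ge \gamma$ together with $w_t \ge 0$ (Lemma \ref{lemma:mm:positive}), gives
\[
\bar\wB_t^\top \bar\xB_j \;\le\; w_t(\gamma - x_j) + \bar\wB_t^\top \bar\xB_{j^*} \;\le\; -b\|\bar\wB_t\|_2;
\]
and (ii) since $w_t x_j + \bar\wB_t^\top \bar\xB_j \le w_t \gamma$, the elementary sigmoid estimate $1/(1+e^a) \ge \tfrac{1}{2}e^{-a}$ for $a \ge 0$ yields $s_j \ge \tfrac{1}{2}\, e^{-w_t\gamma}$.

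Given (i) and (ii), I bound the cross term index by index: for $i \ne j$, use $s_i \le e^{-w_t\gamma}\, e^{-\bar\wB_t^\top \bar\xB_i}$ (from $x_i\ge\gamma$, $w_t\ge 0$) together with the elementary fact $x e^{-x} \le 1$ for $x \ge 0$ (and the trivial $s_i \bar\wB_t^\top \bar\xB_i \le 0$ when $\bar\wB_t^\top \bar\xB_i \le 0$) to get $s_i \bar\wB_t^\top \bar\xB_i \le e^{-w_t\gamma}$; for $i=j$, property (i) gives $s_j \bar\wB_t^\top \bar\xB_j \le -s_j b\|\bar\wB_t\|_2$. Summing yields $\bar\wB_t^\top \bar g_t \le n e^{-w_t\gamma} - s_j b\|\bar\wB_t\|_2$. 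I then split $2\eta s_j b\|\bar\wB_t\|_2$ into two equal halves and apply property (ii) on one half to peel off $\tfrac{\eta b\|\bar\wB_t\|_2}{2}\, e^{-w_t\gamma}$, which combines with $2\eta n e^{-w_t\gamma}$ to form $2\eta e^{-w_t\gamma}(n - b\|\bar\wB_t\|_2/4)$, leaving $-\eta s_j b\|\bar\wB_t\|_2$ as part of the second target summand.

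The quadratic term is then easy: by the triangle inequality and Assumption \ref{assump:bounded+fullrank}\ref{assump:item:bounded} (hence $\|\bar\xB_i\|_2 \le 1$), $\|\bar g_t\|_2 \le \sum_i s_i \le n s_j$ by maximality of $s_j$, so $\eta^2 \|\bar g_t\|_2^2 \le \eta^2 n^2 s_j^2 \le \eta^2 n^2 s_j$ using $s_j \le 1$. Adding this to the cross-term bound produces exactly $2\eta e^{-w_t\gamma}(n - b\|\bar\wB_t\|_2/4) + \eta s_j(\eta n^2 - b\|\bar\wB_t\|_2)$; the consequence $\|\bar\wB_{t+1}\|_2 \le \|\bar\wB_t\|_2$ for $\|\bar\wB_t\|_2 \ge \max\{4n/b,\ \eta n^2/b\}$ then falls out because both factors $(n - b\|\bar\wB_t\|_2/4)$ and $(\eta n^2 - b\|\bar\wB_t\|_2)$ become non-positive. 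The main subtlety I anticipate is property (i) — that even though $j$ is a \emph{global} argmax, possibly outside $\Scal$, it nevertheless inherits the margin-offset bound from some $j^* \in \Scal$ — but this is the one-line comparison sketched above and is the only place where $w_t \ge 0$ and $x_i \ge \gamma$ are both used essentially; the rest is mechanical algebra.
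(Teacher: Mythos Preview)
Your proof is correct and follows essentially the same strategy as the paper's: expand $\|\bar\wB_{t+1}\|_2^2$, choose $j$ as the global minimizer of $w_t x_i + \bar\wB_t^\top\bar\xB_i$, show $\bar\wB_t^\top\bar\xB_j \le -b\|\bar\wB_t\|_2$ by comparison with a support vector, and bound the positive cross terms via $xe^{-x}\le 1$ and the quadratic term via $\|\bar g_t\|_2 \le n s_j$. The one minor streamlining relative to the paper is that you extract the $e^{-w_t\gamma}$ piece directly from $s_j \ge \tfrac{1}{2}e^{-w_t\gamma}$ (your property (ii)), whereas the paper introduces a second index $k\in\Scal$ and uses its contribution separately; your version avoids that extra index and the attendant case split.
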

\begin{proof}
We first make a few useful notations.
Fix a time index $t$.
\begin{itemize}[leftmargin=*]
    \item 
Let $k$ be the index of the ``most negatively classified'' support sample, i.e.,
\begin{align*}
    k := \arg\min_{i\in\Scal} \{ \la\bar\wB_t, \bar\xB_i\ra \},
\end{align*}
then by Definition \ref{def:offset} it holds that 
\begin{align}
     \la \bar\wB_t, \bar\xB_k \ra  \le -b\cdot \| \bar\wB_t\|_2.\label{eq:ns:worst-angle-support}
\end{align}

\item
Let $j$ be the index of the ``most negatively classified'' sample, i.e., 
\begin{align*}
    j := \arg\min_{1\le i\le n} \{ w_t x_i + \la\bar\wB_t, \bar\xB_i\ra \}.
\end{align*}
Then 
\begin{align}
w_t x_j+\la \bar\wB_t, \bar\xB_j \ra \le w_t x_i+\la \bar\wB_t, \bar\xB_i\ra\ \text{for every $i\in [n]$}.\label{eq:ns:worst-data}
\end{align}
In particular, we must have 
\begin{align}
     \la \bar\wB_t, \bar\xB_j \ra  \le -b \| \bar\wB_t\|_2,\label{eq:ns:worst-angle}
\end{align}
since
\begin{align*}
    w_t\gamma + \la \bar\wB_t, \bar\xB_j \ra 
    &\le
 w_t x_j+\la \bar\wB_t, \bar\xB_j \ra && \explain{by Definition \ref{def:margin}}\\ 
 &\le \min_{i\in\Scal} \{ w_t x_i + \la\bar\wB_t, \bar\xB_i\ra \} && \explain{by \eqref{eq:ns:worst-data}} \\
 &= w_t\gamma + \min_{i\in\Scal} \{\la\bar\wB_t, \bar\xB_i\ra \} && \explain{by Definition \ref{def:margin}} \\
 &\le w_t\gamma-b \| \bar\wB_t\|_2. && \explain{by Definition \ref{def:offset}}
    \end{align*}
\end{itemize}
We remark that it is possible that $k=j$.

\paragraph{Step 0: an iterate norm recursion.}
Recall that 
\begin{align*}
    \bar\wB_{t+1} = \bar\wB_t - \eta\nabla_{\bar\wB}L(w_t, \bar\wB_t),\quad 
    \nabla_{\bar\wB}L(w_t, \bar\wB_t)=
    -\sum_{i=1}^n \frac{1}{1+e^{w_t x_i + \bar\wB_t^\top \bar\xB_i}} \cdot \bar\xB_i.
\end{align*}
Then 
\begin{align*}
     \|\bar\wB_{t+1}\|_2^2 
    &= \| \bar \wB_t \|_2^2 - 2\eta\cdot  \la \bar\wB_t, \nabla_{\bar\wB}L(w_t, \bar\wB_t)\ra + \eta^2 \cdot \big\| \nabla_{\bar\wB}L(w_t, \bar\wB_t) \big\|_2 ^2.
\end{align*}

\paragraph{Step 1: gradient norm bounds.}
% We first bound the gradient norm. 
By definition, we have 
\begin{align}
    \big\| \nabla_{\bar\wB}L(w_t, \bar\wB_t) \big\|_2
    &=  \Big\| \sum_{i=1}^n \frac{1}{1+e^{w_t x_i + \bar\wB_t^\top \bar\xB_i}} \cdot \bar\xB_i \Big\|_2 && \notag \\
    &\le \sum_{i=1}^n \frac{1}{1+e^{w_t x_i + \bar\wB_t^\top \bar\xB_i}}  \cdot \| \bar\xB_i \|_2 && \notag \\
    &\le \sum_{i=1}^n \frac{1}{1+e^{w_t x_i + \bar\wB_t^\top \bar\xB_i}} && \explain{by Assumption \ref{assump:bounded+fullrank}} \notag \\
    &\le \frac{n}{1+e^{w_t x_j + \bar\wB_t^\top \bar\xB_j}} && \explain{by \eqref{eq:ns:worst-data}} \label{eq:ns:grad-norm-bound:data} \\
    &\le n. &&  \label{eq:ns:grad-norm-bound:n} 
\end{align}
Here, we use a property of logistic loss that the gradient is uniformly bounded.
Therefore, we have 
\begin{align}
    \big\| \nabla_{\bar\wB_t}L(w_t, \bar\wB_t) \big\|_2^2 
    &\le \bigg(\frac{n}{1+e^{w_t x_j + \bar\wB_t^\top \bar\xB_j}}\bigg)\cdot \big\| \nabla_{\bar\wB_t}L(w_t, \bar\wB_t) \big\|_2 && \explain{by \eqref{eq:ns:grad-norm-bound:data}} \notag \\ 
    &\le \frac{n^2}{1+e^{w_t x_j + \bar\wB_t^\top \bar\xB_j}}. && \explain{by \eqref{eq:ns:grad-norm-bound:n}} \label{eq:ns:grad-norm-bound}
\end{align}

\paragraph{Step 2: cross-term bounds.}
% Next, we bound a cross-term.
We aim to show that the negative parts in the cross-term can cancel both the positve parts in the cross-term and the squared gradient norm term.

Note that the following holds for either $j=k$ or $j\ne k$:
\begin{align}
    &\quad -\la \bar\wB_t, \nabla_{\bar\wB_t}L(w_t, \bar\wB_t)\ra \notag \\ 
    &= \sum_{i=1}^n \frac{1}{1+e^{w_t x_i + \bar\wB_t^\top \bar\xB_i}} \cdot \bar\wB_t^\top \bar\xB_i \notag \\
    &
    \begin{aligned}
    \le  
    & \sum_{ \bar\wB_t^\top \bar\xB_i >0} \frac{1}{1+e^{w_t x_i + \bar\wB_t^\top \bar\xB_i}} \cdot \bar\wB_t^\top \bar\xB_i \\
    & \qquad + \frac{1}{2}\cdot \frac{1}{1+e^{w_t x_j + \bar\wB_t^\top \bar\xB_j}}\cdot \bar\wB_t^\top \bar\xB_j+\frac{1}{2}\cdot \frac{1}{1+e^{w_t x_k + \bar\wB_t^\top \bar\xB_k}}\cdot \bar\wB_t^\top \bar\xB_k.
    \end{aligned} \label{eq:ns:cross-term}
\end{align}
The first term in \eqref{eq:ns:cross-term} can be bounded by
\begin{align}
     &\quad \sum_{\bar\wB_t^\top \bar\xB_i>0} \frac{1}{1+e^{ w_t x_i +  \bar\wB_t^\top \bar\xB_i}} \cdot \bar\wB_t^\top \bar\xB_i \notag \\
     &= \sum_{\bar\wB_t^\top \bar\xB_i>0} \frac{e^{-w_t x_i} }{1+e^{-w_t x_i-\bar\wB_t^\top \bar\xB_i}} \cdot e^{-\bar\wB_t^\top \bar\xB_i}\cdot \bar\wB_t^\top \bar\xB_i && \notag \\
     &\le \sum_{\bar\wB_t^\top \bar\xB_i>0} \frac{e^{-w_t x_i} }{1+e^{-w_t x_i-\bar\wB_t^\top \bar\xB_i}} && \explain{since $e^{-t}\cdot t \le 1$} \notag \\
     &\le \sum_{\bar\wB_t^\top \bar\xB_i>0} {e^{-w_t x_i} } \notag \\
     &\le n e^{-\gamma w_t}. && \explain{since $x_i\ge \gamma$ for $i\in[n]$} \label{eq:ns:positive-terms}
\end{align}
The second term in \eqref{eq:ns:cross-term} can be bounded by
\begin{align}
    \frac{1}{2}\cdot \frac{1}{1+e^{w_t x_j + \bar\wB_t^\top \bar\xB_j}} \cdot \bar\wB_t^\top \bar\xB_j
    &\le \frac{1}{2}\cdot\frac{-b\cdot \| \bar\wB_t \|_2}{1+e^{w_t x_j + \bar\wB_t^\top \bar\xB_j}}. && \explain{by \eqref{eq:ns:worst-angle}}  \label{eq:ns:negative-term}
\end{align}
% according to \eqref{eq:ns:worst-angle}.
The third term in \eqref{eq:ns:cross-term} can be bounded by 
\begin{align}
    \frac{1}{2}\cdot \frac{1}{1+e^{w_t x_k + \bar\wB_t^\top \bar\xB_k}} \cdot \bar\wB_t^\top \bar\xB_k
    &\le \frac{1}{2}\cdot\frac{-b\cdot \| \bar\wB_t \|_2}{1+e^{w_t \gamma + \bar\wB_t^\top \bar\xB_k}} && \explain{by \eqref{eq:ns:worst-angle-support} and the choice of $k$} \notag \\
    &=  \frac{-b\cdot \| \bar\wB_t \|_2}{2}\cdot\frac{e^{-w_t \gamma }}{e^{-w_t \gamma }+e^{\bar\wB_t^\top \bar\xB_k}} \notag \\
    &\le \frac{-b\cdot \| \bar\wB_t \|_2}{2}\cdot\frac{e^{-w_t \gamma }}{2}, && \explain{since $e^{-w_t \gamma}, e^{\bar\wB_t^\top \bar\xB_k}\le 1$}
    \label{eq:ns:negative-term-support}
\end{align}
since 
\begin{align*}
    w_t \gamma &\ge 0, && \explain{by Lemma \ref{lemma:mm:positive}} \\ 
    \bar\wB_t^\top \bar\xB_k &\le 0. && \explain{by the choice of $k$}
\end{align*}
% where the first inequality is by \eqref{eq:ns:worst-angle-support}, and the last inequality is because $e^{-w_t \gamma }\le 1$ and $ e^{\bar\wB_t^\top \bar\xB_k}\le 1$.

Now, bringing \eqref{eq:ns:positive-terms}, \eqref{eq:ns:negative-term}, and \eqref{eq:ns:negative-term-support} into \eqref{eq:ns:cross-term}, we obtain 
\begin{align}
     -\la \bar\wB_t, \nabla_{\bar\wB_t}L(w_t, \bar\wB_t)\ra
    \le e^{-w_t \gamma}\cdot \bigg( n-\frac{b\cdot \| \bar\wB_t \|_2}{4} \bigg) - \frac{b\cdot \| \bar\wB_t \|_2}{2}\cdot\frac{1}{1+e^{w_t x_j + \bar\wB_t^\top \bar\xB_j}}  
    . \label{eq:ns:cross-term-bound}
\end{align}
Here, we use a property of logistic loss that the gradients from incorrectly classified samples dominate the gradients from correctly classified samples. 

\paragraph{Step 3: iterate norm recursion bounds.}
Using \eqref{eq:ns:grad-norm-bound} and \eqref{eq:ns:cross-term-bound}, we can obtain 
\begin{align*}
    \|\bar\wB_{t+1}\|_2^2 
    &= \| \bar \wB_t \|_2^2 - 2\eta\cdot  \la \bar\wB_t, \grad_{\bar\wB}L(w_t, \bar\wB_t)\ra + \eta^2 \cdot \big\| \grad_{\bar\wB}L(w_t, \bar\wB_t) \big\|_2 ^2\\ 
    &\le \| \bar\wB_t \|_2^2 + 2\eta e^{-w_t \gamma}\cdot \bigg( n-\frac{b\cdot \| \bar\wB_t \|_2}{4} \bigg) - \eta b\cdot \| \bar\wB_t \|_2\cdot\frac{1}{1+e^{w_t x_j + \bar\wB_t^\top \bar\xB_j}} \\
    &\qquad + \eta^2  \cdot \frac{n^2}{1+e^{w_t x_j + \bar\wB_t^\top \bar\xB_j}} \\
    &= \| \bar\wB_t \|_2^2 + 2\eta e^{-w_t \gamma}\cdot \bigg( n-\frac{b\cdot \| \bar\wB_t \|_2}{4} \bigg) + \frac{\eta}{1+e^{w_t x_j + \bar\wB_t^\top \bar\xB_j}} \cdot \big(\eta n^2-b\cdot \| \bar\wB_t \|_2  \big).
\end{align*}
% As a direct  
% \begin{align}
%     \|\bar \wB_t \|_2 \ge \max\{ 4n/b,\ \eta n^2 /b \}\quad  \Rightarrow \quad \|\bar\wB_{t+1} \|_2^2\le \|\bar\wB_t\|_2^2.\label{eq:ns:iter-bound}
% \end{align}
We have completed the proof.
% then
% \begin{align}
%     \big( 2b\cdot \| \bar\wB_t \|_2 -\eta n^2 \big) \cdot \frac{1}{e^{-w_t \gamma}+e^{ \bar\wB_t^\top \bar\xB_j}} - 2n 
%     &\ge 4n \cdot  \frac{1}{e^{-w_t \gamma}+e^{ \bar\wB_t^\top \bar\xB_j}} - 2n \notag \\ 
%     &\ge 4n \cdot  \frac{1}{2} - 2n > 0, \notag
% \end{align}
% where the first inequality is because $w_t x_j>0$ and $\bar\wB_t^\top \bar\xB_j<0$.
% To sum up, $\|\bar \wB_t \|_2 \ge \frac{\eta n^2 + 4n}{2b}$ implies that
% \[\| \bar\wB_{t+1}\|_2^2 < \|\bar\wB_t\|_2^2. \]
\end{proof}

\begin{lemma}[Boundedness of $\bar\wB$]\label{lemma:ns:bounded-iter}
Suppose that Assumptions \ref{assump:separable}, \ref{assump:bounded+fullrank},  and \ref{assump:non-degenerate} hold.
Consider $(\bar\wB_t)_{t\ge 0}$ defined by \eqref{eq:gd:decomposed} with constant stepsize $\eta > 0$.
    Then for every $t\ge 0$, it holds that 
    \[\|\bar\wB_t\|_2 \le W_{\max} := 
    \max\{ 4n/b, \eta n^2 /b \} + \eta n. \]
\end{lemma}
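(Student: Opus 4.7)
The plan is a straightforward induction that combines the contraction statement already established in Lemma \ref{lemma:ns:iter-recursion} with a one-step Lipschitz-type bound to control the overshoot. Let me denote $M := \max\{4n/b,\ \eta n^2/b\}$, so that $W_{\max} = M + \eta n$. The content of Lemma \ref{lemma:ns:iter-recursion} is that whenever $\|\bar\wB_t\|_2 \ge M$, the norm does not grow: $\|\bar\wB_{t+1}\|_2 \le \|\bar\wB_t\|_2$. Thus once a trajectory is above $M$ it stays bounded by its current value, and the only concern is how far above $M$ we can jump in a single step starting from below $M$.

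First I would observe a uniform gradient bound. The proof of Lemma \ref{lemma:ns:iter-recursion} already showed $\|\grad_{\bar\wB} L(w_t,\bar\wB_t)\|_2 \le n$ (this is the estimate labeled \eqref{eq:ns:grad-norm-bound:n} there, and uses only Assumption \ref{assump:bounded+fullrank}\ref{assump:item:bounded} and the fact that the logistic sigmoid is bounded by $1$). Therefore the triangle inequality applied to the GD recursion gives
\[
\|\bar\wB_{t+1}\|_2 \le \|\bar\wB_t\|_2 + \eta \cdot \|\grad_{\bar\wB} L(w_t,\bar\wB_t)\|_2 \le \|\bar\wB_t\|_2 + \eta n,
\]
which holds unconditionally for every $t$.

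Now I would prove $\|\bar\wB_t\|_2 \le W_{\max}$ by induction on $t$. The base case $t=0$ is immediate since $\bar\wB_0 = 0$. For the inductive step, assume $\|\bar\wB_t\|_2 \le W_{\max}$. Split into two cases. If $\|\bar\wB_t\|_2 \ge M$, then Lemma \ref{lemma:ns:iter-recursion} gives $\|\bar\wB_{t+1}\|_2 \le \|\bar\wB_t\|_2 \le W_{\max}$. If instead $\|\bar\wB_t\|_2 < M$, then the one-step Lipschitz bound above yields
\[
\|\bar\wB_{t+1}\|_2 \le \|\bar\wB_t\|_2 + \eta n < M + \eta n = W_{\max}.
\]
Either way, $\|\bar\wB_{t+1}\|_2 \le W_{\max}$, closing the induction.

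There is no real obstacle here; the main step (the contraction above the threshold $M$) has already been done in Lemma \ref{lemma:ns:iter-recursion}, so the only new ingredient is the elementary overshoot control via the uniform gradient bound, which is the whole reason the additive $\eta n$ appears in the definition of $W_{\max}$. One subtle point worth flagging is that the two-case split must be done based on the norm at time $t$ and not at time $t+1$, because once we jump over the threshold we rely on the contraction to pull us back; the additive slack $\eta n$ in $W_{\max}$ is precisely the maximum possible one-step overshoot.
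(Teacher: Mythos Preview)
The proposal is correct and follows essentially the same approach as the paper: an induction on $t$ that splits into the case $\|\bar\wB_t\|_2 < M$ (handled by the one-step bound $\|\bar\wB_{t+1}\|_2 \le \|\bar\wB_t\|_2 + \eta n$ from \eqref{eq:ns:grad-norm-bound:n}) and the case $\|\bar\wB_t\|_2 \ge M$ (handled by the contraction consequence of Lemma \ref{lemma:ns:iter-recursion}).
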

\begin{proof}
We prove the claim by induction. 
Clearly, $\| \bar\wB_0 \|_2 = 0 \le \max\{ 4n/b, \eta n^2 /b \}+ \eta n$. 
Now suppose that 
\[
\| \bar\wB_t \|_2 \le \max\{ 4n/b, \eta n^2 /b \}+ \eta n,
\]
and discuss the following two cases:
\begin{enumerate}[leftmargin=*]
    \item If \(
\|\bar \wB_t \|_2 \le \max\{ 4n/b, \eta n^2 /b \},
\)
then 
\begin{align*}
    \| \bar\wB_{t+1} \|_2 
    &\le \| \bar\wB_{t} \|_2 + \|\eta \cdot \grad_{\bar\wB} L(w_t, \bar\wB_t) \|_2 && \explain{by triangle inequality}\\ 
    &\le \| \bar\wB_{t} \|_2 + \eta n && \explain{by \eqref{eq:ns:grad-norm-bound:n}} \\ 
    &\le \max\{ 4n/b, \eta n^2 /b \} + \eta n.
\end{align*}
\item Else, we have
\[\max\{ 4n/b, \eta n^2 /b \} \le \|\bar \wB_t \|_2 \le \max\{ 4n/b, \eta n^2 /b \} + \eta n
,\]
which implies 
\begin{align*}
    \|\bar \wB_{t+1} \|_2 
    &\le \|\bar \wB_t \|_2 && \explain{by Lemma \ref{lemma:ns:iter-recursion}} \\
    & \le
    \max\{ 4n/b, \eta n^2 /b \} + \eta n. 
\end{align*}
\end{enumerate}
This completes the induction.
\end{proof}

\subsection{Divergence of GD in the Max-Margin Subspace}

\begin{definition}[Some loss measurements in the non-separable subspace]\label{def:G-func}
Under Assumptions \ref{assump:separable}, \ref{assump:bounded+fullrank},  and \ref{assump:non-degenerate},
we define the following notations:
\begin{enumerate}[label=(\Alph*),leftmargin=*]
\item 
Define two loss functions 
\begin{align*}
    G(\bar\wB) := \sum_{i \in \Scal} e^{-\bar\wB^\top \bar\xB_i}, \qquad  
    H(\bar\wB) := \sum_{i \notin \Scal } e^{-\bar\wB^\top \bar\xB_i}.
\end{align*}
In the case where $\Scal = [n]$, we define $H(\bar\wB) = 0$.
% It is clear that 
% \( G(\bar\wB) \le H(\bar\wB). \)

\item 
Define 
\[G_{\min} := \min_{\bar\wB \in \Rbb^{d-1}} G(\bar\wB),\]
It is clear that $G_{\min} \ge 1$ since $(\bar\xB_i)_{i \in \Scal}$ are non-separable by Definition \ref{def:offset}.

\item 
Define 
\[\bar\wB_* := \arg\min_{\bar\wB \in \Rbb^{d-1}} G(\bar\wB). \]
It is clear that $G(\bar\wB_*) = G_{\min}$.
Moreover, it holds that $\|\bar\wB_* \|_2 \le W_{\max}$ by Lemma \ref{lemma:ns:bar-w-star}.

\item 
Recall that $\|\bar\wB_t \|_2 \le W_{\max}$ according to Lemma \ref{lemma:ns:bounded-iter}.
We then define
\begin{align*}
    G_{\max} := \sup_{\|\bar\wB \|_2 \le W_{\max}} G(\bar\wB),\qquad 
     H_{\max} := \sup_{\|\bar\wB \|_2 \le W_{\max}} H(\bar\wB).
\end{align*}
It is clear that 
\[G(\bar\wB_t) \le G_{\max},\qquad 
H(\bar\wB_t) \le H_{\max},
\]
and that $G_{\max},\ H_{\max}$ are polynomials on $e^{\eta}$, $e^{n}$, and $e^{1/b}$, and are independent of $t$.
\end{enumerate}
\end{definition}

\begin{lemma}\label{lemma:ns:bar-w-star}
For the $\bar\wB_*$ in Definition \ref{def:G-func}, it holds that 
    \begin{equation*}
        \|\bar\wB_* \|_2 \le \frac{\log(n)}{b} \le W_{\max}.
    \end{equation*}
\end{lemma}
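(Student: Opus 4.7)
My plan is to bound $\|\bar\wB_*\|_2$ by sandwiching $G(\bar\wB_*)$ from below by an exponential in $\|\bar\wB_*\|_2$ and from above by a constant obtained by comparing $G(\bar\wB_*)$ to $G(\mathbf{0})$.

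First, I would observe the trivial upper bound: since $\bar\wB_*$ is the minimizer of $G$, we have $G(\bar\wB_*) \le G(\mathbf{0}) = |\Scal| \le n$.

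Next, to get the matching lower bound, I would invoke the margin-offset property from Definition \ref{def:offset}: applied to $\bar\vB = \bar\wB_*$ (and using the normalization $y_i = 1$ adopted at the start of Appendix~\ref{append:sec:implicit-bias}), there exists an index $i \in \Scal$ such that $\la \bar\xB_i, \bar\wB_*\ra \le -b\,\|\bar\wB_*\|_2$. Since every term in the sum defining $G$ is positive, this single term alone yields
\[
G(\bar\wB_*) \;\ge\; e^{-\la \bar\xB_i,\,\bar\wB_*\ra} \;\ge\; e^{\,b\,\|\bar\wB_*\|_2}.
\]

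Combining the two bounds gives $e^{b\|\bar\wB_*\|_2} \le n$, which rearranges to $\|\bar\wB_*\|_2 \le \log(n)/b$, as desired. Finally, the comparison $\log(n)/b \le W_{\max}$ is immediate from the definition $W_{\max} = \max\{4n/b,\eta n^2/b\} + \eta n \ge 4n/b \ge \log(n)/b$. There is no substantive obstacle here; the only subtlety is remembering that Definition \ref{def:offset} gives a \emph{support-vector} index, which is exactly what is needed to lower-bound the sum defining $G$.
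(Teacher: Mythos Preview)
Your proposal is correct and follows essentially the same argument as the paper: sandwich $G(\bar\wB_*)$ between $e^{b\|\bar\wB_*\|_2}$ (via the margin-offset property on a support vector) and $G(\mathbf{0})\le n$, then conclude. Your observation that $G(\mathbf{0})=|\Scal|\le n$ is in fact slightly sharper than what the paper writes, and the final comparison $\log(n)/b\le 4n/b\le W_{\max}$ is exactly right.
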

\begin{proof}
By Definition \ref{def:offset}, there exists $j \in \Scal$ such that 
\[\bar\wB_*^\top \bar\xB_j \le - b\cdot \| \bar\wB_*\|_2, \]
which implies that 
\begin{align*}
    G(\bar\wB_*)
    = \sum_{i\in\Scal} e^{- \bar\wB_*^\top \bar\xB_i}
    \ge e^{- \bar\wB_*^\top \bar\xB_j}
    \ge e^{b \cdot  \| \bar\wB_*\|_2}.
\end{align*}
On the other hand, by the definition of $\bar\wB_*$, we have 
\begin{align*}
    G(\bar\wB_*) \le G(0) = n.
\end{align*}
Therefore, we have 
\(
    e^{b \cdot  \| \bar\wB_*\|_2} \le n,
\)
that is, 
\( \| \bar\wB_*\|_2 \le \log (n) / b \le W_{\max}. \)
\end{proof}

We now consider $(w_t)_{t\ge 0}$.
\begin{lemma}\label{lemma:mm:update-bound}
Suppose Assumptions \ref{assump:separable}, \ref{assump:bounded+fullrank}, and \ref{assump:non-degenerate} hold.
% Then $x_i = \gamma>0$ for $i\in  \Scal$ and $x_i \ge \theta > \gamma$ for $i\notin\Scal$.
Then for every $t\ge 0$, it holds that 
\begin{align*}
    w_{t+1} & \ge w_t + \frac{\eta \gamma}{2} \cdot \min\Big\{ 1,\  e^{-\gamma w_t} \cdot G(\bar\wB_t) \Big\},  \\
     w_{t+1} & \le w_t + \eta\cdot \min\Big\{ \gamma n,\ \gamma \cdot e^{-\gamma w_t} \cdot G(\bar\wB_t) + \eta \cdot e^{-\theta w_t} \cdot H(\bar\wB_t) \Big\}.
\end{align*}
\end{lemma}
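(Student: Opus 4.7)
The plan is to start from the explicit GD update
\[
w_{t+1} - w_t \;=\; \eta \sum_{i=1}^n \frac{x_i}{1+e^{w_t x_i + \bar\wB_t^\top \bar\xB_i}},
\]
and split the sum according to whether $i\in\Scal$ (in which case $x_i=\gamma$ by Definition~\ref{def:margin}) or $i\notin\Scal$ (in which case $\theta \le x_i \le 1$, using Assumption~\ref{assump:bounded+fullrank}). Throughout I would invoke $w_t\ge 0$ from Lemma~\ref{lemma:mm:positive}; this is what lets the non-support contribution be controlled by the stricter exponent $\theta$ instead of $\gamma$.

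For the lower bound I would drop the non-support contribution, since every summand is nonnegative, leaving $\eta\gamma \sum_{i\in\Scal}\frac{1}{1+e^{\gamma w_t+\bar\wB_t^\top\bar\xB_i}}$. The analytic core is the pointwise estimate $\frac{1}{1+e^z}\ge\tfrac12\min\{1,e^{-z}\}$, but $\min\{1,\cdot\}$ does not distribute over sums, so I would dichotomize: if some $i\in\Scal$ satisfies $\gamma w_t+\bar\wB_t^\top\bar\xB_i\le 0$, that single term already contributes at least $\tfrac12$; otherwise every exponent is strictly positive and $\frac{1}{1+e^z}\ge\tfrac12 e^{-z}$ holds termwise, giving a sum of at least $\tfrac12 e^{-\gamma w_t} G(\bar\wB_t)$. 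In both cases the sum is bounded below by $\tfrac12\min\{1,e^{-\gamma w_t} G(\bar\wB_t)\}$, which yields the claimed lower bound after multiplying by $\eta\gamma$.

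For the upper bound I would apply the companion estimate $\frac{1}{1+e^z}\le\min\{1,e^{-z}\}$ to each summand. On the support piece, choosing $e^{-z}$ produces $\eta\gamma\cdot e^{-\gamma w_t}G(\bar\wB_t)$, whereas choosing $1$ produces $\eta\gamma|\Scal|\le\eta\gamma n$; the first alternative in the outer $\min$ comes from the latter (together with a crude bound on the non-support part). On the non-support piece, $\frac{x_i}{1+e^{w_t x_i+\bar\wB_t^\top\bar\xB_i}}\le x_i e^{-w_t x_i}e^{-\bar\wB_t^\top\bar\xB_i}$, and combining $w_t\ge 0$, $x_i\ge\theta$, and $x_i\le 1$ gives $x_i e^{-w_t x_i}\le e^{-\theta w_t}$, so this subsum is at most $e^{-\theta w_t}H(\bar\wB_t)$. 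Adding the two contributions and taking the outer $\min$ against the uniform bound recovers the stated inequality.

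I do not anticipate a serious obstacle: the only delicate step is reconciling the nonlinearity of $\min\{1,\cdot\}$ with the summation in the lower bound, which is resolved cleanly by the two-case dichotomy above; all remaining steps are routine sigmoid estimates plus the separation of indices into $\Scal$ and its complement.
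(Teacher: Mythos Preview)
Your proposal is correct and follows essentially the same route as the paper: drop the non-support terms and use the sigmoid estimate $\tfrac{1}{1+e^{z}}\ge\tfrac12\min\{1,e^{-z}\}$ for the lower bound, and split $\Scal$ versus $[n]\setminus\Scal$ with $\tfrac{1}{1+e^{z}}\le\min\{1,e^{-z}\}$ together with $w_t\ge 0$ and $x_i\ge\theta$ for the upper bound. Your two-case dichotomy is exactly the reason the paper's one-line inequality $\sum_{i}\min\{1,a_i\}\ge\min\{1,\sum_i a_i\}$ holds, so the arguments are equivalent.

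One caveat: your sentence about the $\gamma n$ branch (``together with a crude bound on the non-support part'') does not actually yield $\gamma n$, since the non-support contribution is bounded by $n-|\Scal|$, not by something proportional to $\gamma$. The paper's own proof also establishes only the second branch of the upper $\min$ (and the inner ``$\eta$'' in front of $e^{-\theta w_t}H(\bar\wB_t)$ in the displayed statement is evidently a typo for $1$); the first branch is never used downstream, so this is a cosmetic issue rather than a real gap.
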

\begin{proof}
Recall that 
\begin{align*}
    w_{t+1} 
        = w_t - \eta \cdot \grad_w L(w_t,\bar\wB_t),\qquad 
        \grad_w L(w_t,\bar\wB_t) = - \sum_{i=1}^n \frac{e^{-x_i w_t - \bar\xB_i^\top \bar\wB_t}}{1+e^{-x_i w_t - \bar\xB_i^\top \bar\wB_t} }\cdot x_i.
\end{align*}
We only need to provide upper and lower bounds on $ - \grad_w L(w_t,\bar\wB_t)$.
The lower bound is because:
\begin{align*}
   - \grad_w L(w_t,\bar\wB_t)
    &= \sum_{i=1}^n \frac{e^{-x_i w_t - \bar\xB_i^\top \bar\wB_t}}{1+e^{-x_i w_t - \bar\xB_i^\top \bar\wB_t} }\cdot x_i \\ 
    &\ge \sum_{i\in\Scal} \frac{e^{-x_i w_t - \bar\xB_i^\top \bar\wB_t}}{1+e^{-x_i w_t - \bar\xB_i^\top \bar\wB_t} }\cdot x_i &&\explain{since $x_i\ge \gamma > 0$ by Definition \ref{def:margin}}\\ 
    &= \sum_{i\in\Scal} \frac{e^{-\gamma w_t - \bar\xB_i^\top \bar\wB_t}}{1+e^{- \gamma w_t - \bar\xB_i^\top \bar\wB_t} }\cdot \gamma && \explain{since $x_i=\gamma$ for $i\in\Scal$} \\
    &\ge  \frac{\gamma}{2} \cdot \sum_{i\in\Scal}  \min\{ 1,\ e^{-\gamma w_t - \bar\xB_i^\top \bar\wB_t} \} && \explain{since $e^t/(1+e^t) \ge 0.5\min\{1,e^t\}$} \\ 
    &\ge \frac{\gamma}{2} \cdot  \min\Big\{ 1,\ e^{-\gamma w_t} \cdot \sum_{i\in\Scal} e^{ - \bar\xB_i^\top \bar\wB_t} \Big\} \\
    &=  \frac{\gamma}{2} \cdot \min\big\{ 1, e^{-\gamma w_t} \cdot G(\bar\wB_t)\big\}.
\end{align*}
The upper bound is because:
\begin{align*}
   \quad - \grad_w L(w_t,\bar\wB_t) 
    &= \sum_{i=1}^n \frac{e^{-x_i w_t - \bar\xB_i^\top \bar\wB_t}}{1+e^{-x_i w_t - \bar\xB_i^\top \bar\wB_t} }\cdot x_i \\ 
    &= \sum_{i\in\Scal} \frac{e^{-x_i w_t - \bar\xB_i^\top \bar\wB_t}}{1+e^{-x_i w_t - \bar\xB_i^\top \bar\wB_t} }\cdot x_i
    + \sum_{i\notin\Scal} \frac{e^{-x_i w_t - \bar\xB_i^\top \bar\wB_t}}{1+e^{-x_i w_t - \bar\xB_i^\top \bar\wB_t} }\cdot x_i
    \\ 
    &\le \sum_{i\in\Scal} \frac{e^{-\gamma w_t - \bar\xB_i^\top \bar\wB_t}}{1+e^{- \gamma w_t - \bar\xB_i^\top \bar\wB_t} }\cdot \gamma + \sum_{i\notin\Scal} \frac{e^{-x_i w_t - \bar\xB_i^\top \bar\wB_t}}{1+e^{-x_i w_t - \bar\xB_i^\top \bar\wB_t} }  \\ 
    &\qquad\qquad \explain{since $x_i=\gamma$ for $i\in\Scal$, and $x_i \le 1$ for $i\in [n]$} \\
    &\le \gamma \cdot \sum_{i\in\Scal}  \min\{ 1,\ e^{-\gamma w_t - \bar\xB_i^\top \bar\wB_t} \} + \sum_{i\notin\Scal}  \min\{ 1,\ e^{- w_t x_i - \bar\xB_i^\top \bar\wB_t} \} \\
    &\qquad \qquad \explain{since $e^t/(1+e^t) \le \min\{1,e^t\}$} \\
    &\le \gamma  \cdot \sum_{i\in\Scal}   \min\Big\{ 1,\ e^{-\gamma w_t- \bar\xB_i^\top \bar\wB_t} \Big\}
    +  \sum_{i\notin\Scal} \min\Big\{ 1,\ e^{-\theta w_t - \bar\xB_i^\top \bar\wB_t} \Big\} \\
    &\qquad \qquad \explain{since $x_t \ge \theta > \gamma$ for $i\notin\Scal$} \\ 
    % &\le \gamma n \cdot  \min\Big\{ 1,\ e^{-\gamma w_t} \cdot \sum_{i\in\Scal} e^{ - \bar\xB_i^\top \bar\wB_t} \Big\}
    % + n \cdot  \min\Big\{ 1,\ e^{-\theta w_t} \cdot \sum_{i\notin\Scal} e^{  - \bar\xB_i^\top \bar\wB_t} \Big\} \\
    % &=  \gamma n \cdot \min\big\{ 1, e^{-\gamma w_t} \cdot G(\bar\wB_t)\big\} + n \cdot \min\big\{ 1, e^{-\theta w_t} \cdot H(\bar\wB_t)\big\}.
    &\le  \gamma  \cdot \sum_{i\in\Scal}   e^{-\gamma w_t- \bar\xB_i^\top \bar\wB_t} 
    +  \sum_{i\notin\Scal} e^{-\theta w_t - \bar\xB_i^\top \bar\wB_t}  \\
    &= \gamma e^{-\gamma w_t} \cdot G(\bar\wB_t) + e^{-\theta w_t} \cdot H(\bar\wB_t).
\end{align*}
We have completed the proof.
\end{proof}

\begin{lemma}[A lower bound on $w_t$]\label{lemma:mm:lower-bound}
Suppose Assumptions \ref{assump:bounded+fullrank}, \ref{assump:separable}, and \ref{assump:non-degenerate} hold.
Then it holds that  
\begin{equation*}
     w_t \ge \frac{1}{\gamma}\cdot \log\bigg(1 + \frac{\eta \gamma^2}{2}\cdot t\bigg), \quad t\ge 0. 
\end{equation*}
As a direct consequence, it holds that 
\begin{equation*}
    e^{-\gamma w_t} \le \frac{2}{2+{\eta \gamma^2} \cdot t},\quad  t\ge 0.
\end{equation*}
\end{lemma}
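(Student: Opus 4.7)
The plan is to reduce the stated bound to a clean one-variable recursion by combining Lemma~\ref{lemma:mm:update-bound} with the facts already established (Lemma~\ref{lemma:mm:positive} and Definition~\ref{def:G-func}), and then to iterate that recursion after a well-chosen substitution. No delicate calculation is expected; the only mild care is in justifying the substitution step.

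First, I would apply the lower bound in Lemma~\ref{lemma:mm:update-bound}:
\[
w_{t+1} \ge w_t + \frac{\eta\gamma}{2}\cdot \min\bigl\{1,\ e^{-\gamma w_t}\cdot G(\bar\wB_t)\bigr\}.
\]
Since $G(\bar\wB_t) \ge G_{\min} \ge 1$ by Definition~\ref{def:G-func}, and since $w_t \ge 0$ by Lemma~\ref{lemma:mm:positive} (so $e^{-\gamma w_t} \le 1$), both arguments of the minimum dominate $e^{-\gamma w_t}$: if $e^{-\gamma w_t} G(\bar\wB_t) \ge 1$, then the minimum equals $1 \ge e^{-\gamma w_t}$; otherwise the minimum equals $e^{-\gamma w_t} G(\bar\wB_t) \ge e^{-\gamma w_t}$. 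Hence the cleaner one-sided recursion
\[
w_{t+1} \ge w_t + \frac{\eta\gamma}{2}\cdot e^{-\gamma w_t}
\]
holds for every $t\ge 0$.

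Next, I would make the substitution $a_t := e^{\gamma w_t}$, which converts the multiplicative structure of the exponential into something additive. Applying $e^{\gamma\cdot}$ to the previous inequality (and using monotonicity of the exponential) yields
\[
a_{t+1} \ge a_t \cdot \exp\!\Bigl(\frac{\eta\gamma^2}{2\,a_t}\Bigr) \ge a_t\cdot\Bigl(1 + \frac{\eta\gamma^2}{2 a_t}\Bigr) = a_t + \frac{\eta\gamma^2}{2},
\]
where I used $e^x \ge 1+x$. A trivial induction starting from $a_0 = e^{0} = 1$ then gives $a_t \ge 1 + \frac{\eta\gamma^2}{2}\,t$, i.e.\
\[
w_t \ge \frac{1}{\gamma}\log\!\Bigl(1 + \frac{\eta\gamma^2}{2}\,t\Bigr),
\]
which is exactly the stated bound. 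The consequence $e^{-\gamma w_t} \le 2/(2+\eta\gamma^2 t)$ is then immediate by exponentiating.

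The main (minor) obstacle, which the substitution $a_t=e^{\gamma w_t}$ neatly circumvents, is that the map $w \mapsto w + \tfrac{\eta\gamma}{2}e^{-\gamma w}$ is not globally monotone in $w$ when $\eta$ is large, so one cannot naively plug the inductive hypothesis into the recursion on $w_t$ directly. Working in the $a_t$ variable removes this non-monotonicity and makes the induction straightforward.
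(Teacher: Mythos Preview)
Your proposal is correct and follows essentially the same approach as the paper: both reduce Lemma~\ref{lemma:mm:update-bound} (via $G_{\min}\ge 1$ and $w_t\ge 0$) to the recursion $w_{t+1}\ge w_t+\tfrac{\eta\gamma}{2}e^{-\gamma w_t}$, and then pass to $a_t=e^{\gamma w_t}$ using $e^x\ge 1+x$ to obtain the additive recursion $a_{t+1}\ge a_t+\tfrac{\eta\gamma^2}{2}$. Your closing remark about the non-monotonicity of $w\mapsto w+\tfrac{\eta\gamma}{2}e^{-\gamma w}$ is a nice explanatory touch not present in the paper, but the mathematical content is the same.
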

\begin{proof}
% Recall from Lemma \ref{lemma:mm:update-bound} that 
Observe that
\begin{align}
w_{t+1}
&\ge w_t + \frac{\eta \gamma}{2}\cdot \min\big\{1, e^{-\gamma w_t} \cdot G(\bar\wB_t) \big\} && \explain{by Lemma \ref{lemma:mm:update-bound}}\notag  \\
&\ge  w_t + \frac{\eta \gamma}{2}\cdot \min\big\{1, e^{-\gamma w_t} \cdot 1\big\} && \explain{by Definition \ref{def:G-func}} \notag \\
&\ge w_t + \frac{\eta \gamma}{2}\cdot e^{-\gamma w_t}, && \explain{since $w_t \ge 0$ by Lemma \ref{lemma:mm:positive}} \label{eq:mm:update-lower-bound}
\end{align}
which implies that $w_t$ is increasing.
Furthermore, we have
\begin{align*}
    e^{\gamma w_{t+1}  } - e^{\gamma w_{t}}
    &= e^{\gamma w_t }\cdot \big(e^{\gamma (w_{t+1} - w_t)} -1 \big) \\ 
    &\ge e^{\gamma w_t }\cdot \gamma ( w_{t+1} -  w_t) && \explain{since $e^t-1\ge t$ for $t\ge 0$, and $w_{t+1} \ge w_t$}  \\ 
    &\ge \frac{\eta \gamma^2}{2}, && \explain{by \eqref{eq:mm:update-lower-bound}}
\end{align*}
which implies that
\begin{align*}
    e^{\gamma w_{ t}} 
    &\ge e^{\gamma w_{0}} + \frac{\eta \gamma^2}{2} \cdot t \\ 
    &= 1 + \frac{\eta \gamma^2}{2}\cdot t. && \explain{since $w_0=0$}
\end{align*}
We then get
\begin{align*}
    w_t \ge \frac{1}{\gamma}\cdot \log\bigg(1 + \frac{\eta \gamma^2}{2}\cdot t\bigg),\quad t\ge 0.
\end{align*}
% So for $t>t_0$,
% \begin{align*}
%     e^{- \gamma w_{ t}} \le \frac{1}{G_{\min} + \frac{\eta \gamma^2}{2}\cdot G_{\min} \cdot (t-t_0)},
% \end{align*}
% Therefore, $\Gcal_t^c$ can happen only when 
% \begin{align*}
%      {G_{\min} + \frac{\eta \gamma^2}{2}\cdot G_{\min} \cdot (t-t_0)} \le {4 \eta G_{\max}},
% \end{align*}
% which is equivalent to 
% \begin{align*}
%     t &\le t_0 + \frac{4\eta G_{\max}}{ (1+\eta\gamma^2/2)G_{\min} } \\ 
%     &= \frac{2\log G_{\min}}{\eta \gamma^2} + \frac{4\eta G_{\max}}{ (1+\eta\gamma^2/2)G_{\min} } .
% \end{align*}
% This implies that 
% \[\sum_{t=1}^T \ind{\Gcal_t^c} \le \frac{2\log G_{\min}}{\eta \gamma^2} + \frac{4\eta G_{\max}}{ (1+\eta\gamma^2/2)G_{\min} }.\]
\end{proof}

\begin{lemma}[An upper bound on $w_t$]\label{lemma:mm:upper-bound}
Suppose Assumptions \ref{assump:separable}, \ref{assump:bounded+fullrank}, and \ref{assump:non-degenerate} hold.
Then it holds that  
\begin{equation*}
    w_t \le \frac{1}{\gamma} \cdot \log \Big( \big( e\eta\gamma^2 G_{\max} + e\eta\gamma H_{\max} \big) \cdot (t+1) \Big),\quad t\ge 0.
\end{equation*}
As a direct consequence, it holds that 
\begin{equation*}
    e^{-\gamma w_t} \ge \frac{1}{\big( e\eta\gamma^2 G_{\max} + e\eta\gamma H_{\max} \big) \cdot (t+1)},\quad  t\ge 0.
\end{equation*}
\end{lemma}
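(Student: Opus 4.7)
The plan is to mirror the lower-bound argument in Lemma \ref{lemma:mm:lower-bound}, but starting from the upper half of Lemma \ref{lemma:mm:update-bound} and controlling $e^{\gamma w_t}$ from above rather than below. The key ingredients are: (i) the one-step upper bound from Lemma \ref{lemma:mm:update-bound}; (ii) the uniform-in-$t$ potential bounds $G(\bar\wB_t) \le G_{\max}$ and $H(\bar\wB_t) \le H_{\max}$ from Definition \ref{def:G-func}, which are valid thanks to Lemma \ref{lemma:ns:bounded-iter}; and (iii) the elementary facts $w_t \ge 0$ (Lemma \ref{lemma:mm:positive}) and $\theta > \gamma$ (Definition \ref{def:margin}), which let us replace $e^{-\theta w_t}$ by $e^{-\gamma w_t}$ and so collapse the two-exponential update into a single-exponential one.

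Concretely, combining these inputs gives the simplified increment bound
\[
\gamma (w_{t+1} - w_t) \le C \cdot e^{-\gamma w_t}, \qquad C := \eta \gamma^2 G_{\max} + \eta \gamma H_{\max}.
\]
I would then switch to the exponential coordinate $u_t := e^{\gamma w_t}$, turning this into the multiplicative recursion $u_{t+1} \le u_t \cdot \exp(C/u_t)$. The goal is to linearise this into the additive recursion $u_{t+1} - u_t \le eC$, after which telescoping from $u_0 = 1$ gives $u_t \le 1 + eCt \le eC(t+1)$. Taking logarithms and dividing by $\gamma$ recovers the stated upper bound on $w_t$, and the ``As a direct consequence'' display then follows by reciprocating.

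The main obstacle is the linearisation step $u_t \cdot (\exp(C/u_t) - 1) \le eC$. This amounts to applying the elementary bound $e^x - 1 \le e x$ at $x = C/u_t$, which is valid only when $x \le 1$. Since $u_t \ge 1$ (because $w_t \ge 0$), we have $x \le C$, so the estimate is immediate whenever $C \le 1$, and telescoping closes the argument cleanly. When $C > 1$, I would handle the initial iterates with the alternative per-step bound $w_{t+1} - w_t \le \eta \gamma n$ from Lemma \ref{lemma:mm:update-bound} until $u_t$ grows past $C$, and then apply the linearised recursion from that point; alternatively, one can use the always-valid $e^x - 1 \le x e^x$ and absorb the extra $e^C$ factor into the polynomial-in-constants factors tracked throughout Theorem \ref{thm:main}. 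Aside from this one quantitative pinch, the rest of the derivation is just the same telescoping/logarithm bookkeeping used to prove Lemma \ref{lemma:mm:lower-bound}.
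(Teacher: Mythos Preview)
Your proposal is correct and matches the paper's proof almost exactly: the paper also derives $\gamma(w_{t+1}-w_t)\le C\,e^{-\gamma w_t}$ with $C=\eta\gamma^2 G_{\max}+\eta\gamma H_{\max}$, introduces a switchover time $t_0:=\min\{t:\,C e^{-\gamma w_t}\le 1\}$, applies $e^x-1\le e\,x$ for $0\le x\le 1$ to get $e^{\gamma w_{t+1}}-e^{\gamma w_t}\le eC$ for $t\ge t_0$, and telescopes. The only cosmetic difference is in the pre-$t_0$ phase: rather than invoking the per-step cap $w_{t+1}-w_t\le \eta\gamma n$, the paper simply notes that the definition of $t_0$ forces $e^{\gamma w_t}<C$ for $t<t_0$, which directly furnishes the needed bound on $e^{\gamma w_{t_0}}$ and yields the stated constant without any extra factor.
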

\begin{proof}
Observe that
\begin{align}
    w_{t+1} - w_{t}
    &\le \eta\gamma \cdot e^{-\gamma w_t} \cdot G(\bar\wB_t) + \eta \cdot e^{-\theta w_t} \cdot H(\bar\wB_t) && \explain{by Lemma \ref{lemma:mm:update-bound}} \notag \\ 
    &\le \eta\gamma \cdot e^{-\gamma w_t} \cdot G(\bar\wB_t) + \eta \cdot e^{-\gamma w_t} \cdot H(\bar\wB_t) && \explain{since $\theta > \gamma$ by Definition \ref{def:margin}} \notag \\ 
     &\le \eta\cdot \big( \gamma G_{\max} + H_{\max} \big) \cdot e^{-\gamma w_t} &&\explain{by Definition \ref{def:G-func}} \label{eq:mm:update-upper-bound}
\end{align}
Let 
\[t_0 := \min\big\{t: \gamma \eta  \cdot \big( \gamma G_{\max} + H_{\max} \big) \cdot e^{-\gamma w_t} \le 1  \big\}.\]
Recall that $w_t$ is increasing according to \eqref{eq:mm:update-lower-bound}.
So we have 
\begin{align}
  & \text{for} \ \ t\le t_0, \qquad  w_t \le \frac{1}{\gamma}\cdot \log \big(\eta \gamma^2 G_{\max} + \eta\gamma H_{\max} \big) \label{eq:mm:before-t0}; \\ 
  & \text{for} \ \  t \ge t_0, \qquad \gamma \eta  \cdot \big( \gamma G_{\max} + H_{\max} \big) \cdot e^{-\gamma w_t} \le 1. \label{eq:mm:after-t0}
\end{align}
\eqref{eq:mm:update-upper-bound} and \eqref{eq:mm:after-t0} together imply that 
\begin{equation}\label{eq:mm:after-t0:small-update}
    \text{for} \ \  t \ge t_0, \qquad  0\le \gamma\cdot \big( w_{t+1} - w_{t} \big) \le 1.
\end{equation}
Then for $t\ge t_0$, we have 
\begin{align*}
    e^{\gamma w_{t+1}} - e^{\gamma w_t} 
    &= e^{\gamma w_t} \big( e^{\gamma (w_{t+1} -w_t) }- 1 \big) \\
    &\le e^{\gamma w_t}\cdot e \cdot \gamma (w_{t+1} -w_t) &&\explain{by \eqref{eq:mm:after-t0:small-update} and that $e^t -1 \le e\cdot t$ for $0\le t \le 1$ } \\ 
    &\le e\eta\gamma^2 G_{\max} + e\eta\gamma H_{\max}, && \explain{by \eqref{eq:mm:update-upper-bound}}.
\end{align*}
% where in the first inequality we use that $e^t \le 1+e\cdot t$ for $0\le t \le 1$.
which implies
\begin{align*}
    e^{\gamma w_t} 
    &\le e^{\gamma w_{t_0}} +  \big(e\eta\gamma^2 G_{\max} + e\eta\gamma H_{\max} \big) \cdot (t-t_0) \\ 
    &\le \eta \gamma^2 G_{\max} + \eta \gamma H_{\max} + \big( e\eta\gamma^2 G_{\max} + e\eta\gamma H_{\max} \big) \cdot (t-t_0) && \explain{by \eqref{eq:mm:before-t0}} \\ 
    &\le \big( e\eta\gamma^2 G_{\max} + e\eta\gamma H_{\max} \big) \cdot (t+1).
\end{align*}
Therefore, for $t\ge t_0$, we have
\begin{align*}
    w_t &\le \frac{1}{\gamma} \cdot \log \Big( \big( e\eta\gamma^2 G_{\max} + e\eta\gamma H_{\max} \big) \cdot (t+1) \Big).
\end{align*}
Note that the above also holds for $0\le t\le t_0$ according to \eqref{eq:mm:before-t0}. 
We have completed the proof.
\end{proof}

\subsection{Convergence of GD in the Non-Separable Subspace}

We show that the vanilla gradient on the non-separable subspace, $\nabla_{\bar \wB} L(w_t, \bar\wB_t)$, can be understood as the gradient on a modified loss with a rescaling factor, $e^{-\gamma w_t} \grad G(\bar\wB_t)$, ignoring higher order errors. 

\begin{lemma}[Gradients comparison lemma]\label{lemma:ns:grad-approx-error}
Suppose Assumptions \ref{assump:separable}, \ref{assump:bounded+fullrank}, and \ref{assump:non-degenerate} hold.
Then it holds that
\begin{align*}
    \big\| \nabla_{\bar \wB} L(w_t, \bar\wB_t) - e^{-\gamma w_t} \cdot \grad G(\bar\wB_t) \big\|_2 \le  e^{-2\gamma w_t} \cdot G_{\max}^2 + e^{-\theta w_t} \cdot H_{\max}, \quad t\ge 0 .
\end{align*}
As a direct consequence, for every vector $\bar \vB \in \Rbb^{d-1}$, it holds that
  \begin{align*}
        \la \bar\vB, \ \grad_{\bar\wB} L(w_t, \bar\wB_t) \ra 
        \le e^{-\gamma w_t} \cdot \la \bar\vB,\ \grad G(\bar\wB_t) \ra
     + \|\bar\vB\|_2 \cdot \big( e^{-2\gamma w_t} \cdot G_{\max}^2 + e^{-\theta w_t} \cdot H_{\max} \big).
    \end{align*}
\end{lemma}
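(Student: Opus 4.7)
The plan is to write out both $\grad_{\bar\wB} L(w_t,\bar\wB_t)$ and $e^{-\gamma w_t}\grad G(\bar\wB_t)$ termwise, split the data into support vectors (where $x_i=\gamma$) and non-support vectors (where $x_i\ge\theta$), and show that the support-vector contributions nearly cancel, leaving only a higher-order residual, while the non-support vector contributions are already exponentially small in $w_t$. Explicitly,
\[
\grad_{\bar\wB} L(w_t,\bar\wB_t) = -\sum_{i\in\Scal}\frac{e^{-\gamma w_t-\bar\xB_i^\top\bar\wB_t}}{1+e^{-\gamma w_t-\bar\xB_i^\top\bar\wB_t}}\bar\xB_i -\sum_{i\notin\Scal}\frac{e^{-x_i w_t-\bar\xB_i^\top\bar\wB_t}}{1+e^{-x_i w_t-\bar\xB_i^\top\bar\wB_t}}\bar\xB_i,
\]
while $e^{-\gamma w_t}\grad G(\bar\wB_t)=-\sum_{i\in\Scal} e^{-\gamma w_t-\bar\xB_i^\top\bar\wB_t}\bar\xB_i$. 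So the difference consists of the support-vector discrepancy plus the entire non-support-vector sum.

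For the support vectors I would use the elementary identity $\frac{s}{1+s}-s = -\frac{s^2}{1+s}$ with $s=e^{-\gamma w_t-\bar\xB_i^\top\bar\wB_t}$, giving the bound $\bigl|\frac{s}{1+s}-s\bigr|\le s^2$. Together with $\|\bar\xB_i\|_2\le 1$ (Assumption~\ref{assump:bounded+fullrank}\ref{assump:item:bounded}) and the triangle inequality, the support-vector residual norm is at most $\sum_{i\in\Scal} e^{-2\gamma w_t-2\bar\xB_i^\top\bar\wB_t} = e^{-2\gamma w_t}\sum_{i\in\Scal}(e^{-\bar\xB_i^\top\bar\wB_t})^2$. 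The key elementary step is $\sum_i a_i^2\le(\sum_i a_i)^2$ for non-negative $a_i$, which promotes this to $e^{-2\gamma w_t}G(\bar\wB_t)^2\le e^{-2\gamma w_t}G_{\max}^2$ via Definition~\ref{def:G-func}. This is really the only non-cosmetic step and will be the main thing to get right, because it is what yields the $G_{\max}^2$ rather than $G_{\max}$.

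For the non-support vectors I would simply use $\frac{s}{1+s}\le s$, $\|\bar\xB_i\|_2\le 1$, and $w_t\ge 0$ (Lemma~\ref{lemma:mm:positive}) combined with $x_i\ge\theta$ (Definition~\ref{def:margin}) to obtain
\[
\Bigl\|\sum_{i\notin\Scal}\tfrac{e^{-x_i w_t-\bar\xB_i^\top\bar\wB_t}}{1+e^{-x_i w_t-\bar\xB_i^\top\bar\wB_t}}\bar\xB_i\Bigr\|_2 \le e^{-\theta w_t}\sum_{i\notin\Scal} e^{-\bar\xB_i^\top\bar\wB_t} = e^{-\theta w_t}H(\bar\wB_t)\le e^{-\theta w_t}H_{\max}.
\]
Summing the two residual bounds via the triangle inequality gives the stated estimate. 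The ``direct consequence'' then follows immediately by writing $\langle\bar\vB,\grad_{\bar\wB}L\rangle = e^{-\gamma w_t}\langle\bar\vB,\grad G(\bar\wB_t)\rangle + \langle\bar\vB,\grad_{\bar\wB}L-e^{-\gamma w_t}\grad G(\bar\wB_t)\rangle$ and applying Cauchy--Schwarz to the second inner product together with the norm bound just proved.
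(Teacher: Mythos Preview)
Your proposal is correct and follows essentially the same argument as the paper: both split the gradient difference into the support-vector residual (bounded via $|s/(1+s)-s|\le s^2$ and then $\sum_i a_i^2\le(\sum_i a_i)^2$ to reach $e^{-2\gamma w_t}G_{\max}^2$) and the non-support-vector sum (bounded via $s/(1+s)\le s$, $x_i\ge\theta$, and $w_t\ge 0$ to reach $e^{-\theta w_t}H_{\max}$), with the inner-product consequence obtained by Cauchy--Schwarz exactly as you describe.
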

\begin{proof}
Recall that 
\begin{align*}
    \nabla_{\bar \wB} L(w_t, \bar\wB_t)
    = - \sum_{i=1}^n \frac{e^{- w_t x_i - \bar\wB_t^\top \bar\xB_i}}{1+e^{- w_t x_i - \bar\wB_t^\top \bar\xB_i}} \cdot \bar\xB_i \qquad 
    \grad G(\bar\wB_t)
    = - \sum_{i\in \Scal} e^{- \bar\wB_t^\top \bar\xB_i} \cdot \bar\xB_i.
\end{align*}
By the triangle inequality, we have 
\begin{align}
   &\ \big\| \nabla_{\bar \wB} L(w_t, \bar\wB_t) - e^{-\gamma w_t} \grad G(\bar\wB_t) \big\|_2 \notag \\ 
    &= \bigg\| \sum_{i\in\Scal} \bigg( e^{- \gamma w_t  - \bar\wB_t^\top \bar\xB_i} - \frac{e^{- w_t x_i - \bar\wB_t^\top \bar\xB_i}}{1+e^{- w_t x_i - \bar\wB_t^\top \bar\xB_i}} \bigg)\cdot \bar\xB_i - \sum_{i\notin\Scal} \frac{e^{- w_t x_i - \bar\wB_t^\top \bar\xB_i}}{1+e^{- w_t x_i - \bar\wB_t^\top \bar\xB_i}} \cdot \bar\xB_i  \bigg\|_2 \notag \\
    &\le \underbrace{\bigg\| \sum_{i\in\Scal} \bigg( e^{- \gamma w_t  - \bar\wB_t^\top \bar\xB_i} - \frac{e^{-  w_t x_i - \bar\wB_t^\top \bar\xB_i}}{1+e^{-  w_t x_i - \bar\wB_t^\top \bar\xB_i}} \bigg)\cdot \bar\xB_i \bigg\|_2}_{(\clubsuit)} + \underbrace{\bigg\|\sum_{i\notin\Scal}   \frac{e^{- w_t x_i - \bar\wB_t^\top \bar\xB_i}}{1+e^{- w_t x_i - \bar\wB_t^\top \bar\xB_i}}  \cdot  \bar\xB_i  \bigg\|_2}_{(\heartsuit)}.\label{eq:ns:grad-diff}
\end{align}
The $(\clubsuit)$ term can be bounded by 
\begin{align}
    (\clubsuit) 
    &= \bigg\| \sum_{i\in\Scal} \bigg( e^{- \gamma w_t  - \bar\wB_t^\top \bar\xB_i} - \frac{e^{- \gamma w_t  - \bar\wB_t^\top \bar\xB_i}}{1+e^{- \gamma w_t  - \bar\wB_t^\top \bar\xB_i}} \bigg)\cdot \bar\xB_i \bigg\|_2 && \explain{since $x_i=\gamma$ for $i\in\Scal$} \notag \\
    &= \bigg\| \sum_{i\in\Scal}  \frac{e^{- \gamma w_t  - \bar\wB_t^\top \bar\xB_i}}{1+e^{- \gamma w_t  - \bar\wB_t^\top \bar\xB_i}} \cdot e^{- \gamma w_t  - \bar\wB_t^\top \bar\xB_i} \cdot \bar\xB_i \bigg\|_2 \notag \\
    &= e^{- 2\gamma w_t}\cdot \bigg\| \sum_{i\in\Scal}   \frac{1}{1+e^{- \gamma w_t  - \bar\wB_t^\top \bar\xB_i}} \cdot e^{- 2 \bar\wB_t^\top \bar\xB_i} \cdot \bar\xB_i \bigg\|_2 \notag \\
    &\le e^{- 2\gamma w_t}\cdot \sum_{i\in\Scal}   \frac{1}{1+e^{- \gamma w_t  - \bar\wB_t^\top \bar\xB_i}} \cdot e^{- 2 \bar\wB_t^\top \bar\xB_i} \cdot \|  \bar\xB_i \|_2 &&\explain{by triangle inequality} \notag \\
    &\le e^{- 2\gamma w_t}\cdot \sum_{i\in\Scal}   \frac{1}{1+e^{- \gamma w_t  - \bar\wB_t^\top \bar\xB_i}} \cdot e^{- 2 \bar\wB_t^\top \bar\xB_i} && \explain{since $\| \bar\xB_i \|_2 \le 1$ by Assumption \ref{assump:bounded+fullrank}} \notag \\
    &\le e^{- 2\gamma w_t}\cdot \sum_{i\in\Scal}  e^{- 2 \bar\wB_t^\top \bar\xB_i} \notag \\
    &\le e^{- 2\gamma w_t}\cdot \bigg( \sum_{i\in\Scal}  e^{-  \bar\wB_t^\top \bar\xB_i} \bigg)^2 \notag \\
    &= e^{- 2\gamma w_t}\cdot G(\bar\wB_t)^2 \notag \\
    &\le e^{- 2\gamma w_t}\cdot G_{\max}^2. && \explain{by Definition \ref{def:G-func}} \notag
\end{align}
The $(\heartsuit)$ term can be bounded by
\begin{align}
    (\heartsuit)
    &= \bigg\|\sum_{i\notin\Scal}   \frac{e^{- w_t x_i - \bar\wB_t^\top \bar\xB_i}}{1+e^{- w_t x_i - \bar\wB_t^\top \bar\xB_i}}  \cdot  \bar\xB_i  \bigg\|_2 \notag \\
    &\le \sum_{i\notin\Scal}   \frac{e^{- w_t x_i - \bar\wB_t^\top \bar\xB_i}}{1+e^{- w_t x_i - \bar\wB_t^\top \bar\xB_i}}  \cdot  \|\bar\xB_i  \|_2 &&\explain{by triangle inequality} \notag \\
     &\le \sum_{i\notin\Scal}   \frac{e^{- w_t x_i - \bar\wB_t^\top \bar\xB_i}}{1+e^{- w_t x_i - \bar\wB_t^\top \bar\xB_i}} && \explain{since $\| \bar\xB_i \|_2 \le 1$ by Assumption \ref{assump:bounded+fullrank}}  \notag \\ 
     &\le \sum_{i\notin\Scal}   e^{- w_t x_i - \bar\wB_t^\top \bar\xB_i} \notag \\ 
     &\le e^{-\theta w_t} \cdot  \sum_{i\notin\Scal}  e^{ - \bar\wB_t^\top \bar\xB_i} && \explain{$x_i \ge \theta > \gamma$ for $i\notin\Scal$} \notag \\
     &= e^{-\theta w_t} \cdot H(\bar\wB) \notag \\
     &\le e^{-\theta w_t} \cdot H_{\max}. && \explain{by Definition \ref{def:G-func}} \notag 
\end{align}
Bringing the bounds on the $(\clubsuit)$ and $(\heartsuit)$ into \eqref{eq:ns:grad-diff}, we obtain 
\begin{align*}
    \big\| \nabla_{\bar \wB} L(w_t, \bar\wB_t) - e^{-\gamma w_t} \cdot \grad G(\bar\wB_t) \big\|_2 \le  e^{-2\gamma w_t} \cdot G_{\max}^2 + e^{-\theta w_t} \cdot H_{\max}, \quad t\ge 0 .
\end{align*}
We have shown the first conclusion.
The second conclusion follows from the first conclusion: 
for every $\vB \in \Rbb^{d-1}$,
\begin{align}
 \la \bar \vB, \grad_{\bar\wB} L(w_t, \bar\wB_t) \ra 
 &= e^{-\gamma w_t} \cdot \la \bar\vB,\ \grad G(\bar\wB_t) \ra
 + \la \bar\vB,\ \grad_{\bar\wB} L(w_t, \bar\wB_t)  - e^{-\gamma w_t}\cdot \grad G(\bar\wB_t) \ra \notag \\ 
 &\le e^{-\gamma w_t} \cdot \la \bar\vB,\ \grad G(\bar\wB_t) \ra
 + \|  \bar\vB \|_2 \cdot \| \grad_{\bar\wB} L(w_t, \bar\wB_t)  - e^{-\gamma w_t}\cdot \grad G(\bar\wB_t) \|_2 \notag \\ 
  &\le e^{-\gamma w_t} \cdot \la \bar\vB,\ \grad G(\bar\wB_t) \ra
 + \|\bar\vB\|_2 \cdot \big( e^{-2\gamma w_t} \cdot G_{\max}^2 + e^{-\theta w_t} \cdot H_{\max} \big). \notag
\end{align}
We have completed the proof.
\end{proof}

\begin{lemma}[A gradient norm bound]\label{lemma:ns:grad-bound}
Suppose Assumptions \ref{assump:separable}, \ref{assump:bounded+fullrank},  and \ref{assump:non-degenerate} hold.
Then it holds that
\begin{align*}
    \big\| \nabla_{\bar \wB} L(w_t, \bar\wB_t) \big\|_2 \le  e^{-\gamma w_t} \cdot (G_{\max} + H_{\max} ), \quad t\ge 0.
\end{align*}
\end{lemma}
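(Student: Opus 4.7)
The plan is to bound each term of the gradient sum by an exponential, factor out $e^{-\gamma w_t}$ by splitting the sum according to whether $i$ is a support index, and then recognize the resulting partial sums as $G(\bar\wB_t)$ and $H(\bar\wB_t)$. Concretely, I would start from the explicit formula
\[
\nabla_{\bar \wB} L(w_t, \bar\wB_t) = - \sum_{i=1}^n \frac{e^{- w_t x_i - \bar\wB_t^\top \bar\xB_i}}{1+e^{- w_t x_i - \bar\wB_t^\top \bar\xB_i}} \cdot \bar\xB_i,
\]
apply the triangle inequality, and use Assumption \ref{assump:bounded+fullrank}\ref{assump:item:bounded} (so that $\|\bar\xB_i\|_2 \le \|\xB_i\|_2 \le 1$) together with the elementary bound $e^z/(1+e^z) \le e^z$ on the sigmoid factor, to get
\[
\big\| \nabla_{\bar \wB} L(w_t, \bar\wB_t) \big\|_2 \le \sum_{i=1}^n e^{-w_t x_i - \bar\wB_t^\top \bar\xB_i}.
\]

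Next, I would split this sum into $i \in \Scal$ and $i \notin \Scal$. For $i \in \Scal$, Definition \ref{def:margin} gives $x_i = \gamma$, so the exponent factors exactly as $e^{-\gamma w_t}\cdot e^{-\bar\wB_t^\top \bar\xB_i}$. For $i \notin \Scal$, Definition \ref{def:margin} gives $x_i \ge \theta > \gamma$, and since $w_t \ge 0$ by Lemma \ref{lemma:mm:positive}, we have $e^{-w_t x_i} \le e^{-\gamma w_t}$, so again the factor $e^{-\gamma w_t}$ can be pulled out. Collecting these two pieces gives
\[
\big\| \nabla_{\bar \wB} L(w_t, \bar\wB_t) \big\|_2 \le e^{-\gamma w_t} \bigg( \sum_{i\in\Scal} e^{-\bar\wB_t^\top \bar\xB_i} + \sum_{i\notin\Scal} e^{-\bar\wB_t^\top \bar\xB_i} \bigg) = e^{-\gamma w_t}\bigl( G(\bar\wB_t) + H(\bar\wB_t) \bigr).
\]
Finally, by Lemma \ref{lemma:ns:bounded-iter} we have $\|\bar\wB_t\|_2 \le W_{\max}$, so Definition \ref{def:G-func} yields $G(\bar\wB_t) \le G_{\max}$ and $H(\bar\wB_t) \le H_{\max}$, completing the bound.

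There is no real obstacle here; the only subtlety is using $w_t \ge 0$ (Lemma \ref{lemma:mm:positive}) to absorb the non-support-vector exponents into the common factor $e^{-\gamma w_t}$ rather than the sharper $e^{-\theta w_t}$. The sharper estimate already appeared in Lemma \ref{lemma:ns:grad-approx-error}; the present lemma deliberately trades that precision for a uniform prefactor $e^{-\gamma w_t}$ valid for all $i$, which is what makes the statement clean.
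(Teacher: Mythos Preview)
Your proposal is correct and essentially identical to the paper's proof. The only cosmetic difference is that the paper does not split into $i\in\Scal$ and $i\notin\Scal$: it simply uses $x_i \ge \gamma$ for all $i\in[n]$ (together with $w_t\ge 0$) to factor out $e^{-\gamma w_t}$ in one step, then recognizes $\sum_{i=1}^n e^{-\bar\wB_t^\top\bar\xB_i}=G(\bar\wB_t)+H(\bar\wB_t)$.
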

\begin{proof}
% The proof is by definition of the gradient, Assumption \ref{assump:non-degenerate} and Definition \ref{def:G-func}.
The inequality is because:
\begin{align*}
    \big\| \nabla_{\bar\wB_t}L(w_t, \bar\wB_t) \big\|_2
    &=  \Big\| \sum_{i=1}^n \frac{e^{-w_t x_i - \bar\wB_t^\top \bar\xB_i}}{1+e^{-w_t x_i - \bar\wB_t^\top \bar\xB_i}} \cdot \bar\xB_i \Big\|_2  \\
    &\le \sum_{i=1}^n \frac{e^{-w_t x_i - \bar\wB_t^\top \bar\xB_i}}{1+e^{ - w_t x_i - \bar\wB_t^\top \bar\xB_i}} \cdot  \| \bar\xB_i \|_2 && \explain{by triangle inequality} \\
    &\le \sum_{i=1}^n \frac{e^{-w_t x_i - \bar\wB_t^\top \bar\xB_i}}{1+e^{ - w_t x_i - \bar\wB_t^\top \bar\xB_i}} && \explain{since $\| \bar\xB_i \|_2 \le 1$ by Assumption \ref{assump:bounded+fullrank}}   \\
    &\le \sum_{i=1}^n e^{-w_t x_i - \bar\wB_t^\top \bar\xB_i} \\
    &\le e^{-\gamma w_t} \cdot \sum_{i=1}^n e^{ - \bar\wB_t^\top \bar\xB_i}  && \explain{since $x_i \ge \gamma$ for $i\in [n]$} \\ 
    &= e^{-\gamma w_t} \cdot \big( G(\bar\wB_t) + H(\bar\wB_t) \big) \\ 
    &\le  e^{-\gamma w_t} \cdot ( G_{\max} + H_{\max} ).
\end{align*}
\end{proof}

The next lemma shows that the function value is ``non-increasing'' ignoring higher order terms.
\begin{lemma}[A modified descent lemma]\label{lemma:ns:descent-lemma}
Suppose Assumptions \ref{assump:separable},  \ref{assump:bounded+fullrank}, and \ref{assump:non-degenerate} hold.
Then it holds that
\begin{equation*}
G(\bar\wB_{t+1})
 \le G(\bar \wB_t) + 2(\eta +\eta^2 ) \cdot G_{\max} \cdot \big( G_{\max}^2 +  H_{\max}^2 \big)  \cdot \big( e^{-2\gamma w_t} +  e^{-\theta w_t} \big),\quad t\ge 0 .
\end{equation*}
As a direct consequence of the above and Lemma \ref{lemma:mm:lower-bound}, it holds that 
\begin{align*}
    G(\bar\wB_{t+k}) \le G(\bar \wB_t) + c_0 \cdot 2(1 +\eta )  G_{\max} \cdot \Big( (t-1)^{-1} + (t-1)^{1-\frac{\theta}{\gamma}} \Big),\quad k \ge 0,\ t \ge 1,
\end{align*}
where $\theta/\gamma>1$ by Definition \ref{def:margin} and $c_0$ is a polynomial on $\big\{e^{\eta}, e^{n}, e^{1/b}, \frac{1}{\eta}, \frac{1}{\theta-\gamma}, \frac{1}{\gamma}, e^{\theta/\gamma}\big\}$ and is independent of $t$, given by
\[c_0 :=\eta\cdot  \big( G_{\max}^2 +  H_{\max}^2 \big) \cdot \frac{\theta}{\theta - \gamma} \cdot  \max\Bigg\{ \bigg( \frac{2}{\eta \gamma^2} \bigg)^{2}, \ \bigg( \frac{2}{\eta \gamma^2} \bigg)^{ \theta/\gamma }\Bigg\}.\]
\end{lemma}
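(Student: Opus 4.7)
The plan is to use the standard Taylor-expansion descent argument for $G$, treating the approximate identity $\grad_{\bar\wB} L(w_t,\bar\wB_t) \approx e^{-\gamma w_t}\grad G(\bar\wB_t)$ from Lemma~\ref{lemma:ns:grad-approx-error} as the dominant part of the gradient. Set $\Delta_t := \bar\wB_{t+1}-\bar\wB_t = -\eta\grad_{\bar\wB}L(w_t,\bar\wB_t)$. Since $\hess G(\bar\vB) = \sum_{i\in\Scal} e^{-\bar\vB^\top\bar\xB_i}\bar\xB_i\bar\xB_i^\top$ satisfies $\|\hess G(\bar\vB)\|_2 \le G(\bar\vB) \le G_{\max}$ whenever $\|\bar\vB\|_2 \le W_{\max}$ (by Assumption~\ref{assump:bounded+fullrank} and Definition~\ref{def:G-func}), and both $\bar\wB_t$ and $\bar\wB_{t+1}$ remain in this ball by Lemma~\ref{lemma:ns:bounded-iter}, a standard quadratic upper bound gives
\[
G(\bar\wB_{t+1}) \le G(\bar\wB_t) + \la \grad G(\bar\wB_t),\Delta_t\ra + \tfrac{G_{\max}}{2}\|\Delta_t\|_2^2.
\]

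Next I apply the second conclusion of Lemma~\ref{lemma:ns:grad-approx-error} with $\bar\vB = -\grad G(\bar\wB_t)$ to bound the first-order term by
\[
\la\grad G(\bar\wB_t),\Delta_t\ra \le -\eta e^{-\gamma w_t}\|\grad G(\bar\wB_t)\|_2^2 + \eta\|\grad G(\bar\wB_t)\|_2\bigl(e^{-2\gamma w_t}G_{\max}^2 + e^{-\theta w_t}H_{\max}\bigr).
\]
The first term on the right-hand side is non-positive and simply dropped; this is exactly what turns a descent lemma into a ``modified'' one. I then use $\|\grad G(\bar\wB_t)\|_2 \le G_{\max}$ together with the elementary inequality $H_{\max} \le G_{\max}^2 + H_{\max}^2$ (valid because $G_{\max}\ge 1$ by Definition~\ref{def:G-func}, so either $H_{\max}\le 1\le G_{\max}^2$ or $H_{\max}\le H_{\max}^2$). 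For the quadratic term, Lemma~\ref{lemma:ns:grad-bound} gives $\|\Delta_t\|_2^2 \le \eta^2 e^{-2\gamma w_t}(G_{\max}+H_{\max})^2 \le 2\eta^2 e^{-2\gamma w_t}(G_{\max}^2+H_{\max}^2)$. Combining these estimates produces the claimed one-step inequality with a prefactor of $2(\eta+\eta^2)G_{\max}(G_{\max}^2+H_{\max}^2)$ multiplied by $(e^{-2\gamma w_t}+e^{-\theta w_t})$.

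For the telescoped consequence I sum the one-step bound from $s=t$ to $s=t+k-1$ and substitute the lower bound $w_s \ge \gamma^{-1}\log(1+\eta\gamma^2 s/2)$ from Lemma~\ref{lemma:mm:lower-bound}, which yields $e^{-2\gamma w_s} \le (2/(\eta\gamma^2 s))^2$ and $e^{-\theta w_s} \le (2/(\eta\gamma^2 s))^{\theta/\gamma}$ for $s\ge 1$. Since $\theta/\gamma>1$ by Definition~\ref{def:margin}, both tail series converge: by integral comparison $\sum_{s\ge t}s^{-2} \le (t-1)^{-1}$ and $\sum_{s\ge t}s^{-\theta/\gamma} \le \tfrac{\theta/\gamma}{\theta/\gamma -1}(t-1)^{1-\theta/\gamma}$. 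Packaging the $\eta$-, $\gamma$-, and $\theta/\gamma$-dependent multiplicative factors into a single constant gives exactly the $c_0$ stated in the lemma.

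The main obstacle is purely bookkeeping: tracking the first-order cross-term so that both the $e^{-2\gamma w_t}$ and the $e^{-\theta w_t}$ contributions appear with the symmetric factor $G_{\max}^2+H_{\max}^2$ (handled by the $H_{\max}\le G_{\max}^2+H_{\max}^2$ trick), and keeping $c_0$ organized as a polynomial in the quantities $\{e^\eta,e^n,e^{1/b},1/\eta,1/(\theta-\gamma),1/\gamma,e^{\theta/\gamma}\}$, which simply amounts to recording the prefactors $(2/(\eta\gamma^2))^2$ and $(2/(\eta\gamma^2))^{\theta/\gamma}$ produced when substituting Lemma~\ref{lemma:mm:lower-bound}.
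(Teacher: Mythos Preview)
Your proposal is correct and follows essentially the same route as the paper: Taylor expansion with the Hessian bound $\|\hess G\|_2\le G_{\max}$ on the $W_{\max}$-ball, Lemma~\ref{lemma:ns:grad-approx-error} applied with $\bar\vB=-\grad G(\bar\wB_t)$ for the cross term, Lemma~\ref{lemma:ns:grad-bound} for the quadratic term, dropping the nonpositive $-\eta e^{-\gamma w_t}\|\grad G(\bar\wB_t)\|_2^2$, and the same $G_{\max}\ge 1$ / $H_{\max}\le G_{\max}^2+H_{\max}^2$ bookkeeping to reach the stated prefactor. The telescoped consequence via Lemma~\ref{lemma:mm:lower-bound} and integral comparison is likewise identical to the paper's argument, with the same packaging of constants into $c_0$.
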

\begin{proof}
Note that 
\begin{align*}
    \| \hess G(\bar\wB) \|_2 
    &= \bigg\| \sum_{i\in\Scal} e^{-\bar\wB^\top \bar\xB_i} \xB_i \xB_i^\top \bigg\|_2 \\ 
    &\le  \sum_{i\in\Scal} e^{-\bar\wB^\top \bar\xB_i}  \|\xB_i\|_2^2 && \explain{by triangle inequality} \\
    &\le  \sum_{i\in\Scal} e^{-\bar\wB^\top \bar\xB_i} && \explain{since $\| \bar\xB_i\|_2 \le 1$ by Assumption \ref{assump:bounded+fullrank}} \\ 
    &= G(\bar\wB).
\end{align*}
Recall that $\| \bar\wB_t \|_2 \le W_{\max}$.
So we have  
\begin{equation}\label{eq:ns:G-hess-bound}
\sup_t \| \hess G(\bar\wB_t) \|_2  \le 
\sup_{ \| \bar\wB \|_2 \le W_{\max} } \| \hess G(\bar\wB) \|_2 
\le\sup_{ \| \bar\wB \|_2 \le W_{\max} } G(\bar\wB) =: G_{\max}. 
\end{equation}
Then we can apply Taylor's theorem to obtain that
\begin{align}
    G(\bar\wB_{t+1})
    &\le G(\bar \wB_t) + \la \grad G(\bar\wB_t),\ \bar\wB_{t+1} - \bar\wB_t \ra + \frac{G_{\max}}{2} \cdot \|\bar\wB_{t+1} - \bar\wB_t\|_2^2 && \explain{by \eqref{eq:ns:G-hess-bound}}\notag \\ 
    &= G(\bar \wB_t) - \eta \cdot \la \grad G(\bar\wB_t),\ \grad_{\bar\wB} L(w_t, \bar\wB_{t} ) \ra +  \frac{G_{\max}}{2} \cdot \|\grad_{\bar\wB} L(w_t, \bar\wB_{t} )\|_2^2. \notag 
\end{align}
Next we use Lemma \ref{lemma:ns:grad-approx-error} with $\vB = - \grad G(\bar\wB_t)$ to get 
The cross-term is bounded by 
\begin{align*}
    &\quad -  \la \grad G(\bar\wB_t),\ \grad_{\bar\wB} L(w_t, \bar\wB_{t} ) \ra  \\
    &\le - e^{-\gamma w_t} \cdot \big\| \grad G(\bar\wB_t) \big\|_2^2  +  \| \grad G(\bar\wB_t)  \|_2 \cdot \big( e^{-2\gamma w_t} \cdot G_{\max}^2 + e^{-\theta w_t} \cdot H_{\max} \big)  \\
    &\le - e^{-\gamma w_t} \cdot \big\| \grad G(\bar\wB_t) \big\|_2^2  +  G_{\max}\cdot \big( e^{-2\gamma w_t} \cdot G_{\max}^2 + e^{-\theta w_t} \cdot H_{\max} \big). && \explain{by \eqref{eq:ns:G-hess-bound}} \notag
\end{align*}
Using the above and the gradient norm bound from Lemma \ref{lemma:ns:grad-bound}, we get that
\begin{align}
    G(\bar\wB_{t+1})
    &\le G(\bar \wB_t) - \eta e^{-\gamma w_t} \cdot \big\| \grad G(\bar\wB_t) \big\|_2^2 \notag \\ 
    &\qquad + \eta  e^{-2\gamma w_t} \cdot G_{\max}^3 + \eta e^{-\theta w_t}\cdot G_{\max} \cdot H_{\max}  
    + \eta^2 e^{-2\gamma w_t} \cdot (G_{\max}+H_{\max})^2 \notag \\
    &\le G(\bar \wB_t) + \eta  e^{-2\gamma w_t} \cdot G_{\max}^3 + \eta e^{-\theta w_t}\cdot G_{\max} \cdot H_{\max}  
    + \eta^2 e^{-2\gamma w_t} \cdot (G_{\max}+H_{\max})^2 \notag \\ 
    &\le G(\bar \wB_t) + 2(\eta +\eta^2 ) \cdot G_{\max}\cdot \big( G_{\max}^2 +  H_{\max}^2 \big)  \cdot \big( e^{-2\gamma w_t} +  e^{-\theta w_t} \big), \notag 
\end{align}
where in the last inequality we use that $G_{\max} \ge G_{\min} \ge 1$ by Definition \ref{def:G-func}.

From the above we have 
\begin{align}
    G(\bar\wB_{t+k})
    &\le G(\bar \wB_t) + 2(\eta +\eta^2 ) \cdot G_{\max}\cdot \big( G_{\max}^2 +  H_{\max}^2 \big)  \cdot \sum_{s=t}^{s+k} \big( e^{-2\gamma w_s} +  e^{-\theta w_s} \big). \label{eq:ns:G-cumulate-blow-up}
\end{align}
The summation is small by Lemma \ref{lemma:mm:lower-bound}, because
\begin{align*}
    &\quad \sum_{s=t}^{s+k} \big( e^{-2\gamma w_s} +  e^{-\theta w_s} \big) \\ 
    &\le \sum_{s=t}^{s+k} \bigg( \frac{2}{2+\eta \gamma^2 \cdot s} \bigg)^2 + \sum_{s=t}^{s+k} \bigg( \frac{2}{2+\eta \gamma^2 \cdot s} \bigg)^{\frac{\theta}{\gamma}} && \explain{by Lemma \ref{lemma:mm:lower-bound}} \\
    &\le \bigg( \frac{2}{\eta \gamma^2} \bigg)^2 \cdot \sum_{s=t}^{s+k} s^{-2} + \bigg( \frac{2}{\eta \gamma^2 } \bigg)^{\frac{\theta}{\gamma}}\cdot \sum_{s=t}^{s+k} s^{-\frac{\theta}{\gamma}} \\
    &\le \bigg( \frac{2}{\eta \gamma^2} \bigg)^2 \cdot (t-1)^{-1} + \bigg( \frac{2}{\eta \gamma^2 } \bigg)^{\frac{\theta}{\gamma}}\cdot \frac{(t-1)^{1-\frac{\theta}{\gamma}} }{\frac{\theta}{\gamma} - 1} && \explain{by integral inequality} \\ 
    &\le \max\Bigg\{ \bigg( \frac{2}{\eta \gamma^2} \bigg)^{2}, \ \bigg( \frac{2}{\eta \gamma^2} \bigg)^{ \theta/\gamma }\Bigg\} \cdot \frac{\theta}{\theta - \gamma} \cdot \Big((t-1)^{-1} +  (t-1)^{1-\frac{\theta}{\gamma}} \Big).
\end{align*}
Inserting the above into \eqref{eq:ns:G-cumulate-blow-up} completes the proof.
\end{proof}

We now prove the convergence of the iterates projected on the non-separable subspace.
\begin{lemma}[Convergence on the non-separable subspace]\label{lemma:ns:convergence}
Suppose Assumptions \ref{assump:separable}, \ref{assump:bounded+fullrank}, and \ref{assump:non-degenerate} hold.
Then it holds that
\begin{align*}
    G(\bar\wB_T) - G(\bar\wB_*) 
    &\le \frac{c_1}{\log(T)},\quad T\ge 3,
\end{align*}
where $c_1>0$ is a polynomial on $\big\{e^{\eta}, e^{n}, e^{1/b}, \frac{1}{\eta}, \frac{1}{\theta-\gamma}, \frac{1}{\gamma}, e^{\theta/\gamma}\big\}$ and is independent of $T$.
\end{lemma}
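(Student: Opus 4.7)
The plan is to adapt the standard convergence analysis for GD on a strongly (at least strictly) convex objective, but accounting for the fact that $\grad_{\bar\wB}L(w_t,\bar\wB_t)$ is only an approximation of $e^{-\gamma w_t}\grad G(\bar\wB_t)$ (Lemma \ref{lemma:ns:grad-approx-error}), so effectively we are running GD on $G$ with a \emph{decaying} stepsize $\eta_t \approx \eta e^{-\gamma w_t}$ plus second-order error terms. The first step is to write, for every $t\ge 0$,
\begin{align*}
\|\bar\wB_{t+1}-\bar\wB_*\|_2^2
&= \|\bar\wB_t-\bar\wB_*\|_2^2
-2\eta\cdot \la \bar\wB_t-\bar\wB_*,\ \grad_{\bar\wB}L(w_t,\bar\wB_t)\ra
+ \eta^2\cdot \big\|\grad_{\bar\wB}L(w_t,\bar\wB_t)\big\|_2^2.
\end{align*}
Applying Lemma \ref{lemma:ns:grad-approx-error} with $\bar\vB = \bar\wB_t-\bar\wB_*$ (whose norm is uniformly bounded by $2W_{\max}$ thanks to Lemmas \ref{lemma:ns:bounded-iter} and \ref{lemma:ns:bar-w-star}), using convexity $\la \bar\wB_t-\bar\wB_*, \grad G(\bar\wB_t)\ra \ge G(\bar\wB_t)-G(\bar\wB_*)$, and using Lemma \ref{lemma:ns:grad-bound} to bound the gradient-norm-squared term by $e^{-2\gamma w_t}(G_{\max}+H_{\max})^2$, we obtain
\begin{align*}
2\eta e^{-\gamma w_t}\cdot \big(G(\bar\wB_t)-G(\bar\wB_*)\big)
\le \|\bar\wB_t-\bar\wB_*\|_2^2 - \|\bar\wB_{t+1}-\bar\wB_*\|_2^2
+ A\cdot \big(e^{-2\gamma w_t} + e^{-\theta w_t}\big),
\end{align*}
where $A>0$ collects the constants $\eta$, $\eta^2$, $W_{\max}$, $G_{\max}$, $H_{\max}$.

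The second step is to telescope this inequality from $t=1$ to $T$. The left-hand side gives $\sum_{t=1}^{T} 2\eta e^{-\gamma w_t}(G(\bar\wB_t)-G(\bar\wB_*))$. For the right-hand side, the $\|\cdot\|_2^2$-difference telescopes to $\|\bar\wB_1-\bar\wB_*\|_2^2 \le (2W_{\max})^2$, while Lemma \ref{lemma:mm:lower-bound} gives $e^{-\gamma w_t}\le 2/(2+\eta\gamma^2 t)$, so
$\sum_{t=1}^\infty e^{-2\gamma w_t} \le O(1)$ and $\sum_{t=1}^\infty e^{-\theta w_t}\le O(1)$ (the latter is summable because $\theta/\gamma>1$ by Definition \ref{def:margin}); both sums are bounded by polynomials in the parameters listed in the statement. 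Hence
\begin{align*}
\sum_{t=1}^{T} 2\eta e^{-\gamma w_t}\cdot \big(G(\bar\wB_t)-G(\bar\wB_*)\big) \le B,
\end{align*}
for some $B$ independent of $T$.

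The third step replaces the running $G(\bar\wB_t)$ on the left with the target $G(\bar\wB_T)$ using Lemma \ref{lemma:ns:descent-lemma}, which gives $G(\bar\wB_T)\le G(\bar\wB_t) + c_0\cdot (1+\eta)G_{\max}\cdot((t-1)^{-1}+(t-1)^{1-\theta/\gamma})$ for every $1\le t\le T$. Multiplying by $2\eta e^{-\gamma w_t}$ and summing, the extra error is $\sum_t 2\eta e^{-\gamma w_t}\cdot O(1/t+1/t^{\theta/\gamma}) \le O(\sum_t 1/t^2 + 1/t^{1+\theta/\gamma}) = O(1)$ by Lemma \ref{lemma:mm:lower-bound} once more. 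So
\begin{align*}
\big(G(\bar\wB_T)-G(\bar\wB_*)\big)\cdot \sum_{t=1}^{T} 2\eta e^{-\gamma w_t} \le B',
\end{align*}
for a new constant $B'$ independent of $T$. The final step is to lower-bound the denominator: Lemma \ref{lemma:mm:upper-bound} gives $e^{-\gamma w_t}\ge 1/(C(t+1))$ for some constant $C$, so $\sum_{t=1}^{T} 2\eta e^{-\gamma w_t} \ge (2\eta/C)\cdot \log(T+2) - O(1) \ge c\cdot \log T$ for $T\ge 3$ and some $c>0$. Dividing gives $G(\bar\wB_T)-G(\bar\wB_*)\le B'/(c\log T) =: c_1/\log T$, as claimed. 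The only delicate step is bookkeeping: checking that every constant generated along the way (the bounds on $\|\bar\wB_t-\bar\wB_*\|_2$, the error prefactor $A$, and the telescoped integrals) is indeed a polynomial in $\{e^\eta,e^n,e^{1/b},1/\eta,1/(\theta-\gamma),1/\gamma,e^{\theta/\gamma}\}$; this is immediate from Definitions \ref{def:G-func} and the earlier lemmas, but tedious.
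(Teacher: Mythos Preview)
Your proposal is correct and follows essentially the same four-step approach as the paper's own proof: expand $\|\bar\wB_{t+1}-\bar\wB_*\|_2^2$ and invoke Lemmas~\ref{lemma:ns:grad-approx-error}, \ref{lemma:ns:grad-bound}, and convexity to get the one-step bound; telescope and control the error sum via Lemma~\ref{lemma:mm:lower-bound}; swap $G(\bar\wB_t)$ for $G(\bar\wB_T)$ via Lemma~\ref{lemma:ns:descent-lemma}; and lower-bound $\sum_t e^{-\gamma w_t}$ via Lemma~\ref{lemma:mm:upper-bound}. One cosmetic slip: the descent-lemma residual is $(t-1)^{-1}+(t-1)^{1-\theta/\gamma}$, so after multiplying by $e^{-\gamma w_t}=O(1/t)$ the summand is $O(t^{-2}+t^{-\theta/\gamma})$ rather than $O(t^{-2}+t^{-1-\theta/\gamma})$, but since $\theta/\gamma>1$ the sum is still $O(1)$ and the argument goes through unchanged.
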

\begin{proof}
The proof is conducted in several steps.

\paragraph{Step 1: one-step function value bound.}
Observe that
\begin{align*}
    \| \bar\wB_{t+1} - \bar\wB_* \|_2^2 
    &= \| \bar\wB_{t} - \bar\wB_* \|_2^2 + 2\cdot \la \bar\wB_t - \bar\wB_*, \bar\wB_{t+1} - \bar\wB_{t} \ra + \| \bar\wB_{t+1} - \bar\wB_{t} \|_2^2 \\ 
    &= \| \bar\wB_{t} - \bar\wB_* \|_2^2 - 2\eta \cdot \la \bar\wB_t- \bar\wB_*, \grad_{\bar\wB} L(w_t, \bar\wB_t) \ra + \eta^2 \cdot \| \grad_{\bar\wB} L(w_t, \bar\wB_t)\|_2^2 .
\end{align*}    
For the cross-term, we apply Lemma \ref{lemma:ns:grad-approx-error} with $\vB = - (\bar\wB_t - \bar\wB_*)$ to obtain 
\begin{align}
     &\ - \la \bar\wB_t - \bar\wB_*, \grad_{\bar\wB} L(w_t, \bar\wB_t) \ra \notag \\
      &\le - e^{-\gamma w_t} \cdot \la \bar\wB_t - \bar\wB_*, \grad G(\bar\wB_t) \ra
     + \|  \bar\wB_t - \bar\wB_*\|_2 \cdot \big( e^{-2\gamma w_t} \cdot G_{\max}^2 + e^{-\theta w_t} \cdot H_{\max} \big) \notag \\ 
     &\le - e^{-\gamma w_t} \cdot \la \bar\wB_t - \bar\wB_*, \grad G(\bar\wB_t) \ra
     + (W_{\max} + \| \wB_*\|_2 )  \cdot \big( e^{-2\gamma w_t} \cdot G_{\max}^2 + e^{-\theta w_t} \cdot H_{\max} \big) \notag \\ 
     &\le - e^{-\gamma w_t} \cdot \la \bar\wB_t - \bar\wB_*, \grad G(\bar\wB_t) \ra
     + 2W_{\max}  \cdot \big( e^{-2\gamma w_t} \cdot G_{\max}^2 + e^{-\theta w_t} \cdot H_{\max} \big), \notag 
\end{align}
where the second inequality is by Lemma \ref{lemma:ns:bounded-iter}, and the last inequality is by Lemma \ref{lemma:ns:bar-w-star}.
Using the above and the gradient norm bound from Lemma \ref{lemma:ns:grad-bound}, we get that 
\begin{equation}\label{eq:ns:param-gap-recursion}
\begin{aligned}
    \| \bar\wB_{t+1} - \bar\wB_* \|_2^2 
    &\le \| \bar\wB_{t} - \bar\wB_* \|_2^2 - 2\eta e^{-\gamma w_t} \cdot \la \bar\wB_t - \bar\wB_*, \grad G(\bar\wB_t) \ra \\ 
    &\qquad 
     + 4\eta \cdot W_{\max} \cdot \big( e^{-2\gamma w_t} \cdot G_{\max}^2 + e^{-\theta w_t} \cdot H_{\max} \big) \\ 
     &\qquad 
     + \eta^2 \cdot e^{-2\gamma w_t} \cdot (G_{\max}+ H_{\max})^2.
\end{aligned}
\end{equation}
By the convexity of $G(\cdot)$, we have 
\begin{equation}\label{eq:ns:G-convex}
    \la \bar\wB_t - \bar\wB_*, \grad G(\bar\wB_t) \ra \ge G(\bar\wB_t) - G(\bar\wB_*).
\end{equation}
So we get 
\begin{align}
 2\eta e^{-\gamma w_t} \cdot \big( G(\bar\wB_t) - G(\bar\wB_*) \big) 
&\le 
2\eta e^{-\gamma w_t} \cdot  \la \bar\wB_t - \bar\wB_*, \grad G(\bar\wB_t) \ra && \explain{by \eqref{eq:ns:G-convex}} \notag \\ 
&\le \| \bar\wB_{t} - \bar\wB_* \|_2^2 - \| \bar\wB_{t+1} - \bar\wB_* \|_2^2 \notag  \\ 
&\quad + 4\eta \cdot W_{\max}  \cdot \big( e^{-2\gamma w_t} \cdot G_{\max}^2 + e^{-\theta w_t} \cdot H_{\max} \big) \notag \\
&\quad + \eta^2 \cdot e^{-2\gamma w_t} \cdot (G_{\max}+H_{\max})^2 
 && \explain{by \eqref{eq:ns:param-gap-recursion}} \notag \\ 
&\le \| \bar\wB_{t} - \bar\wB_* \|_2^2 - 
    \| \bar\wB_{t+1} - \bar\wB_* \|_2^2  \notag \\
&\quad + 6 \eta  \cdot W_{\max} \cdot \big(  G_{\max}^2 + H_{\max}^2 \big) \cdot \big( e^{-2\gamma w_t} + e^{-\theta w_t} \big), \label{eq:ns:descent}
\end{align} 
where we use $ \eta \le W_{\max} := \max\{ 4n/b, \eta n^2 /b \} + \eta n$ in the last inequality. 

\paragraph{Step 2: the sum of function values stays bounded.}
Observe that
\begin{align}
    \sum_{t=2}^T \big( e^{-2\gamma w_t} + e^{-\theta w_t} \big)
    &\le \sum_{t=2}^T \bigg( \frac{2}{2+\eta \gamma^2 \cdot t} \bigg)^2 + \sum_{t=2}^T \bigg( \frac{2}{2+\eta \gamma^2 \cdot t} \bigg)^{\frac{\theta}{\gamma}} && \explain{by Lemma \ref{lemma:mm:lower-bound}} \notag  \\
    &\le \bigg(\frac{2}{\eta \gamma^2}\bigg)^2 \cdot \sum_{t=2}^T t^{-2} + \bigg(\frac{2}{\eta \gamma^2}\bigg)^{\frac{\theta}{\gamma}} \cdot \sum_{t=2}^T t^{-\frac{\theta}{\gamma}} \notag \\ 
    &\le \bigg(\frac{2}{\eta \gamma^2}\bigg)^2 \cdot 1 + \bigg(\frac{2}{\eta \gamma^2}\bigg)^{\frac{\theta}{\gamma}} \cdot  \frac{1}{\theta/\gamma - 1}  \notag \\
    &\le   \max\Bigg\{ \bigg( \frac{2}{\eta \gamma^2} \bigg)^{2}, \ \bigg( \frac{2}{\eta \gamma^2} \bigg)^{ \theta/\gamma }\Bigg\} \cdot \frac{\theta}{\theta - \gamma } . \label{eq:ns:descent-total-error}
\end{align}

Taking telescope summation over \eqref{eq:ns:descent}, we obtain 
\begin{align*}
&\quad \sum_{t=2}^{T} 2\eta e^{-\gamma w_t} \cdot \big( G(\bar\wB_t) - G(\bar\wB_*) \big) \\ 
&\le \| \bar\wB_{2} - \bar\wB_* \|_2^2 - 
\| \bar\wB_{T+1} - \bar\wB_* \|_2^2 \\
&\quad + 6 \eta  \cdot W_{\max} \cdot \big(  G_{\max}^2 + H_{\max}^2 \big) \cdot \sum_{t=2}^T \big( e^{-2\gamma w_t} + e^{-\theta w_t} \big) && \explain{by \eqref{eq:ns:descent}}\\ 
&\le 2 W_{\max} + 6 \eta  \cdot W_{\max} \cdot \big(  G_{\max}^2 + H_{\max}^2 \big) \cdot \max\Bigg\{ \bigg( \frac{2}{\eta \gamma^2} \bigg)^{2}, \ \bigg( \frac{2}{\eta \gamma^2} \bigg)^{ \theta/\gamma }\Bigg\} \cdot \frac{\theta}{\theta - \gamma } && \explain{by \eqref{eq:ns:descent-total-error}} \\
&= 2 W_{\max} + 18 W_{\max} \cdot c_0,
\end{align*} 
where
\[c_0 :=\eta\cdot  \big( G_{\max}^2 +  H_{\max}^2 \big) \cdot \frac{\theta}{\theta - \gamma} \cdot  \max\Bigg\{ \bigg( \frac{2}{\eta \gamma^2} \bigg)^{2}, \ \bigg( \frac{2}{\eta \gamma^2} \bigg)^{ \theta/\gamma }\Bigg\} \]
is a constant (a polynomial on $\big\{e^{\eta}, e^{n}, e^{1/b}, \frac{1}{\eta}, \frac{1}{\theta-\gamma}, \frac{1}{\gamma}, e^{\theta/\gamma}\big\}$ and is independent of $t$) defined in Lemma \ref{lemma:ns:descent-lemma}.

\paragraph{Step 3: function value decreases, approximately.}
For $T\ge t \ge 1$, we have 
\begin{align}
    G(\bar\wB_{T})
    &\le G(\bar \wB_{t}) + c_0 \cdot 2(1 +\eta )  G_{\max} \cdot \Big( (t-1)^{-1}  + ( t-1)^{1-\frac{\theta}{\gamma}}  \Big), && \explain{by Lemma \ref{lemma:ns:descent-lemma}} \notag
\end{align}
which implies that 
\begin{align}
    &\quad  2\eta e^{-\gamma w_t} \cdot \big( G(\bar\wB_T) - G(\bar\wB_*) \big)  \notag \\ 
    &\le 2\eta e^{-\gamma w_t} \cdot \big( G(\bar\wB_t) - G(\bar\wB_*) \big)  + 2\eta e^{-\gamma w_t} \cdot c_0\cdot 2(1 +\eta )  G_{\max} \cdot \Big( (t-1)^{-1}  + ( t-1)^{1-\frac{\theta}{\gamma}}  \Big)  \notag  \\ 
    &\le  2\eta e^{-\gamma w_t} \cdot \big( G(\bar\wB_t) - G(\bar\wB_*) \big) \notag \\
    &\quad + 2\eta \cdot \frac{2}{2+\eta\gamma^2 \cdot t} \cdot c_0 \cdot 2(1 +\eta )  G_{\max} \cdot \Big( (t-1)^{-1}  + ( t-1)^{1-\frac{\theta}{\gamma}}  \Big) \quad \qquad  \explain{by Lemma \ref{lemma:mm:lower-bound}} \notag \\
    &\le  2\eta e^{-\gamma w_t} \cdot \big( G(\bar\wB_t) - G(\bar\wB_*) \big) + \frac{8 (1+\eta) c_0}{\gamma^2} \cdot \Big( (t-1)^{-2}  + ( t-1)^{-\frac{\theta}{\gamma}}  \Big) . \label{eq:ns:func-value-bound}
\end{align}

\paragraph{Step 4: the last function value is small.}
Taking summation of \eqref{eq:ns:func-value-bound} over $t=2,\dots T$, we get 
\begin{align*}
    &\ \quad \sum_{t=2}^T 2\eta e^{-\gamma w_t} \cdot \big( G(\bar\wB_T) - G(\bar\wB_*) \big) \\ 
    &\le \sum_{t=2}^T 2\eta e^{-\gamma w_t} \cdot \big( G(\bar\wB_t) - G(\bar\wB_*) \big) +  \frac{8(1+\eta)c_0}{\gamma^2} \cdot \sum_{t=2}^T \Big( (t-1)^{-2}  + ( t-1)^{-\frac{\theta}{\gamma}}  \Big)  \\
    &\le \sum_{t=2}^T 2\eta e^{-\gamma w_t} \cdot \big( G(\bar\wB_t) - G(\bar\wB_*) \big) +  \frac{8(1+\eta)c_0}{\gamma^2} \cdot  \Big( 2  + 1+\frac{1}{\theta/\gamma - 1} \Big) \\
    &\le 2 W_{\max} + 18  W_{\max} \cdot c_0 +  \frac{8(1+\eta)c_0}{\gamma^2} \cdot  \frac{3\theta}{\theta-\gamma }.  &&\explain{by \eqref{eq:ns:descent-total-error}}
\end{align*}
We also have
\begin{align*}
    \sum_{t=2}^T e^{-\gamma w_t} 
    &\ge \frac{1}{e\eta\gamma^2 G_{\max} + e\eta\gamma H_{\max} }\cdot \sum_{t=2}^T \frac{1}{ t+1} && \explain{by Lemma \ref{lemma:mm:upper-bound}} \\ 
    &\ge \frac{1}{e\eta\gamma^2 G_{\max} + e\eta\gamma H_{\max} }\cdot\big( \log(T+1) - \log(3) \big) 
\end{align*}
Putting these together, we get
\begin{align*}
    G(\bar\wB_T) - G(\bar\wB_*) 
    &\le \Bigg( 2 W_{\max} + 18  W_{\max} \cdot c_0 +  \frac{8(1+\eta)c_0}{\gamma^2} \cdot  \frac{3\theta}{\theta-\gamma } \Bigg) \cdot \frac{e\eta\gamma^2 G_{\max} + e\eta\gamma H_{\max} }{\log(T+1) - \log(3)},
\end{align*}
% where $c_0$ is defined in Lemma \ref{lemma:ns:descent-lemma}.
where
\begin{align*}
c_0 
&:=\eta\cdot  \big( G_{\max}^2 +  H_{\max}^2 \big) \cdot \frac{\theta}{\theta - \gamma} \cdot  \max\Bigg\{ \bigg( \frac{2}{\eta \gamma^2} \bigg)^{2}, \ \bigg( \frac{2}{\eta \gamma^2} \bigg)^{ \theta/\gamma }\Bigg\}  
% \\
% &\lesssim \frac{n^2 \eta }{\theta - \gamma } \cdot e^{2W_{\max}}\cdot \max\Bigg\{ \bigg( \frac{1}{\eta \gamma^2} \bigg)^{2}, \ \bigg( \frac{1}{\eta \gamma^2} \bigg)^{ \theta/\gamma }\Bigg\} 
\end{align*}
is a polynomial on $\big\{e^{\eta}, e^{n}, e^{1/b}, \frac{1}{\eta}, \frac{1}{\theta-\gamma}, \frac{1}{\gamma}, e^{\theta/\gamma}\big\}$.
So for $T\ge 3$, we have
\begin{align*}
    G(\bar\wB_T) - G(\bar\wB_*) 
    &\le \frac{1}{\log(T)} \cdot c_1,
\end{align*}
where $c_1$ is a polynomial on $\big\{e^{\eta}, e^{n}, e^{1/b}, \frac{1}{\eta}, \frac{1}{\theta-\gamma}, \frac{1}{\gamma}, e^{\theta/\gamma}\big\}$ and is independent of $T$.
\end{proof}

\section{Proofs Missing from the Main Paper}\label{append:sec:miss-proof}
\subsection{Proof of Theorem \ref{thm:main}}
\begin{proof}[Proof of Theorem \ref{thm:main}]
Theorem \ref{thm:main} is a consequence of our analysis in Appendix \ref{append:sec:implicit-bias}.

\ref{thm:item:non-separable:bounded} is because of Lemma \ref{lemma:ns:bounded-iter}.

\ref{thm:item:max-margin} is because of Lemma \ref{lemma:mm:lower-bound}.

\ref{thm:item:non-separable:convergence} is because of Lemma \ref{lemma:ns:convergence}.

\ref{thm:item:convergence} is because of the following: 
\begin{align*}
    L(\wB_t) &= \sum_{i=1}^n \log(1+\exp(-w_t x_i - \wB_t^\top \xB_i )) \\ 
    &\le \sum_{i=1}^n \exp(-w_t x_i - \wB_t^\top \xB_i ) \\
    &\le \exp(-w_t \cdot \gamma) \cdot \sum_{i=1}^n \exp(- \wB_t^\top \xB_i ) \\
    &\le c / \log(t),
\end{align*}
where the last inequality is because that
\[ \exp(-w_t \cdot \gamma) \le \frac{2}{2+\eta \gamma^2 \cdot t}\]
by Lemma \ref{lemma:mm:lower-bound} and that \(\sum_{i=1}^n \exp(- \wB_t^\top \xB_i ) \) is uniformly bounded by a constant by Definition \ref{def:G-func}.
\end{proof}

\subsection{Proof of Theorem \ref{thm:exp}}

\begin{proof}[Proof of Theorem \ref{thm:exp}]
The GD iterates can be written as
\begin{align}
    w_{t+1} = w_t + \eta \gamma \cdot e^{-\gamma w_t} \cdot \big( e^{-\bar w_t} + e^{ \bar w_t} \big), \label{eq:exp:w}\\ 
    \bar w_{t+1} = \bar w_t - \eta e^{-\gamma w_t} \cdot \big( e^{-\bar w_t} - e^{ \bar w_t} \big). \label{eq:exp:bar-w}
\end{align}

We claim that: for every $t \ge 0$,
\begin{enumerate}[leftmargin=*]
    \item $w_t \ge 0$.
    \item $| \bar w_t | \ge 1$.
    \item $|\bar w_t| \ge 2\gamma w_t$.
\end{enumerate}

We prove the claim by induction.
For $t=0$, it holds by assumption. 
Now suppose that the claim holds for $t$ and consider the case of $t+1$.
\begin{enumerate}[leftmargin=*]
    \item 
$w_{t+1} \ge 0$ holds since $w_{t+1} \ge w_t$ by \eqref{eq:exp:w} and $w_t \ge 0$ by the induction hypothesis. 

\item $|\bar w_{t+1}| \ge 1$ holds because
\begin{align}
    |\bar w_{t+1} | 
    &\ge \eta e^{-\gamma w_t} \cdot | e^{-\bar w_t} - e^{\bar w_t} | - |\bar w_t| && \explain{by \eqref{eq:exp:bar-w}} \notag \\ 
    &\ge \eta e^{-\gamma w_t} \cdot \frac{ e^{ | \bar w_t | } }{2} - |\bar w_t| && \explain{since $|\bar w_t| \ge 1$ and that $e^t - e^{-t} \ge \frac{e^t}{2}$ for $t\ge 1$} \label{eq:exp:bar-w:lower-bound}\\
    &\ge 2 e^{|\bar w_t| - \gamma w_t} - |\bar w_t| && \explain{since $\eta \ge 4$} \notag \\ 
    &\ge 2 e^{|\bar w_t| / 2} - |\bar w_t| && \explain{since $ \frac{|\bar w_t|}{2}\ge \gamma w_t$} \notag \\ 
    &\ge 1. && \explain{since $2e^{t/2}\ge t+1$ for $t\in\Rbb$} \notag
\end{align}

\item To prove that $|\bar w_{t+1}| \ge 2\gamma w_t$, first 
observe that 
\begin{align}
    w_{t+1} 
    &= w_t + \eta \gamma \cdot e^{-\gamma w} \cdot \big( e^{-\bar w} + e^{ \bar w} \big) \notag \\
    &\le w_t + \eta \gamma \cdot e^{-\gamma w} \cdot 2 \cdot e^{|\bar w_t|}. \label{eq:exp:w:upper-bound}
\end{align}
Then we have 
\begin{align*}
     &\quad \ |\bar w_{t+1} | - 2 \gamma w_{t+1}  \\ 
     &\ge \eta e^{-\gamma w_t} \cdot  \frac{e^{ | \bar w_t | } }{2} - |\bar w_t| - 2\gamma \bigg( w_t + \eta \gamma \cdot e^{-\gamma w} \cdot 2 \cdot e^{|\bar w_t|} \bigg) && \explain{by \eqref{eq:exp:bar-w:lower-bound} and \eqref{eq:exp:w:upper-bound}} \\ 
     &= \frac{\eta}{2} \cdot (1-8\gamma^2) \cdot e^{|\bar w_t| - \gamma w_t  } -     |\bar w_t| -  2\gamma w_t \\
     &\ge  e^{|\bar w_t| - \gamma w_t  } -   |\bar w_t| -  2\gamma w_t  && \explain{since $\eta  \ge 4 \ge 2/(1-8\gamma^2)$}\\
     &\ge  e^{|\bar w_t| - \gamma w_t  } -      |\bar w_t| && \explain{since $w_t \ge 0$}\\
     &\ge e^{|\bar w_t| / 2} -      |\bar w_t| && \explain{since $\frac{|\bar w_t|}{2}  \ge \gamma w_t$}\\
     &\ge 0. && \explain{since $e^{t/2} \ge t$ for $t\in \Rbb$}
\end{align*}
\end{enumerate}
We have completed the induction.

Finally, we prove the claims in Theorem \ref{thm:exp} using the above results.

\ref{thm:item:exp:mm} is because of
\begin{align*}
    w_{t+1} \ge w_t + \eta\gamma \cdot e^{-\gamma w_t}
\end{align*}
from \eqref{eq:exp:w}.

We have already proved \ref{thm:item:exp:ns} by induction.

To show \ref{thm:item:exp:ns-sign}, without lose of generality, let us assume $\bar w_t \ge 0$, then 
\begin{align}
    \bar w_{t+1} 
    &\le \bar w_t - \eta e^{-\gamma w_t} \cdot | e^{-\bar w_t} - e^{\bar w_t} |  && \explain{by \eqref{eq:exp:bar-w}} \notag \\ 
    &\le  \bar w_t - \eta e^{-\gamma w_t} \cdot \frac{ e^{ | \bar w_t | } }{2} && \explain{since $|\bar w_t| \ge 1$ and that $e^t - e^{-t} \ge \frac{e^t}{2}$ for $t\ge 1$} \notag \\
    &\le \bar w_t - 2 e^{|\bar w_t| - \gamma w_t} && \explain{since $\eta \ge 4$} \notag \\ 
    &\le \bar w_t -2 e^{|\bar w_t| / 2}  && \explain{since $ \frac{|\bar w_t|}{2}\ge \gamma w_t$} \notag \\ 
    &\le -1. && \explain{since $2e^{t/2}\ge t+1$ for $t\in\Rbb$} \notag
\end{align}
We can repeat the above argument to show that $\bar w_{t+1}>0$ if $\bar w_t\le 0$.

To show \ref{thm:item:exp:risk}, we apply $w_t \to \infty$ and that $|\bar w_t| \ge 2\gamma w_t$:
\begin{align*}
    L(w_t, \bar w_t) 
    &= e^{-\gamma w_t} \cdot \big( e^{-\bar w_t} + e^{\bar w_t} \big) \\
    &\ge e^{-\gamma w_t} \cdot e^{|\bar w_t|}  \\
    % &\ge e^{|\bar w_t| / 2} \\ 
    &\ge e^{\gamma w_t} \to \infty.
\end{align*}

We have completed all the proofs.
\end{proof}

\section{Experimental Setups}\label{append:sec:experiment}

\paragraph{Neural network experiments.}
We randomly sample $1,000$ data from the MNIST\footnote{\url{http://yann.lecun.com/exdb/mnist/}} dataset as the training set and use the remaining data as the test set.
The feature vectors are normalized such that each feature is within $[-1,1]$.

We use a fully connected network with the following structure 
\begin{align*}
    784 \to \mathtt{ReLU} \to 500 \to \mathtt{ReLU}\to 500 \to \mathtt{ReLU} \to 500 \to \mathtt{ReLU} \to 10.
\end{align*}
The network is initialized with Kaiming initialization. 
We use the cross-entropy loss.

We consider constant-stepsize GD with two types of stepsizes, $\eta = 0.1$ and $\eta = 0.01$.

The results are presented in Figure \ref{fig:NN}.

\paragraph{Logistic regression experiments.}
We randomly sample $1,000$ data with labels ``0'' and ``8'' from the MNIST dataset as the training set.
The feature vectors are normalized such that each feature is within $[-1,1]$.

We use a linear model without bias.
So the number of parameters is $784$.
The model is initialized from zero.
We use the binary cross-entropy loss, i.e., the logistic loss.

We consider constant-stepsize GD with various stepsizes.

The results are presented in Figure \ref{fig:logistic-regression}.

\paragraph{Additional experiments.}
We conduct additional experiments on the margin maximization effect of large stepsize GD on a homogenous neural network. Results are presented in Figure \ref{fig:NN-nobias}.

\begin{figure}[t]
\centering
\begin{tabular}{ccc}
\includegraphics[width=0.4\linewidth]{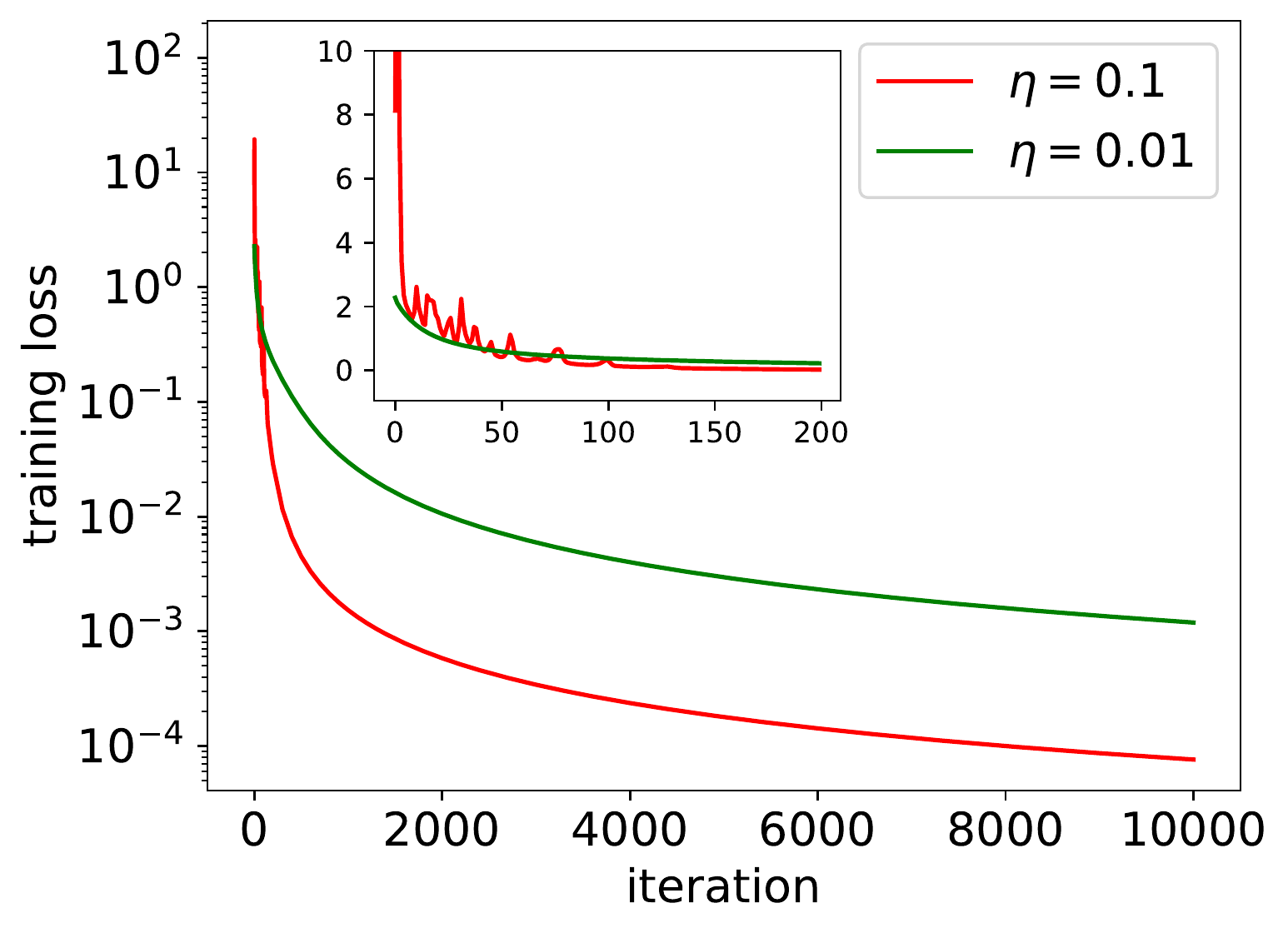} &
\hspace{8mm} &
\includegraphics[width=0.4\linewidth]{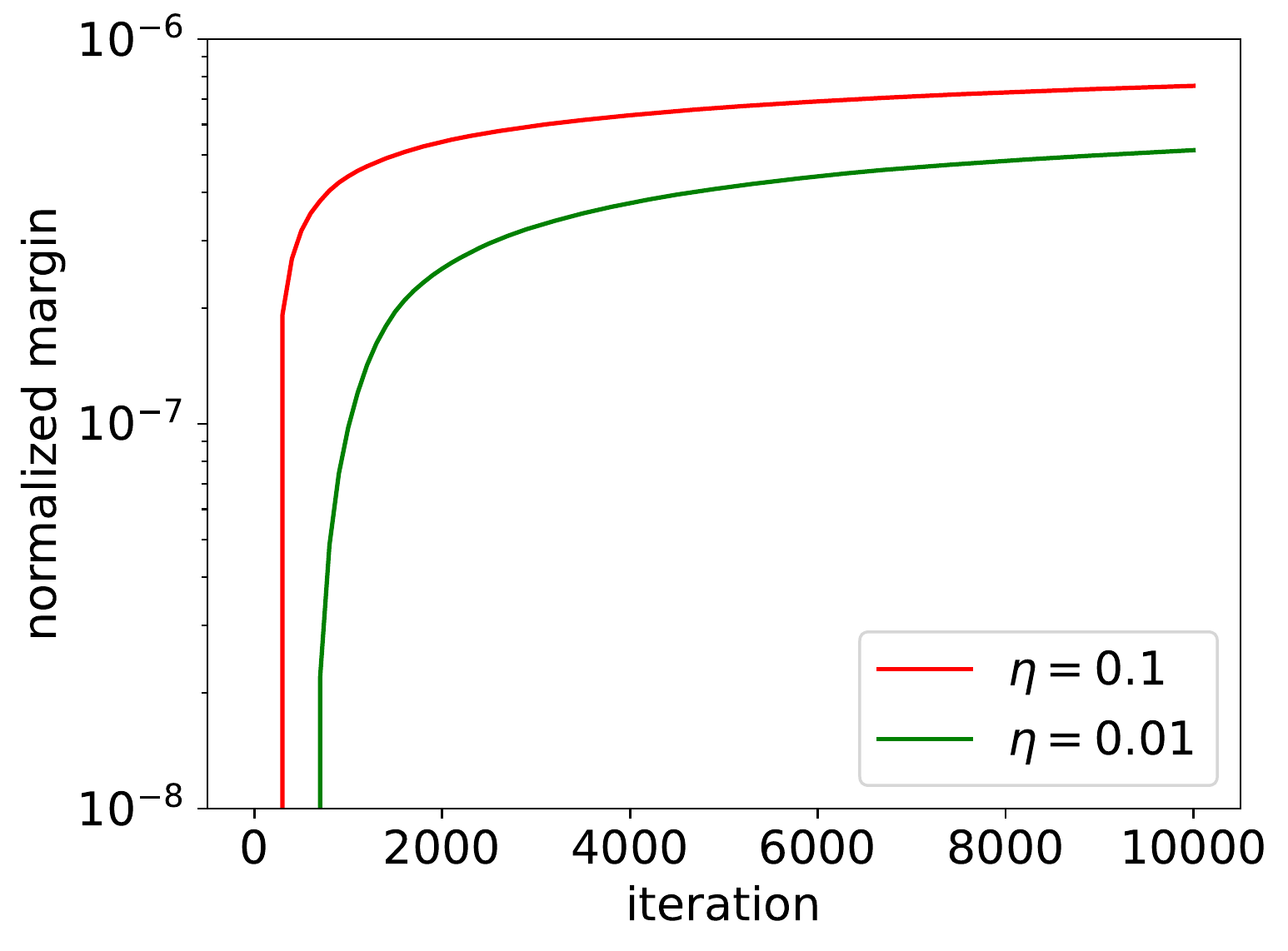} \\
{\small (a) Training loss } & \hspace{8mm} & {\small (b) Normalized margin for homogenous network} 
\end{tabular}

\caption{\small
The behaviors of GD for optimizing a homogenous neural network.
% We randomly sample $1,000$ data from the MNIST dataset and then use GD to train a $4$-layer fully connected network to fit those data. 
% We use the cross-entropy loss, i.e., the multi-class version of the logistic loss.
The experiment setting follows that of Figure 1 in the main paper, except that we disable all bias parameters so that the neural network is homogenous \citep{lyu2020gradient}.
The sub-figures (a) and (b) report the training loss and the margin along the GD trajectories, respectively. 
Here, we follow \citet{lyu2020gradient} and measure the normalized margin by 
$\min_{(\xB,y)} \big( f(\xB; \wB_t)[y] - \max_{y'\ne y} f(\xB; \wB_t)[y']\big) / \|\wB_t\|_2^{L}$,
where $f(\cdot; \cdot)$ refers to the homogenous neural network, $L=4$ is the depth of the homogenous neural network, $\wB_t$ is the weight at the $t$-th GD step, and $\min_{(\xB, y)}$ is taken with respect to all training data.
% The {red} curves correspond to GD with a large stepsize $\eta=0.1$, where the training losses oscillate locally and the sharpnesses can exceed $2/\eta = 20$. 
% The {green} curves correspond to GD with a small stepsize $\eta=0.01$, where the training losses decrease monotonically and the sharpnesses are always below $2/\eta = 200$.
% Moreover, (c) suggests that large-stepsize GD achieves better test accuracy than small-stepsize GD, consistent with larger-scale deep learning experiments \citep{goyal2017accurate}.
% More details of the experiments can be found in Appendix . 
}

\label{fig:NN-nobias}
\end{figure}

\end{document}